  \providecommand\BibTeX{{%
    \normalfont B\kern-0.5em{\scshape i\kern-0.25em b}\kern-0.8em\TeX}}}
\newcommand{\ie}{\textit{i.e.}}
\newcommand{\bbb}{\noindent\textbf}
\newcommand{\presec}{\vspace{-0.0cm}}
\newcommand{\postsec}{\vspace{-0.0cm}}
\newcommand{\algo}{HotSketch\xspace}
\newcommand{\name}{CAFE\xspace}
\begin{document}

\setcopyright{acmlicensed}
\acmJournal{PACMMOD}
\acmYear{2024} \acmVolume{2} \acmNumber{N1 (SIGMOD)} \acmArticle{51} \acmMonth{2}\acmDOI{10.1145/3639306}

\title{\name: Towards Compact, Adaptive, and Fast Embedding for Large-scale Recommendation Models}

\author{Hailin Zhang}
\authornote{Both authors contributed equally to this research.}
\email{z.hl@pku.edu.cn}
\orcid{0009-0000-4188-7742}
\affiliation{%
  \department{School of Computer Science \& Key Lab of High Confidence Software Technologies}
  \institution{Peking University}
  \state{Beijing}
  \country{China}
}

\author{Zirui Liu}
\authornotemark[1]
\email{zirui.liu@pku.edu.cn}
\orcid{0000-0001-9062-6565}
\affiliation{%
  \department{School of Computer Science \& Key Lab of High Confidence Software Technologies}
  \institution{Peking University}
  \state{Beijing}
  \country{China}
}

\author{Boxuan Chen}
\email{2100012923@stu.pku.edu.cn}
\orcid{0009-0006-3719-2685}
\affiliation{%
  \department{School of Electronics Engineering and Computer Science}
  \institution{Peking University}
  \state{Beijing}
  \country{China}
}

\author{Yikai Zhao}
\email{zyk@pku.edu.cn}
\orcid{0000-0003-2495-7774}
\affiliation{%
  \department{School of Computer Science \& Key Lab of High Confidence Software Technologies}
  \institution{Peking University}
  \state{Beijing}
  \country{China}
}

\author{Tong Zhao}
\email{zhaotong@pku.edu.cn}
\orcid{0009-0005-7201-2152}
\affiliation{%
  \department{School of Computer Science}
  \institution{Peking University}
  \state{Beijing}
  \country{China}
}

\author{Tong Yang}
\authornote{Bin Cui and Tong Yang are the corresponding authors.}
\email{yangtongemail@gmail.com}
\orcid{0000-0003-2402-5854}
\affiliation{%
  \department{School of Computer Science \& Key Lab of High Confidence Software Technologies}
  \institution{Peking University}
  \state{Beijing}
  \country{China}
}

\author{Bin Cui}
\authornotemark[2]
\email{bin.cui@pku.edu.cn}
\orcid{0000-0003-1681-4677}
\affiliation{%
  \department{School of Computer Science \& Key Lab of High Confidence Software Technologies \& Institute of Computational Social Science, Peking University (Qingdao)}
  \institution{Peking University}
  \state{Beijing}
  \country{China}
}

\renewcommand{\shortauthors}{Hailin Zhang, et al.}

\begin{abstract}

Recently, the growing memory demands of embedding tables in Deep Learning Recommendation Models (DLRMs) pose great challenges for model training and deployment.
Existing embedding compression solutions cannot simultaneously meet three key design requirements: memory efficiency, low latency, and adaptability to dynamic data distribution.
This paper presents \underline{\textbf{\name}}, a \underline{\textbf{C}}ompact, \underline{\textbf{A}}daptive, and \underline{\textbf{F}}ast \underline{\textbf{E}}mbedding compression framework that addresses the above requirements. 
The design philosophy of \name is to dynamically allocate more memory resources to important features (called hot features), and allocate less memory to unimportant ones.
In \name, we propose a fast and lightweight sketch data structure, named \algo, to capture feature importance and report hot features in real time.  
For each reported hot feature, we assign it a unique embedding. 
For the non-hot features, we allow multiple features to share one embedding by using hash embedding technique. 
Guided by our design philosophy, we further propose a multi-level hash embedding framework to optimize the embedding tables of non-hot features.  
We theoretically analyze the accuracy of \algo, and analyze the model convergence against deviation. 
Extensive experiments show that \name significantly outperforms existing embedding compression methods, yielding $3.92\%$ and $3.68\%$ superior testing AUC on Criteo Kaggle dataset and CriteoTB dataset at a compression ratio of $10000\times$.
The source codes of \name are available at GitHub~\cite{cafecode}.

\end{abstract}

\begin{CCSXML}
<ccs2012>
   <concept>
       <concept_id>10010147.10010178</concept_id>
       <concept_desc>Computing methodologies~Artificial intelligence</concept_desc>
       <concept_significance>500</concept_significance>
       </concept>
   <concept>
       <concept_id>10002951.10003260.10003272</concept_id>
       <concept_desc>Information systems~Online advertising</concept_desc>
       <concept_significance>500</concept_significance>
       </concept>
   <concept>
       <concept_id>10003752.10003809.10010055.10010057</concept_id>
       <concept_desc>Theory of computation~Sketching and sampling</concept_desc>
       <concept_significance>500</concept_significance>
       </concept>
 </ccs2012>
\end{CCSXML}

\ccsdesc[500]{Computing methodologies~Artificial intelligence}
\ccsdesc[500]{Information systems~Online advertising}
\ccsdesc[500]{Theory of computation~Sketching and sampling}

\keywords{Embedding, Deep Learning Recommendation Model, Sketch}

\received{July 2023}
\received[revised]{October 2023}
\received[accepted]{November 2023}

\maketitle

\presec
\section{introduction}
\postsec

\subsection{Background and Motivation}

In recent years, embedding techniques are widely applied in various fields in database community, such as cardinality estimation~\cite{DBLP:journals/pvldb/LiuD0Z21,DBLP:journals/pvldb/KwonJS22}, query optimization~\cite{DBLP:journals/pvldb/ZhaoCSM22,DBLP:journals/pvldb/ChenGCT23}, language understanding~\cite{DBLP:journals/pvldb/KimSHL20}, entity resolution~\cite{DBLP:journals/pvldb/EbraheemTJOT18,DBLP:journals/jcst/DingGLLZLPD23}, document retrieval~\cite{DBLP:journals/pvldb/HuangSLPKY20}, graph learning~\cite{DBLP:journals/pvldb/KochsiekG21,DBLP:journals/pvldb/YangSX0LB20}, and advertising recommendation~\cite{DBLP:conf/sigmod/MiaoSZZNY022}, to learn the semantic representations of categorical features.
Among these fields, Deep Learning Recommendation Models (DLRMs) are one of the most important applications of embedding techniques: they account for 35\% of Amazon's revenue in 2018~\cite{chui2018notes,underwood2019use,xie2018personalized}, and consume more than 50\% training and 80\% inference cycles at Meta’s data centers in 2020~\cite{DBLP:conf/hpca/GuptaWWNR0CHHJL20,DBLP:journals/corr/abs-2003-09518}.

As shown in Figure~\ref{fig:struc:dlrm}, a typical DLRM vectorizes categorical features into learnable embeddings, and then feeds these embeddings into downstream neural networks along with other numerical features~\cite{DBLP:conf/recsys/Cheng0HSCAACCIA16,DBLP:journals/corr/abs-1906-00091,DBLP:journals/chinaf/ShaoWCZW22,DBLP:journals/dase/YuanZYHX23,DBLP:journals/dase/MengHZWZ23,DBLP:journals/jcst/HanWN23}. 
Recently, with the exponential increase of categorical features in DLRM, the memory requirements of embedding tables have also skyrocketed, which creates formidable storage challenges in various applications~\cite{DBLP:conf/mlsys/YinAWL21,DBLP:conf/isca/MudigereHHJT0LO22}.
Therefore, it is highly desired to devise a framework that can effectively compress the embedding tables into limited storage space without compromising model accuracy. 
In this paper, we focus on compressing the embedding tables of extremely large-scale DLRMs.

\begin{figure}[htbp]
    \centering  
    \includegraphics[width=0.7\linewidth]{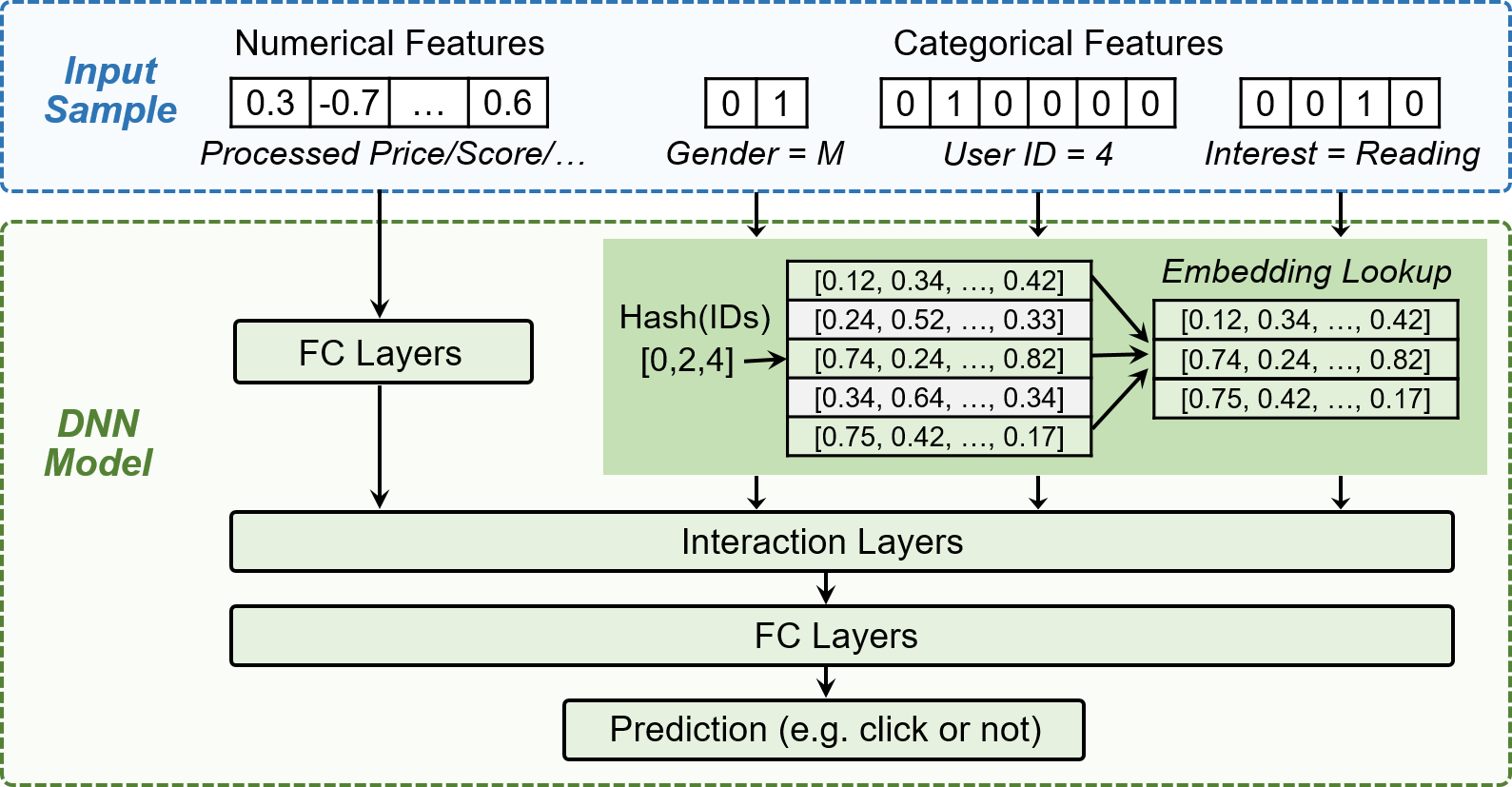}
    \caption{Overview of DLRM.}
    \label{fig:struc:dlrm}
\end{figure}

DLRM has two training paradigms: offline training and online training. 
(1) In offline training, the training data is collected in advance, and the model is deployed for use after the entire training process.
(2) In online training, the training data is generated in real time, and the model simultaneously updates parameters and serves requests.
This paper mainly focuses on the scenario of online training as it is more difficult.
Generally, compression methods for online training can be directly applied to offline training.
Embedding compression in online training has three important design requirements, which are as follows:

\begin{itemize}[leftmargin=*,parsep=0pt,itemsep=0pt,topsep=2pt,partopsep=2pt]
\item \textbf{Memory efficiency.} 
For extremely large-scale DLRMs, it is challenging to maintain model quality within memory constraints.
While distributed instances can help manage large-scale embedding tables, they come with a significant communication overhead~\cite{DBLP:journals/pvldb/MiaoZSNYTC21,DBLP:conf/recsys/WangWLLYLLAGDSL22}.
Furthermore, training and deployment of embedding tables often occur on edge or end devices with smaller storage capacities, making the memory issue even worse~\cite{DBLP:conf/mlsys/PansareKACSTV22}.
On the other hand, since model quality directly impacts profits, even a small change of 0.001 in DLRM's AUC (area under the ROC curve) is considerable~\cite{DBLP:conf/ijcai/GuoTYLH17}.
Existing compression methods often lead to severe model degradation when memory constraints are small~\cite{DBLP:conf/mlsys/ZhaoXJQDS020}, emphasizing the need for memory-efficient compression methods that maintain model quality.

\item \textbf{Low latency.}
Low latency is a vital requirement in practical applications, as latency is a key metric of service quality~\cite{DBLP:conf/isca/GuptaHSWRWL0W20}.
Embedding compression methods must be fast enough not to introduce significant latency.

\item \textbf{Adaptability to dynamic data distribution.}
In online training, the data distribution is not fixed as in offline training.
We calculate the KL divergence (an asymmetric measure of the distance between distributions) between the feature distributions on each day within three common public datasets, and plot the heatmaps in Figure~\ref{fig:intro:kl}.
In each heatmap, the block in row $i$, column $j$ shows the KL divergence between the distributions on day $i$ and day $j$.
There is a significant difference between the feature distributions, and generally the greater the number of days between, the greater the difference.
Existing advanced compression methods often exploit feature frequencies explicitly~\cite{DBLP:conf/recsys/ZhangLXKTGMDPIU20,DBLP:conf/isit/GinartNMYZ21} or capture feature importance implicitly~\cite{DBLP:conf/www/ZhaoLLTGSWGL21,kong2022autosrh}, which are inspired by the observation that feature popularity distributions are highly skewed, fitting zipfian~\cite{DBLP:conf/mlsys/YinAWL21} or powerlaw distributions~\cite{DBLP:conf/sigir/ZhangC15}.
However, most of them rely on fixed data distributions and cannot be applied to dynamic data distributions, demanding new adaptive compression method suitable for online training.

\end{itemize}

\begin{figure}[h]
\centering
\includegraphics[width=0.7\linewidth]{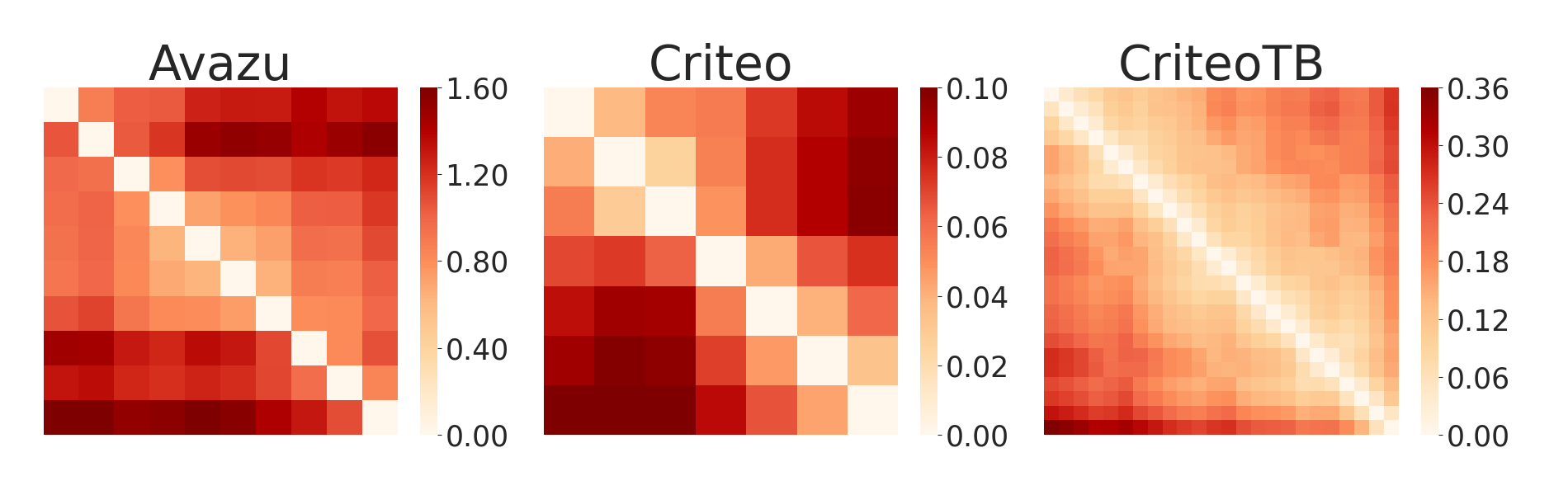}
\caption{KL divergence between distributions on each day.}
\label{fig:intro:kl}
\end{figure}

\subsection{Limitations of Prior Art}\label{sec:intro:limitation}

Existing embedding compression methods can be generally categorized into two types: row compression and column compression. 
As column compression primarily aims at enhancing model quality rather than compressing to a specific memory limit, our focus is on row compression, including hash-based and adaptive methods.

\bbb{Hash-based methods.}
These methods utilize simple hash functions to map features into embeddings with collisions~\cite{DBLP:conf/icml/WeinbergerDLSA09,DBLP:conf/kdd/ShiMNY20,DBLP:conf/cikm/YanWLLLXZ21}.
They restrict the number of embedding vectors to fit within the memory budget, causing different features to potentially share an embedding vector when a hash collision occurs.
Despite their simplicity and convenience, which have resulted in widespread industry use, these methods are not very memory-efficient.
Pre-determined hash functions distort the semantic information of features, often leading to a substantial decline in model accuracy.
For each feature, the gradients of other hash-collided features will be updated to the same embedding, resulting in deviations from the original convergence direction.
Integrating feature frequency information~\cite{DBLP:conf/recsys/ZhangLXKTGMDPIU20} can enhance hash-based methods' model quality in offline training but cannot be applied to online training.

\bbb{Adaptive methods.}
To accommodate online training, adaptive methods distinguish and track important features throughout the training process.
AdaEmbed~\cite{lai2023adaembed} logs the importance scores of all features, dynamically reallocates embedding vectors for critical features, and discards embeddings of less important features.
While it can adapt to data distribution, its compression ratio is constrained by the storage of importance scores, which increases linearly with the total number of features.
Thus, it cannot compress embedding tables to a small memory budget and still needs distributed training for large models, resulting in low memory efficiency.
It also needs to sample and check data to determine whether to migrate embeddings, which can increase overall latency.

In summary, existing methods fail to meet all three critical requirements for DLRM: memory efficiency, low latency, and adaptability.
In this paper, we aim to propose an embedding compression method that is memory-efficient, adaptive, and ensures low latency.

\subsection{Our Proposed Method}

We introduce \underline{\textbf{\name}}, a \underline{\textbf{C}}ompact, \underline{\textbf{A}}daptive, and \underline{\textbf{F}}ast \underline{\textbf{E}}mbedding compression method, which, to our knowledge, is the first to satisfy all three design requirements.
\textbf{(a) Memory efficiency:}
\name allocates unique embedding vectors to important features and shared embedding vectors to less important features, thereby preserving model quality.
A light-weight sketch, \algo, distinguishes these features, with its memory usage being linear to the number of important features, enabling high compression ratios.
Consequently, \name manages to maintain good model quality within tight memory constraints.
\textbf{(b) Low latency:}
\name entails only several hash processes and potentially one additional embedding lookup, incurring negligible time overhead beyond the standard embedding layers and thus maintaining low latency during serving.
\textbf{(c) Adaptability to dynamic data distribution:}
\name incorporates an embedding migration process that takes effect when a feature's importance score changes, ensuring that vital features are always identified even when data distribution changes during online training.
On Criteo dataset, compared to existing methods, \name improves the model AUC by $1.79\%$ and reduces the training loss by $2.31\%$ on average.

To achieve a high compression ratio without compromising model quality, we utilize a sketch structure to distinguish and record important features from a highly skewed Zipf distribution.
Sketches are a class of probabilistic data structures for processing large-scale data streams, and are naturally suitable for handling streamed features in online training.
Specifically, we extend SpaceSaving Sketch~\cite{DBLP:conf/sigmod/Ting18}, an advanced sketch algorithm with small error, to design \algo, a less memory-consuming structure to store important DLRM features with a theoretically guaranteed error bound.
Being a light-weight data structure, \algo incurs negligible time overhead, facilitating fast training and inference.
Since \algo's memory usage is only linear to the number of important features, \name can compress to any given memory constraints.
With \algo, we allocate unique embeddings to a handful of important features and shared embeddings to a vast majority of long-tail features, achieving memory efficiency.

To adapt \name to online training, where important features can change dynamically, we enable features to migrate between unique and shared embedding tables.
If a feature's importance score exceeds a relative threshold in \algo, it is deemed important and allocated a unique embedding.
Conversely, if a feature's importance score drops below a relative threshold, its unique embedding migrates to the shared embedding table.

To further optimize \name, we divide features into more groups by importance scores.
While the most critical features are still allocated unique embeddings, other features are assigned a varying number of hashed embedding vectors.
This multi-level design further improves the model AUC by $0.08\%$ on Criteo dataset.

\subsection{Main Contribution}

\begin{itemize}[leftmargin=*]
    \item We introduce \name, a compact, adaptive, and fast embedding compression method.
    \item We propose \algo, a light-weight sketch structure to discern and record features' importance scores.
    \item We provide a theoretical analysis of \algo's effectiveness, and elucidate how \name's design contributes to the convergence of compressed DLRMs.
    \item We evaluate \name on representative DLRM datasets, achieving $3.92\%$, $3.68\%$, $5.16\%$ higher testing AUC and $4.61\%$, $3.24\%$, $11.21\%$ lower training loss at $10000\times$ compression ratio compared to existing method.
\end{itemize}

\presec
\section{Preliminary}\label{sec:prelim}
\postsec

In this section, we elaborate on the architecture of DLRMs in Section~\ref{sec:2.1} and provide a formal definition of the embedding compression problem in Section~\ref{sec:2.2}.

\subsection{DLRM}\label{sec:2.1}
Figure~\ref{fig:struc:dlrm} illustrates the overall architecture of DLRM.
Each dataset of DLRM has several categorical feature fields and numerical feature fields. For example, in Figure~\ref{fig:struc:dlrm}, gender, user ID and interest are categorical fields, while price and score are numerical fields. Each field has a certain number or a certain range of possible values, called features.
Categorical and numerical features are transformed into representations using embedding vectors and fully-connected layers, respectively.
The representations are then fed into interaction layers and fully-connected layers for final predictions.
The prediction may be a category for classification tasks such as click-through-rate and conversion-rate prediction, or a score for regression tasks such as score prediction.
There are many variants of DLRM, such as WDL~\cite{DBLP:conf/recsys/Cheng0HSCAACCIA16}, DCN~\cite{DBLP:conf/kdd/WangFFW17}, DIN~\cite{DBLP:conf/kdd/ZhouZSFZMYJLG18}; while they all utilize the same embedding layer, they explore different forms of interaction layers and neural network layers to enhance model performance.

The size of a DLRM does not depend on the model structure, but on the number of unique categorical features in the dataset. The model parameters of DLRMs can be divided into two parts: the embedding table and the neural network. The former contains embeddings for all categorical features, \ie{}, one embedding per feature if uncompressed. The latter is a network that interacts these embeddings and outputs predictions. The number of parameters in the embedding table depends on the dataset: if there are $n$ unique categorical features in the dataset, and the dimension of embeddings is $d$, then the number of parameters is $n\times d$. In DLRMs, the size of the neural network part (just a few layers of matrix multiplication) is negligible compared to large embedding tables. Based on previous research works~\cite{jiang2019xdl,DBLP:conf/cikm/ZhaoZXQJ019,DBLP:conf/sc/XieRLYXWLAXS20,DBLP:conf/isca/MudigereHHJT0LO22,DBLP:conf/kdd/LianYZWHWSLLDLL22}, we consider DLRMs with more than 100 million parameters as large-scale, and DLRMs with more than 10 billion parameters as extremely large-scale.

In DLRMs, categorical features are viewed as one-hot vectors where only the $i$-th position is set to 1 and the rest are set to 0, facilitating the retrieval of the corresponding row vector from the embedding table.
Each input data, sampled from distribution $\mathcal{D}$, contains categorical features $x_{cat}$, numerical features $x_{num}$, and a label $y$.
We denote $E$ as the embedding tables and $f$ as the other neural network layers, then the process of minimizing the loss can be formulated as follows:

\begin{equation}
\min\limits_{E,f} \mathbb{E}_{(x_{cat}, x_{num},y)\sim \mathcal{D}}\mathcal{L}(y,f(E(x_{cat}),x_{num})).
\end{equation}

In each iteration, after the forward pass, an optimizer such as Adam~\cite{DBLP:journals/corr/KingmaB14} is applied to update the embedding table and other parameters. 
Frequently used notations in this paper are detailed in Table~\ref{tab:note}.

    

\begin{table}[!ht]
\footnotesize
    \caption{Symbols frequently used in this paper.}\label{tab:note}
    \begin{tabular}{|c|c|c|c|}
    \toprule[0.5pt]
    \textbf{Symbol} & \textbf{Meaning} & \textbf{Symbol} & \textbf{Meaning}\\
    \midrule[0.5pt]
    $\mathcal{D}$ & Distribution of input data & $\theta$ & Learnable parameters \\
    $\mathcal{D}_t$ & Shifting distribution of input data at time $t$ & $\alpha$ & Learning rate \\
    $n$ & Number of unique categorical features & $g$ & Standard gradient without compression \\
    $d$ & Embedding dimension & $\widetilde{g}$ & Gradient in compressed DLRM \\
    $x_{cat}$ & Categorical feature  & $\mathcal{L}$ & Loss function \\
    $x_{num}$ & Numerical feature  & $\mathcal{M}$ & Memory usage (of the embedding table) \\
    $y$ & Ground truth label & $M$ & Memory budget \\
    $\hat{y}$ & Prediction & $CR$  & Compression ratio \\
    $E$ & Embedding table & $w$ & Number of buckets in HotSketch \\ 
    $E^{*}$ & Compressed embedding table & $c$ & Number of slots in each bucket in HotSketch \\ 
    $f$ & Neural network & $k$ & Number of hot features \\
    \bottomrule[0.5pt]
    \end{tabular}
    
\end{table}

\subsection{Embedding Compression}\label{sec:2.2}

Embedding compression is mainly conducted within a memory constraint.
Denoting $M$ as the memory budget of the embedding table, $\mathcal{M}$ as the memory function mapping an embedding table to corresponding memory usage, and $E^*$ as the compressed embedding table, the optimization of DLRM within a memory constraint is formulated as follows:

\begin{align}
\begin{split}
	\min\limits_{E^*,f} &\mathbb{E}_{(x_{cat}, x_{num} ,y)\sim \mathcal{D}}\mathcal{L}(y,f(E^*(x_{cat}),x_{num})), \\ 
	\mathrm{s.t.} &\mathcal{M}(E^*) \le M.
\end{split}
\end{align}

The memory function excludes the memory usage of neural networks since it is fixed and negligible compared to the memory usage of embedding tables.

We define compression ratio as the multiple of the original memory to the compressed memory, to reflect the degree of compression:
$CR = \frac{\mathcal{M}(E)}{\mathcal{M}(E^*)}$.
In practical applications, using a compression ratio of $10\times$ can reduce the cost of distributed deployment, $100\times$ to $1000\times$ can allow for single-device deployment, and an extreme compression ratio of $10000\times$ can enable DLRMs on edge devices.

For online training, the fixed data distribution $\mathcal{D}$ in the above definition can be modified to a variable distribution $\mathcal{D}_t$, which is continuously evolving over time $t$.

\presec
\section{\name Design}\label{sec:cafe}
\postsec

\subsection{\name Overview}

\bbb{Rationale:}
We design \name, an efficient embedding framework that is simultaneously compact, adaptive, and fast. 
The key idea of \name is to dynamically distinguish important features (called hot features) from unimportant ones (called non-hot features), and allocate more resources to hot features. 
Specifically, we define the importance score of a feature using the L2-norm of its gradient, which is proven to have good theoretical properties in Section~\ref{subsubsec:conv} and also in previous works~\cite{DBLP:conf/icml/Gopal16,DBLP:conf/icml/KatharopoulosF18,lai2023adaembed}.
We further experimentally demonstrate the effectiveness of gradient norms in Section~\ref{subsec:confsens}.
We observe that in most training data, the feature importance follows a highly skewed distribution, where most features have small importance scores and only a small fraction of hot features are very important. 
For example, Figure~\ref{fig:sys:zipf} illustrates that the feature importance distributions in Criteo dataset and CriteoTB dataset are highly consistent with Zipf distributions of parameters 1.05 and 1.1, respectively. 
Therefore, if we can allocate more memory to the embeddings of hot features and less memory to those of non-hot features, it is possible to significantly improve the model quality under the same memory usage of embedding tables.

\begin{figure}[htbp]
\centering
\subfigure[Criteo.]{
\scalebox{0.27}{
\includegraphics[width=\linewidth]{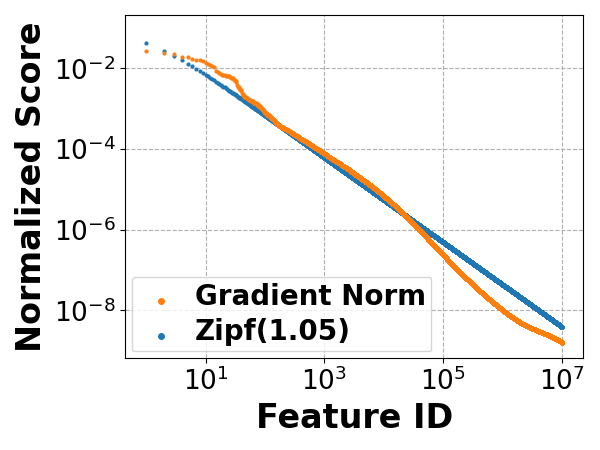}
}
\label{fig:sys:zipf:kaggle}
}
\subfigure[CriteoTB.]{
\scalebox{0.27}{
\includegraphics[width=\linewidth]{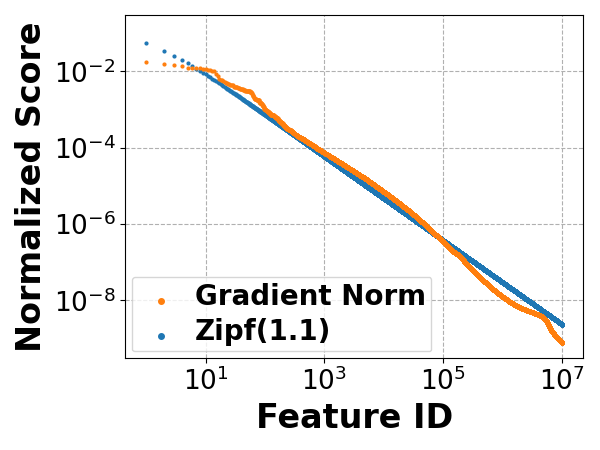}
}
\label{fig:sys:zipf:criteotb}
}
\caption{Comparing gradient norm and Zipf distributions.}
\label{fig:sys:zipf}
\end{figure}

As shown in Figure~\ref{fig:sys:overview}, in \name, we propose a novel sketch algorithm, called \algo, to capture feature importance and report top-$k$ hot features in real time (Section~\ref{sec:hotsketch}). 
In each training iteration, we first fetch data samples from the input training data, and query each feature from these samples in \algo.
For each feature, \algo reports its current importance score, and if its score exceeds a predefined threshold, we regard it as a hot feature. 
We then lookup the embeddings for hot and non-hot features respectively. 
In \name, for each hot feature, we allocate a unique embedding, and we store the pointer to this embedding in \algo. 
For the non-hot features, we use hash embedding tables where multiple features can share one embedding. 
We will discuss how to migrate embeddings between the tables of hot and non-hot features (Section~\ref{sec:migrate}). 
Guided by our design philosophy, we further propose a multi-level hash embedding framework to better embrace the skewed feature importance distribution (Section~\ref{sec:multilevel}).
Afterwards, we feed the embeddings into the downstream neural network for prediction and get the gradient norm for each feature. 
Finally, we update the importance of these features in \algo using their gradient norms.

\begin{figure}[htbp]
    \includegraphics[width=0.7\linewidth]{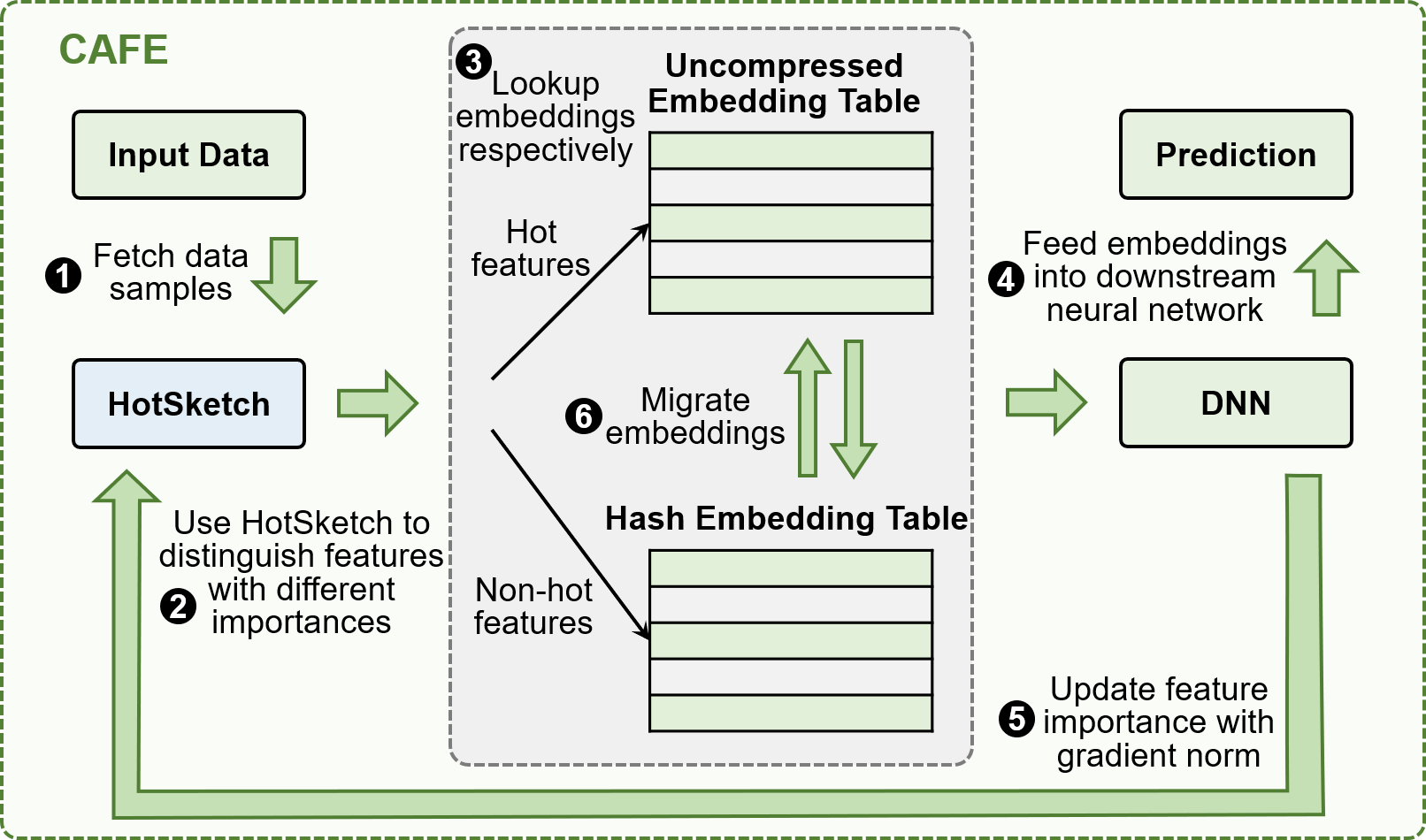}
    \caption{Overview of \name.}\label{fig:sys:overview}
\end{figure}

    \subsection{The \algo{} Algorithm}\label{sec:hotsketch}

\bbb{Rationale:}
We design HotSketch to capture hot features with high importance scores in a single pass, which is essentially a problem of finding top-$k$ frequent items (features) in streaming data. 
Currently, Space-Saving~\cite{spacesaving} is the most recognized algorithm for solving top-$k$ problem.
It maintains frequent items in sorted doubly linked list and uses a hash table to index this list. 
However, this hash table not only doubles the memory usage but also imposes time inefficiency due to numerous memory accesses caused by pointer operations.
Based on the idea of Space-Saving, we propose HotSketch, which removes the hash table while still maintaining the $O(1)$ time complexity. 
We theoretically prove that our HotSketch well inherits the theoretical results of Space-Saving (Section~\ref{subsubsec:theosketch}), and empirically validate the performance of HotSketch (Section~\ref{subsec:expalgo}).

\bbb{Data structure:}
As depicted in Figure~\ref{fig:algo}, HotSketch consists of an array of $w$ buckets $\mathcal{B}[1], \cdots, \mathcal{B}[w]$. 
We use a hash function $h(\cdot)$ to map each feature into one bucket. 
Each bucket contains $c$ slots. 
Each slot stores a feature ID and its importance score. 

\bbb{Insertion:}
For each incoming feature $f_i$ associated with an importance score $s_i$, we first calculate the hash function to locate a bucket $\mathcal{B}[h(f_i)]$, termed as the hashed bucket of $f_i$. 
Then, we check bucket $\mathcal{B}[h(f_i)]$ and encounter three possible scenarios:
(1) $f_i$ is recorded in $\mathcal{B}[h(f_i)]$.
We add $s_i$ to its importance score. 
(2) $f_i$ is not recorded in $\mathcal{B}[h(f_i)]$ and there exists an empty slot in $\mathcal{B}[h(f_i)]$. 
We insert $f_i$ into the empty slot by setting this slot to $(f_i, s_i)$. 
(3) $f_i$ is not recorded in $\mathcal{B}[h(f_i)]$ and $\mathcal{B}[h(f_i)]$ is full.
We locate the feature with the smallest score $(f_{min}, s_{min})$, replace $f_{min}$ with $f_i$, and add $s_i$ to $s_{min}$. 
In other words, we set the slot $(f_{min}, s_{min})$ to $(f_i, s_{min}+s_i)$. 
Figure~\ref{fig:algo} shows an example of insertion.

\bbb{Discussion:}
HotSketch has the following advantages: 
(1) HotSketch has fast insertion speed. It processes each feature in a one-pass manner and has an $O(1)$ time complexity. In addition, HotSketch avoids complicated pointer operations and has only one memory access. 
(2) HotSketch is memory-efficient. It does not store pointers, and there are no empty slots in HotSketch after a brief cold start.
(3) HotSketch is hardware-friendly and can be accelerated with multi-threading and SIMD, thereby achieving superior data parallelism.

\begin{figure}[htbp]
    \includegraphics[width=0.7\linewidth]{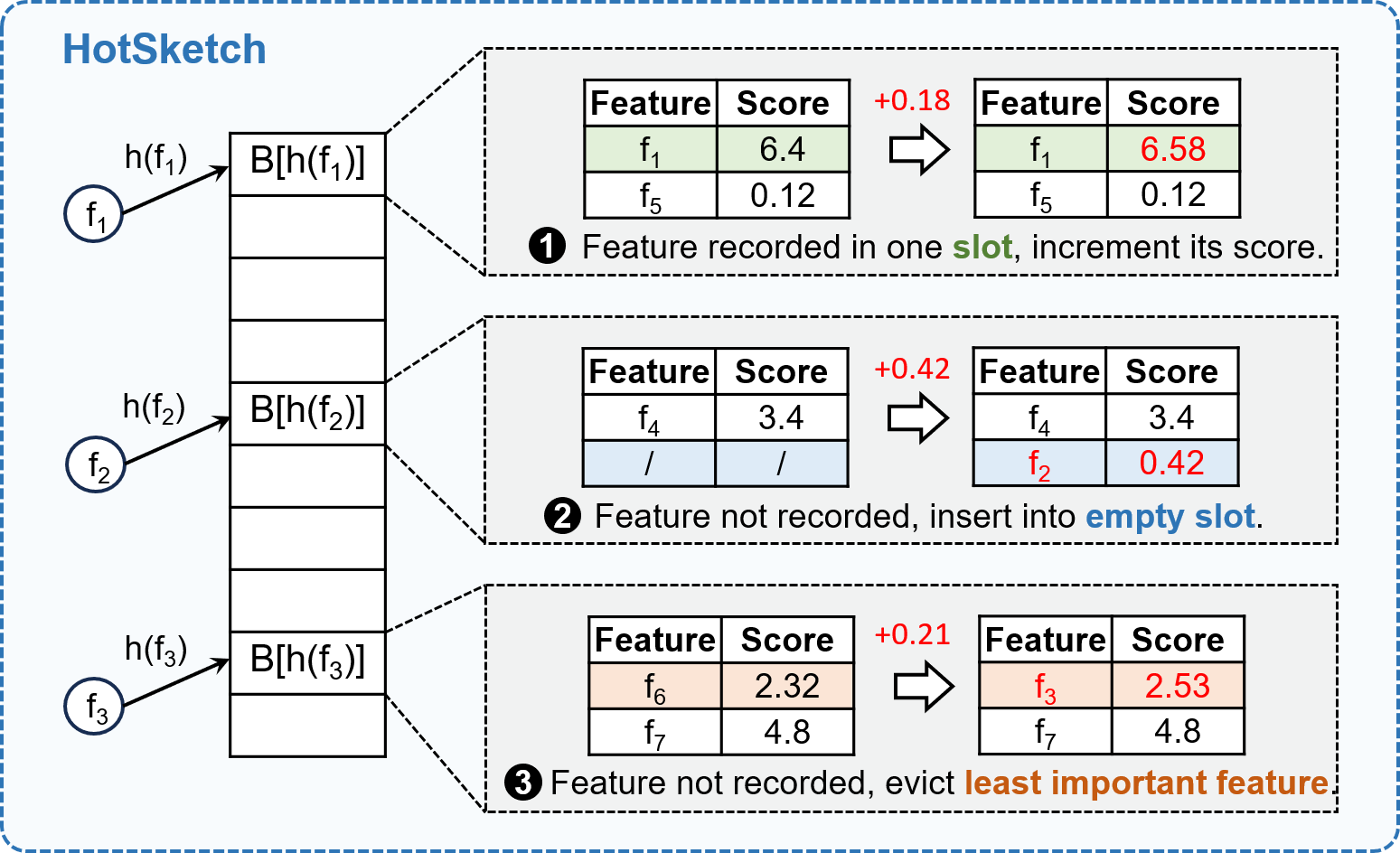}
    \caption{The \algo algorithm.}\label{fig:algo}
\end{figure}

    \subsection{Migration Strategy}\label{sec:migrate}

During the online training of DLRMs, the distribution of feature importance fluctuates with data distribution changes, meaning that the hot features are not constant throughout the training process.
Since \algo already records the feature importance during training, it can naturally support dynamic hot features by embeddings migration between uncompressed and hash embedding tables.

In \algo, we set a threshold to distinguish hot features, and the entry and exit of hot features occur throughout the training process.
Almost every feature that reaches \algo for the first time is considered a non-hot feature with a low importance score.
When a non-hot feature's importance score surpasses the threshold, it transitions into a hot feature, and its embedding migrates from the shared table to the uncompressed table as initialization, ensuring the feature's representation remains smooth throughout the training process.
Conversely, when a hot feature's importance score drops below the threshold (by eviction or decay), it becomes a non-hot feature, and its embedding is discarded from the uncompressed table.
Considering that the newly migrated non-hot feature is no longer important, its original exclusive embedding is simply ignored and the shared embedding is used instead.
The threshold is meticulously set, allowing \algo to always saturate with hot features and adapt to distribution changes.
If the importance scores alter rapidly, we decay the scores periodically.

During training, it's vital to maintain an appropriate migration frequency.
If the migration occurs too frequently, the learning process may not be smooth enough due to the replacement of embeddings, and the migration will generate substantial delay. 
Conversely, if the migration occurs too infrequently, \algo cannot capture changes in the distribution, leading to a decline in model quality.
By setting a suitable threshold in \algo, a moderate migration frequency can enable the model to adapt to changes in distribution without negatively impacting convergence and latency.

    \subsection{Multi-level Hash Embedding}\label{sec:multilevel}

In \algo, features are categorized into hot and non-hot features, with the latter outnumbering the former, considering typical compression ratios ranging from $10\times$ to $1000\times$.
A substantial number of non-hot features are treated identically in \algo, sharing a hash embedding table with the same rate of collisions.
Given that these features' importance scores also conform to a highly skewed Zipf distribution, it's logical to further segregate non-hot features based on their importance scores and assign different hash embedding tables to them.
Therefore, we integrate multi-level hash embedding, as shown in Figure~\ref{fig:multilevel}.

\begin{figure}[htbp]
    \includegraphics[width=0.7\linewidth]{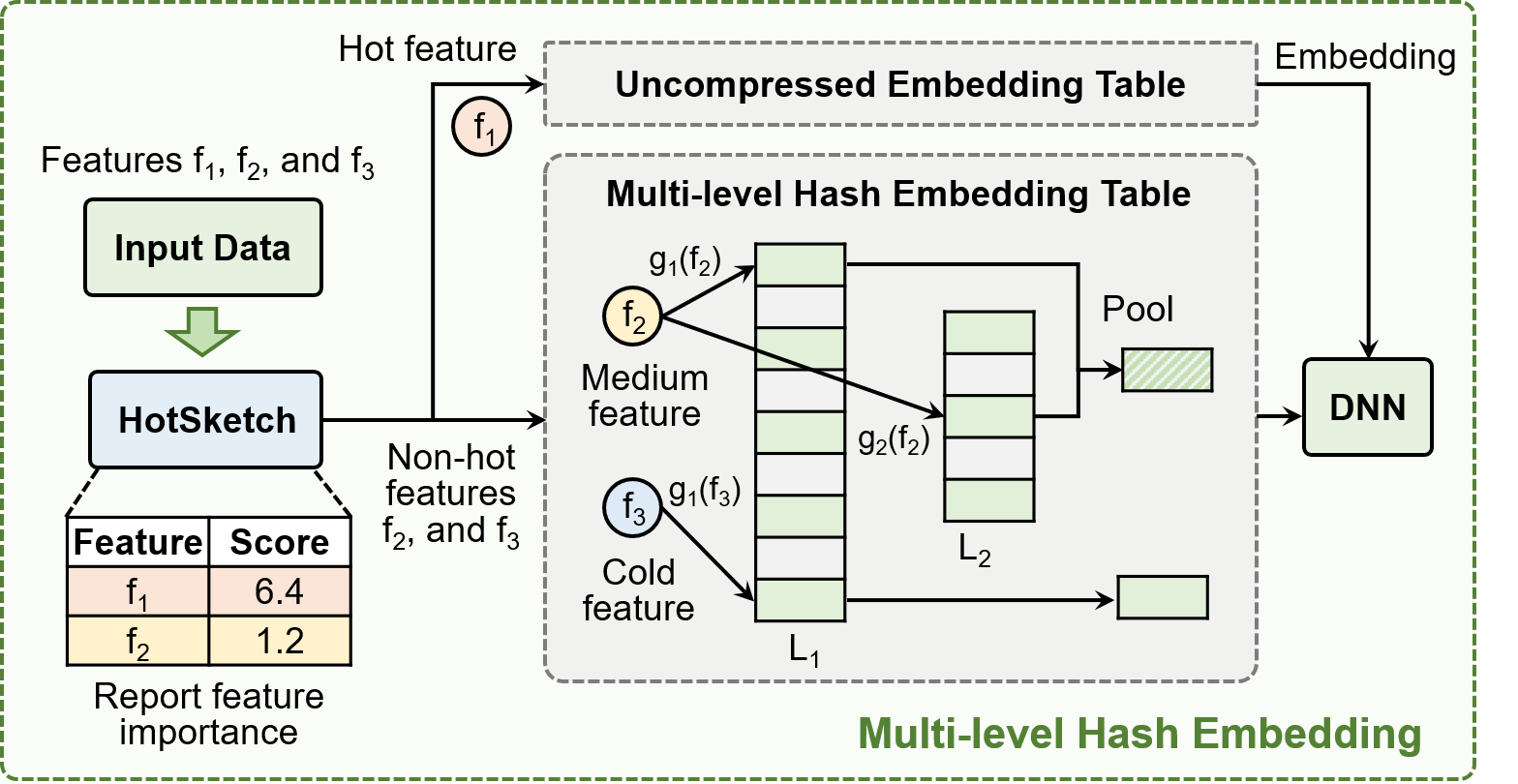}
    \caption{Overview of multi-level hash embedding.}\label{fig:multilevel}
\end{figure}

With multi-level hash embedding, we partition non-hot features into more refined categories of different importance levels, and assign to them different number of embeddings from multiple tables.
For simplicity, we focus on 2-level hash embedding, where non-hot features are divided into medium features and cold features.
We expand the functionality of \algo to identify medium features by estimating their importance scores.
Since medium features are more significant, they reference 2 embedding vectors from 2 distinct hash embedding tables, while cold features only look up a single embedding vector.
This design draws inspiration from prior hash-based methods~\cite{DBLP:conf/recsys/ZhangLXKTGMDPIU20} that also adopt multiple embedding vectors to enrich representations and boost model quality.

We illustrate the multi-level embedding process using an example in Figure~\ref{fig:multilevel}.
(1) Input features $f_1$, $f_2$, $f_3$ are fed into \algo.
Among them $f_1$ has a score larger than the hot threshold, $f_2$ has a score above the medium threshold, and $f_3$ has a score lower than the thresholds, so they are classified as hot, medium, and cold features respectively.
(2) Hot and cold features look up the embedding vectors as before.
(3) Medium feature $f_2$ looks up two embedding vectors from two hash embedding tables, and obtains the final embedding through a pooling process.
To ensure that the training process remains smooth, the hash function is combined with hash tables.
When a feature is migrated between middle and cold classes, it always retrieves the same embedding vector from the first embedding table.
For pooling operation, in practice, we find that simple summation of embeddings performs well, since a feature's embedding vectors are always updated in the same direction.

The design of the multi-level hash embedding is based on the observation that a unique embedding is a comprehensive representation with no information loss, whereas, for hash embeddings, the larger the number of embeddings involved, the fewer the collisions and the more information a feature can retain.
Through experiments detailed in Section~\ref{sec4.4}, we find that multi-level hash embedding performs better, with a reduction of $0.25\%$ in training loss and an increase of $0.08\%$ in testing AUC.

    \subsection{Theoretical Analysis}
\label{subsec:theo}

\subsubsection{\textbf{Accuracy of HotSketch}}\label{subsubsec:theosketch}
\
\newline
In this section, we theoretically analyze the performance of HotSketch in finding hot features. We derive the probability that a hot feature with a large importance score is recorded in HotSketch.

\begin{theorem}
Given a data stream with $n$ features, and suppose their importance score vector is $a=\{a_1, a_2,\cdots,a_n\}$, where $a_1\geqslant a_2\geqslant\cdots\geqslant a_n$.
    Suppose that our HotSketch has $w$ buckets, and each bucket contains $c$ cells.
    Without distribution assumption, for a hot feature with a total score larger than $\gamma\lVert a\rVert_1$, it can be held in HotSketch with probability at least:
    $
    \Pr>1-\frac{1-\gamma}{(c-1)\gamma w}.
    $
\end{theorem}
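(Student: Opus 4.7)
My plan is to reduce the claim to the classical per-bucket guarantee of Space-Saving and then control, by a one-line Markov argument, how much mass lands in the bucket of the given hot feature. Recall that \algo dispatches each incoming feature to a single bucket via $h(\cdot)$, and each bucket behaves as an independent Space-Saving summary with $c$ slots: whenever a new feature arrives to a full bucket, the minimum-score slot is overwritten and its stored score is added to the incoming one. The key fact I would invoke (and briefly re-derive) is the standard Space-Saving invariant in its weighted form: after processing a sub-stream of total mass $T$ into $c$ slots, the sum of the $c$ stored scores equals $T$, so the minimum stored score is at most $T/c$, and consequently any feature whose true cumulative score strictly exceeds $T/c$ must currently reside in the summary.

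Next I would fix a hot feature $f$ with $a_f > \gamma \lVert a \rVert_1$ and let $T_{h(f)}$ denote the total score hashed into its bucket. Writing $T_{h(f)} = a_f + X$, where $X := \sum_{j \neq f} a_j\, \mathbf{1}[h(j)=h(f)]$ is the contribution of competing features, the Space-Saving invariant says that $f$ is retained whenever $a_f > T_{h(f)}/c$, equivalently $X < (c-1)\,a_f$. So the theorem reduces to upper bounding $\Pr[X \ge (c-1)\,a_f]$.

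Since $h$ sends each competing feature uniformly at random to one of $w$ buckets, linearity of expectation gives $\mathbb{E}[X] = (\lVert a \rVert_1 - a_f)/w$. A direct application of Markov's inequality yields
\[
\Pr\bigl[X \ge (c-1)\,a_f\bigr] \;\le\; \frac{\lVert a \rVert_1 - a_f}{(c-1)\,a_f\,w}.
\]
The hot-feature hypothesis $a_f > \gamma \lVert a \rVert_1$ implies $\lVert a \rVert_1 - a_f < a_f(1-\gamma)/\gamma$, so the right-hand side is strictly less than $(1-\gamma)/((c-1)\gamma w)$. Taking the complement gives the stated probability. Note that this argument is genuinely distribution-free, which matches the ``without distribution assumption'' wording of the theorem, since Markov needs no tail information on $a$.

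The main obstacle, as I see it, is not any single inequality but rather cleanly articulating the per-bucket Space-Saving guarantee in the weighted setting and verifying that it applies to \algo despite the cosmetic differences (no external linked list or secondary hash table). Once that reduction is justified, the rest is a routine Markov bound on a sum of independent hash indicators. A minor subtlety is the assumed randomness of $h$: full independence is not actually required, since only the first moment of $X$ is used, so pairwise independence already suffices and the implementation's hash choice can be treated as a black box.
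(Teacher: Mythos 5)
Your proposal is correct and follows essentially the same route as the paper: both reduce the claim to the per-bucket Space-Saving invariant (a feature whose cumulative score exceeds the bucket mass divided by $c$ must be retained) and then apply Markov's inequality to the competing mass hashed into the same bucket, whose expectation is at most $(1-\gamma)\lVert a\rVert_1/w$. Your version is marginally more explicit — you re-derive the invariant, track the exact competing mass $\lVert a\rVert_1 - a_f$ before relaxing to $(1-\gamma)\lVert a\rVert_1$, and note that pairwise independence of $h$ suffices — but these are refinements of the identical argument rather than a different proof.
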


\begin{proof}
    The expected score sum of the other features $\hat{f}$ entering the same bucket is:
    $
    E\left[\hat{f}\right]=\frac{(1-\gamma)\lVert a\rVert}{w}.
    $
    
    By following the properties of SpaceSaving algorithm, if the score $\hat{f}$ of the other features entering the bucket is no more than $(c-1)\gamma\lVert a\rVert_1$, then the feature must be held in the bucket.
    Using Markov inequality, we have
    $
    \Pr\left(\hat{f}>(c-1)\gamma\lVert a\rVert_1\right)\leqslant\frac{1-\gamma}{(c-1)\gamma w},
    $
    which means that
    $
    \Pr>1-\frac{1-\gamma}{(c-1)\gamma w}.
    $
\end{proof}

\begin{lemma}
    Given a data stream with score vector $a=\{a_1, a_2,\cdots,a_n\}$, where $a_1\geqslant a_2\geqslant\cdots\geqslant a_n$.
    Suppose that $a$ follows a Zipfian distribution with parameter $z$, meaning that $a_i=\frac{a_1}{i^z}$.
    Suppose our HotSketch has $w$ buckets, and each bucket contains $c$ cells.
    Suppose we would like to check whether the $k'$ hottest features can be hashed into the buckets.
    Then the mathematical expectation of the score sum of the non-hot features entering each bucket is:
    $
    E[\hat{f}]\leqslant\frac{\lVert a\rVert_1\cdot k'^{1-z}}{w}
    $
    with probability at least $3^{-\frac{k'}{w}}$ for $z>1$ and $n\to+\infty$.
\end{lemma}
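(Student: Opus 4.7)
The plan is to decompose the statement into a deterministic tail estimate (yielding the expectation bound) and a hashing argument (yielding the probability factor $3^{-k'/w}$).

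For the expectation bound I would invoke linearity of expectation. Writing $\hat{f}$ as the total score contributed by features with index $i \geq k'+1$ whose hash equals the chosen bucket, a uniform hash gives
\begin{equation}
E[\hat{f}] = \frac{1}{w}\sum_{i=k'+1}^{n} a_i.
\end{equation}
Under $a_i = a_1/i^z$ with $z > 1$ and $n \to \infty$, a Cauchy integral comparison bounds $\sum_{i > k'} i^{-z} \leq \int_{k'}^{\infty} t^{-z}\,dt = k'^{1-z}/(z-1)$. Since $\|a\|_1 = a_1 \zeta(z)$ and $\zeta(z) > 1/(z-1)$ for every $z > 1$ (itself a one-line integral comparison, since $\zeta(z)$ overestimates $\int_1^{\infty} t^{-z}\,dt$), I get $\sum_{i > k'} a_i \leq \|a\|_1 \cdot k'^{1-z}$, and dividing by $w$ delivers the claimed bound.

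For the probability factor, I would read $3^{-k'/w}$ as a lower bound on $\Pr(A)$, where $A$ is the event that none of the $k'$ hottest features hashes to the bucket in question---precisely the regime in which the bucket is populated purely by non-hot features and the preceding expectation is the appropriate quantity. Under uniform hashing $\Pr(A) = (1 - 1/w)^{k'}$, so the claim reduces to the elementary inequality $(1 - 1/w)^w \geq 1/3$. This holds for all sufficiently large $w$, since the left side increases monotonically to $1/e \approx 0.368 > 1/3$, consistent with the stated $n \to \infty$ regime.

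The main obstacle is interpretational rather than technical: pinning down exactly what $\hat{f}$ denotes here (the non-hot contribution rather than the full bucket load) and matching the slightly unusual base $3$, rather than $e$, in the probability factor. Once the regime ($z > 1$, $n$ large, and $w$ large enough that $(1 - 1/w)^w \geq 1/3$) is fixed, each step collapses to a routine one-line estimate.
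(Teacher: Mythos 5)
Your proposal is correct and follows essentially the same route as the paper: the unconditional expectation of the non-hot score per bucket is bounded via the integral comparison $\sum_{i>k'} i^{-z} \leq k'^{1-z}/(z-1)$ together with $\sum_{i\geq 1} i^{-z} \geq 1/(z-1)$ (your $\zeta(z)$ phrasing is just the paper's ratio-of-sums argument written out), and the probability factor comes from $(1-1/w)^{k'} = ((1-1/w)^w)^{k'/w} > 3^{-k'/w}$, where the paper pins down the threshold as $w \geq 6$ rather than your "sufficiently large $w$". Your reading of the lemma's slightly awkward phrasing (the $3^{-k'/w}$ as the probability that no hot feature collides with the bucket, and $\hat{f}$ as the non-hot contribution only) matches the paper's intent exactly.
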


\begin{proof}
    The probability that the $k'$ hottest features are not hashed into this bucket is:
    $
    \left(1-\frac{1}{w}\right)^{k'}
    =
    \left(\left(1-\frac{1}{w}\right)^w\right)^{\frac{k'}{w}}
    >
    3^{-\frac{k'}{w}}
    $.
    
    When $w\geqslant 6$, $\left(1-\frac{1}{w}\right)^w$ increases monotonically with $w$.
    The expected score sum of the non-hot features entering this bucket is:
    \begin{align*}
    E[\hat{f}]
    = &
    \frac{\sum_{i=k'+1}^{n} a_i}{w}
    =
    \frac{\sum_{i=k'+1}^{n} \frac{a_1}{i^z}}{w}
    =
    \frac{\lVert a\rVert_1}{w} \cdot \left(\sum_{i=k'+1}^{n} i^{-z}\right) \cdot \frac{1}{\sum_{i=1}^{n} i^{-z}}
    \\
    \leqslant&
    \frac{\lVert a\rVert_1}{w} \cdot \left(\int_{k'}^{+\infty} x^{-z}dx\right) \cdot \left(\int_{1}^{+\infty} x^{-z}dx\right)^{-1}
    \\
    \leqslant&
    \frac{\lVert a\rVert_1}{w} \cdot \frac{k'^{1-z}}{z-1}\cdot (z-1)
    =
    \frac{\lVert a\rVert_1 k'^{1-z}}{w}
    \end{align*}
    for $z>1$ and $n\to+\infty$.
\end{proof}

\begin{theorem}
    Given a data stream with score vector $a=\{a_1, a_2,\cdots,a_n\}$, where $a_1\geqslant a_2\geqslant\cdots\geqslant a_n$.
    Suppose that $a$ follows a Zipfian distribution with parameter $z$.
    Suppose that our HotSketch has $w$ buckets, and each bucket contains $c$ cells.
    Let $k'=\eta w$.
    Then for a hot feature with a score larger than $\gamma\lVert a\rVert_1$, it can be held in the sketch with probability at least:
    $
    \Pr>\mathop{sup}\limits_{\eta>0}\left(3^{-\eta}\cdot\left(1-\frac{\eta}{(c-1)\gamma (\eta w)^z}\right)\right)
    $
    for $z>1$ and $n\to+\infty$.
    \label{theo:sketch:p}
\end{theorem}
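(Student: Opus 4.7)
The plan is to view Theorem~\ref{theo:sketch:p} as the natural composition of the preceding Theorem and Lemma, specialized to the Zipfian case. I would decompose the event ``the hot feature stays in its bucket'' into two sub-events that can be bounded in isolation and then combined: (i) the $k' = \eta w$ hottest features all hash to buckets other than the hot feature's bucket, and (ii) the total score of the remaining (non-hot) features colliding with that bucket is no larger than $(c-1)\gamma \lVert a\rVert_1$, which by the SpaceSaving invariant already used in the first Theorem forces the hot feature to survive.

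For (i) I would reuse the elementary counting from the Lemma:
\[
\left(1-\tfrac{1}{w}\right)^{k'} = \left(\left(1-\tfrac{1}{w}\right)^{w}\right)^{k'/w} \geq 3^{-k'/w} = 3^{-\eta},
\]
using the monotonicity of $(1-1/w)^w$ that the Lemma already invokes. For (ii) I would plug the Zipfian expectation bound of the Lemma, $\mathbb{E}[\hat f] \leq \lVert a\rVert_1 (\eta w)^{1-z}/w$, into Markov's inequality to obtain
\[
\Pr\!\left(\hat f > (c-1)\gamma\lVert a\rVert_1\right) \leq \frac{(\eta w)^{1-z}}{(c-1)\gamma w} = \frac{\eta}{(c-1)\gamma(\eta w)^z}.
\]
Since sub-events (i) and (ii) depend on disjoint subsets of the features' hash locations, they can be treated as independent, and the multiplicative lower bound $3^{-\eta}\bigl(1 - \eta/((c-1)\gamma(\eta w)^z)\bigr)$ follows. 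Taking the supremum over the free parameter $\eta > 0$ then yields the stated probability.

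The step I would be most careful about is the composition of the two bounds: the Lemma supplies a marginal expectation over the non-hot features, while (i) concerns only the hashes of the top-$k'$ features, so I would spell out that the hash locations of disjoint feature sets are independent under the standard assumption of a uniform hash $h$. The final supremum over $\eta$ is a routine one-dimensional balance between the exponentially decaying $3^{-\eta}$ factor and the polynomially shrinking Markov slack, and the statement leaves this optimization implicit.
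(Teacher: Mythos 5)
Your proposal is correct and follows essentially the same route as the paper: the paper also conditions on the event $\mathcal{C}$ that the top-$k'$ features avoid the bucket (bounded below by $3^{-\eta}$ exactly as you do), applies Markov's inequality to the non-hot mass using the Lemma's expectation bound, and combines the two via $\Pr(\hat f > (c-1)\gamma\lVert a\rVert_1) \le \Pr(\hat f > \cdot,\,\mathcal{C}) + \Pr(\neg\mathcal{C})$, which is algebraically identical to your multiplicative lower bound $\Pr(\mathcal{C})\cdot\Pr(\hat f \le \cdot \mid \mathcal{C})$. The independence point you flag is handled implicitly in the paper by stating the Markov bound conditionally on $\mathcal{C}$; your explicit justification via disjoint hash locations is a harmless (and slightly cleaner) way of saying the same thing.
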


\begin{proof}
    The condition $\mathcal{C}$ that none of the $k'$ hottest features collide with this item holds with probability at least $3^{-\frac{k'}{w}}$.

    By following the properties of SpaceSaving algorithm, if the scores $\hat{f}$ of the other features entering the bucket is no more than $(c-1)\gamma\lVert a\rVert_1$, then the feature must be held in the bucket.
    
    Using Markov inequality, we have
    $$
    \Pr{\left(\hat{f}> (c-1)\gamma\lVert a\rVert_1~|~\mathcal{C}\right)}\leqslant\frac{\frac{\lVert a\rVert_1\cdot k'^{1-z}}{w}}{(c-1)\gamma\lVert a\rVert_1}
    =
    \frac{k'^{1-z}}{(c-1)\gamma w}.
    $$
    Then we have
    \begin{align*}
    \Pr{\left(\hat{f}> (c-1)\gamma\lVert a\rVert_1\right)}
    \leqslant &
    \Pr{\left(\hat{f}> (c-1)\gamma\lVert a\rVert_1,\mathcal{C}\right)}+\Pr{(\neg\mathcal{C})}
    \\
    \leqslant &
    3^{-\frac{k'}{w}}\cdot\left(\frac{k'^{1-z}}{(c-1)\gamma w}-1\right)+1.
    \end{align*}
    Let $k'=\eta w$, we have
    $$
    \Pr{\left(\hat{f}> (c-1)\gamma\lVert a\rVert_1\right)}
    \leqslant   
    3^{-\eta}\cdot\left(\frac{1}{\eta^{z-1}(c-1)\gamma w^z}-1\right)+1.
    $$
    And we have the probability that this feature must be held greater than
    $$
    \Pr>\mathop{sup}\limits_{\eta>0}\left(3^{-\eta}\cdot\left(1-\frac{\eta}{(c-1)\gamma (\eta w)^z}\right)\right).
    $$
    
\end{proof}

\begin{corollary}
    The larger the parameter $c$, $w$, $z$, and $\gamma$, the larger the probability that the feature with score larger than $\gamma\lVert a\rVert_1$ be held in the sketch.
    The larger $c$ and $w$ means the larger memory used by sketch, the larger $z$ means the more skew the data stream is, and the larger $\gamma$ means the hotter the feature is.
\end{corollary}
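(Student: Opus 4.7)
The plan is to read off the corollary directly from the explicit lower bound in Theorem~\ref{theo:sketch:p}: set
$$g(\eta;c,w,z,\gamma) \;:=\; 3^{-\eta}\left(1-\frac{\eta}{(c-1)\gamma\,(\eta w)^z}\right),$$
so that the theorem asserts $\Pr > \sup_{\eta>0} g(\eta;c,w,z,\gamma)$. Since the supremum of a family of functions that is pointwise non-decreasing in a parameter is itself non-decreasing in that parameter, it suffices to show that for every fixed $\eta>0$, the map $(c,w,z,\gamma)\mapsto g(\eta;c,w,z,\gamma)$ is monotonically non-decreasing in each coordinate separately. The factor $3^{-\eta}$ is a positive constant in the four parameters, so the whole task reduces to analyzing the sign of the partial derivatives of the ``correction term''
$$T(c,w,z,\gamma;\eta) \;:=\; \frac{\eta}{(c-1)\gamma\,(\eta w)^z},$$
which enters $g$ with a negative sign inside the parenthesis. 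It suffices to verify that $T$ is non-increasing in each of $c,w,z,\gamma$.

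First, I would handle $c$ and $\gamma$: they appear only as a product in the denominator, so $\partial T/\partial c < 0$ and $\partial T/\partial \gamma < 0$ trivially on the admissible ranges ($c\ge 2$, $\gamma>0$). Next, for $w$, note that $(\eta w)^z$ is strictly increasing in $w$ whenever $z>0$ and $w>0$, so $T$ is strictly decreasing in $w$. For $z$, write $(\eta w)^z = \exp(z\ln(\eta w))$; differentiating gives $\partial T/\partial z = -T\,\ln(\eta w)$, which is negative precisely when $\eta w > 1$. This is the only case that needs a brief comment: since Theorem~\ref{theo:sketch:p} is a vacuous lower bound unless the supremum is attained at some $\eta$ with $\eta w$ large enough to make the parenthesis positive (in particular $\eta w>1$), the operative regime of the bound is exactly the one in which the $z$-monotonicity holds.

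Combining these four pointwise monotonicities, for any fixed $\eta>0$ the function $g(\eta;\cdot)$ is non-decreasing in each of $c,w,z,\gamma$, hence so is $\sup_{\eta>0}g(\eta;\cdot)$, which is the lower bound on the capture probability. This is exactly the statement of the corollary. I expect the only subtlety to be the $z$ case, since the sign of $\partial T/\partial z$ hinges on whether $\ln(\eta w)$ is positive; I would address it with the one-line remark above, pointing out that the bound is informative only in the regime where monotonicity in $z$ is automatic, after which the semantic interpretation (``larger $c,w$ = more memory; larger $z$ = more skew; larger $\gamma$ = hotter feature'') requires no further calculation and follows immediately from the definitions of the parameters.
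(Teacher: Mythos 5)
Your proposal is correct and follows essentially the same route as the paper, whose entire proof is the one-line observation that the correction term $\frac{\eta}{(c-1)\gamma(\eta w)^z}$ is monotonically decreasing in each of $c$, $w$, $z$, and $\gamma$. You are in fact slightly more careful than the paper: you make explicit the step of passing monotonicity through the supremum over $\eta$, and you flag that monotonicity in $z$ requires $\ln(\eta w)>0$, a caveat the paper silently omits.
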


\begin{proof}
    The following formula  monotonically decreases with parameter $c$, $w$, $z$, and $\gamma$:
    $
    \frac{\eta}{(c-1)\gamma (\eta w)^z}.
    $
\end{proof}

\begin{corollary}
    To let the feature with score larger than $\gamma\lVert a\rVert_1$ be held with maximum probability in a fixed memory budget, the more skew the data stream is, the less cells per bucket should be used.
    Specifically, we recommend to use $c=1+\frac{1}{z-1}$.\label{corollary:optimal}
\end{corollary}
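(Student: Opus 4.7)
The plan is to treat Corollary~\ref{corollary:optimal} as a constrained optimization problem: maximize the probability lower bound from Theorem~\ref{theo:sketch:p} over the choice of $c$, subject to a fixed total memory budget of the form $cw = M$. I would first simplify the correction term inside the bound by rewriting
$$\frac{\eta}{(c-1)\gamma(\eta w)^z} \;=\; \frac{\eta^{1-z}}{(c-1)\gamma w^z}.$$
Since the hypothesis $z>1$ forces $\eta^{1-z}>0$, maximizing $1 - \eta^{1-z}/[(c-1)\gamma w^z]$ for each fixed $\eta$ is equivalent to maximizing $(c-1)w^z$. Crucially, this criterion does not involve $\eta$, so the maximization over $c$ commutes with $\sup_{\eta>0}$, and the entire task collapses to maximizing $(c-1)w^z$ under $cw=M$.

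Next, I would substitute $w=M/c$ to reduce to the univariate objective
$$g(c) \;=\; (c-1)\left(\frac{M}{c}\right)^z \;=\; \frac{M^z(c-1)}{c^z}, \qquad c\in(1,\infty).$$
A direct derivative computation gives
$$g'(c) \;=\; \frac{M^z\bigl(z - (z-1)c\bigr)}{c^{z+1}},$$
whose unique zero in $(1,\infty)$ is $c^\star = z/(z-1) = 1 + 1/(z-1)$. Checking signs shows $g'(c)>0$ on $(1,c^\star)$ and $g'(c)<0$ on $(c^\star,\infty)$, while $g(c)\to 0$ at both endpoints, so $c^\star$ is the global maximizer. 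Plugging $c^\star$ back into the bound of Theorem~\ref{theo:sketch:p} yields the recommended recipe.

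For the qualitative part of the statement, I would simply observe that $c^\star(z) = 1 + 1/(z-1)$ is strictly decreasing in $z$ on $(1,\infty)$, so a more skewed workload (larger $z$) calls for fewer cells per bucket, exactly as the corollary claims.

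The main obstacle I anticipate is rigorously justifying the commutation of the $c$-maximization with $\sup_{\eta>0}$. It works out cleanly only because the $c$- and $\eta$-dependencies enter the additive correction through the separable product $\eta^{1-z}/[(c-1)\gamma w^z]$, which makes the argmax in $c$ independent of $\eta$; if there were any coupling, one would need a minimax-style argument instead. A secondary caveat worth flagging (but not a mathematical difficulty) is that $c$ must be a positive integer in the implementation, so the continuous optimum $1+1/(z-1)$ should be rounded to its nearest integer when deployed.
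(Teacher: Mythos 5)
Your proposal is correct and follows essentially the same route as the paper: both reduce the problem to maximizing $(c-1)w^z$ under the budget $cw=M$ (after observing that the $\eta$-dependence factors out of the correction term) and then solve the resulting univariate calculus problem, the only cosmetic difference being that you differentiate with respect to $c$ while the paper differentiates with respect to $w$. Your added checks (global optimality via sign analysis and the monotonicity of $c^\star(z)$ for the qualitative claim) are sound but not substantively different.
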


\begin{proof}
    With a fixed memory budget $M=cw$, to minimize $\frac{\eta}{(c-1)\gamma (\eta w)^z}$, we should maximize
    $
    (c-1)w^z=\left(\frac{M}{w}-1\right)w^z.
    $
    
    As it has a derivative function 
    $$
    \left[\left(\frac{M}{w}-1\right)w^z\right]'=((z-1)M-zw)w^{z-2},
    $$
    the optimal $w$ should be $\frac{z-1}{z}M$, and the optimal $c^*$ should be 
    $$
    c^*=\frac{z}{z-1}=1+\frac{1}{z-1}.
    $$
\end{proof}

\bbb{Discussion:}
From Corollary~\ref{corollary:optimal}, we can see that under fixed memory usage ($M=cw$), the optimal $c$ is affected by data distribution. 
Under non-skewed data distribution (small $z$), we should use larger $c$ and smaller $w$ to better approximate the results of basic Space-Saving.  
Under highly skewed data distribution (large $z$), we should use smaller $c$ and larger $w$ to lower the impact of hash collisions between hot features.  
This might be because under highly skewed data, using small $c$ can already guarantee us to find hot features with high probability. 
In this scenario, the performance of HotSketch is mainly affected by hash collisions between hot features. 
We surprisingly find that this corollary is consistent with our experimental results in Figure~\ref{fig:exp:sketch:recall}.

\begin{figure}[htbp]
    \includegraphics[width=0.5\linewidth]{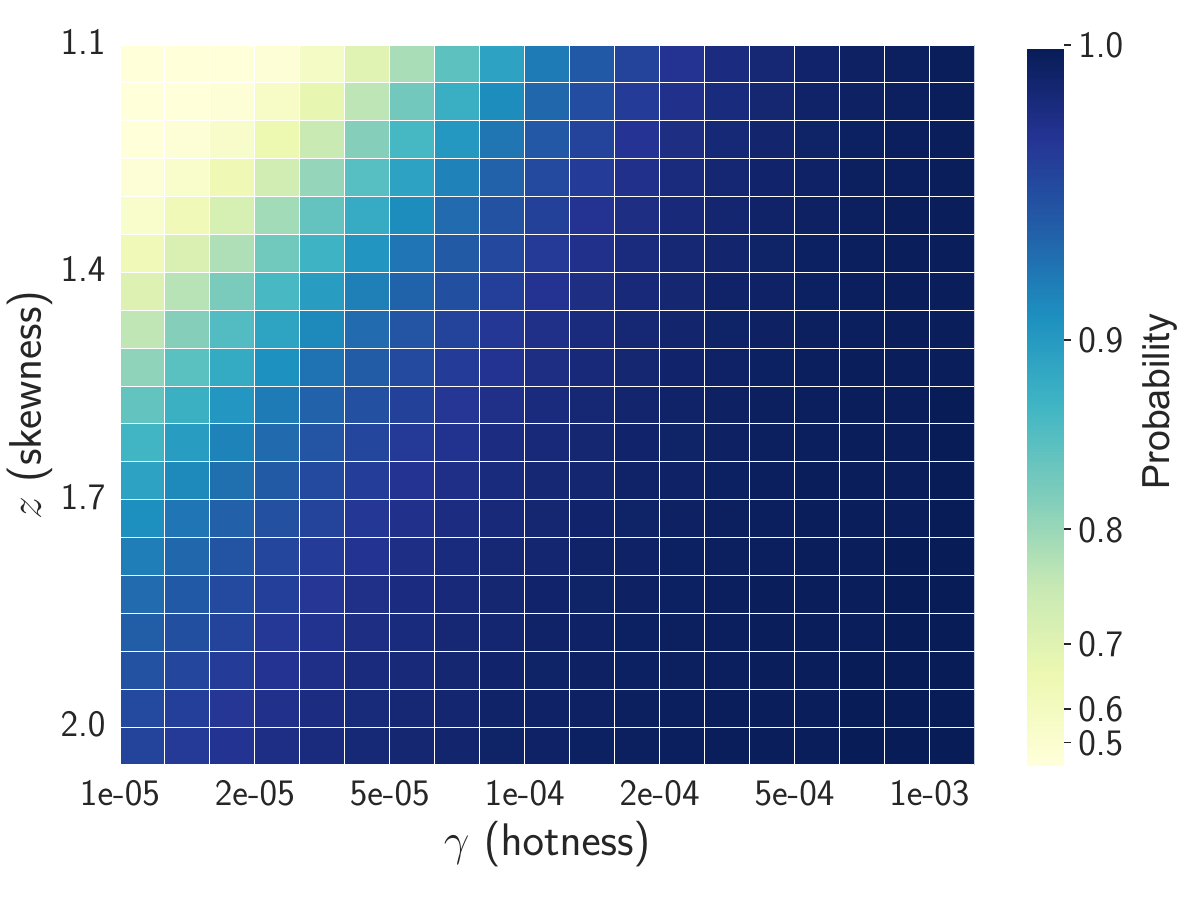}
    \caption{Numerical analysis for the probability of HotSketch identifying hot features, where the x-axis represents the hotness of the feature and the y-axis represents the skewness of the feature hotness distribution (Theorem~\ref{theo:sketch:p}).}\label{fig:heatmap}
\end{figure}

\bbb{\textit{Experimental analysis (Figure~\ref{fig:heatmap}):}}
Although we cannot directly obtain the analytical solution of $\Pr$ from Theorem~\ref{theo:sketch:p},  we can give the numerical solution of $\Pr$ under different
$\gamma$ and $z$ by numerical simulation. 
In our simulation, we set $w=10000$ and $c=4$.
We can see that larger $z$ goes with higher $\Pr$, showing that HotSketch is more suitable for capturing top-$k$ features on skewed data distribution.  
In addition, larger $\gamma$ also goes with higher $\Pr$, showing that hotter features have larger probability of being captured by HotSketch. 
The results are consistent with our design goal.

\subsubsection{\textbf{Convergence Analysis against Deviation}}\label{subsubsec:conv}
\
\newline
As mentioned in Section~\ref{sec:intro:limitation}, in hash-based methods, there will be deviations that can hinder the convergence of embeddings.
\name aims to minimize the deviation of embedding gradients, which indeed reflects the deviation of embedding parameters. In this section, we analyze how this deviation affects the convergence of SGD algorithm. We study the following (non-convex) empirical risk minimization problem:

$$\min\limits_{\theta\in\mathbb{R}^D} f(\theta)=\frac1N \sum\limits_i^N f_i(\theta), \theta_{t+1}=\theta_t-\alpha\widetilde{g}_{i_t}$$

where $\alpha$ is learning rate, $g_{i_t}=\nabla f_i(\theta_{i_t})$ is the standard gradient without compression, $\widetilde{g}_{i_t}$ is the practical gradient with compression. We make the assumptions below following~\cite{DBLP:conf/icml/Allen-Zhu17,DBLP:conf/icml/FuHHJSZ020}.

\bbb{Assumptions.} For $\forall i\in\{1,2,...,N\}, \theta, \theta'\in\mathbb{R}^D$, we make the following assumptions:

(1. $L$-Lipschitz) $\|\nabla f_i(\theta)-\nabla f_i(\theta')\| < L\|\theta-\theta'\|$ ;

(2. Bounded moment) $\mathbb{E}[\|\nabla f_i(\theta)\|] < \sigma_0$,  $\mathbb{E}[\|\nabla f(\theta)\|] < \sigma_0$;

(3. Bounded variance) $\mathbb{E}[\|\nabla f_i(\theta)-\nabla f(\theta)\|] < \sigma$ ;

(4. Existence of global minimum)  $\exists f^* s.t. f(\theta)\ge f^*$ .

\begin{theorem}
    Suppose we run SGD optimization with \name on DLRMs satisfying the assumptions above, with $\epsilon_t = \|\widetilde{g}_{i_t}-g_{i_t}\|$ as the deviation of embedding gradients. Assume the learning rate $\alpha$ satisfies $\alpha < \frac1{L}$. After $T$ steps, for $\overline{\theta}_T$ which is randomly selected from $\{\theta_0, \theta_1, ..., \theta_{T-1}\}$, we have:
    
    $$\mathbb{E}[\|\nabla f(\overline{\theta}_T)\|^2]\le \frac{f(\theta_0)-f^*}{T\alpha(1-\alpha L)} + \frac{\alpha(2L\sigma^2 + \sigma_0^2)}{2(1-\alpha L)} + \frac{(1+\alpha^2 L)\sum_{t=0}^{T-1}\mathbb{E}[\epsilon^2_t]}{2T\alpha(1-\alpha L)}$$
\end{theorem}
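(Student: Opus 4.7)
The plan is to apply the standard non-convex SGD analysis template and carefully account for the extra gradient deviation $\epsilon_t$ introduced by \name's compression. First, I would use the $L$-Lipschitz gradient assumption to invoke the descent lemma
\[
f(\theta_{t+1}) - f(\theta_t) \le -\alpha\,\nabla f(\theta_t)^\top \widetilde g_{i_t} + \tfrac{\alpha^2 L}{2}\|\widetilde g_{i_t}\|^2,
\]
then take conditional expectation $\mathbb{E}_t[\cdot \mid \theta_t]$ and decompose $\widetilde g_{i_t} = g_{i_t} + (\widetilde g_{i_t} - g_{i_t})$, using the unbiasedness $\mathbb{E}_t[g_{i_t}] = \nabla f(\theta_t)$ to isolate a clean $-\alpha\|\nabla f(\theta_t)\|^2$ piece together with a residual cross term $-\alpha\,\nabla f(\theta_t)^\top \mathbb{E}_t[\widetilde g_{i_t} - g_{i_t}]$.

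Next I would bound each error-bearing piece with one of the four assumptions. For the residual cross term, Cauchy-Schwarz combined with $\|\mathbb{E}_t[\widetilde g_{i_t} - g_{i_t}]\| \le \mathbb{E}_t[\epsilon_t]$, the bounded-moment bound $\|\nabla f(\theta_t)\| \le \sigma_0$, Young's inequality with split $(\alpha\sigma_0,\,\mathbb{E}_t[\epsilon_t])$, and Jensen yield contributions of order $\tfrac{\alpha^2\sigma_0^2}{2}$ and $\tfrac{1}{2}\mathbb{E}_t[\epsilon_t^2]$. For the quadratic term, I expand $\|\widetilde g_{i_t}\|^2 \le 2\|g_{i_t}\|^2 + 2\epsilon_t^2$ and apply the variance decomposition $\mathbb{E}_t[\|g_{i_t}\|^2] = \mathrm{Var}_t(g_{i_t}) + \|\nabla f(\theta_t)\|^2 \le \sigma^2 + \|\nabla f(\theta_t)\|^2$. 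After multiplying by $\tfrac{\alpha^2 L}{2}$ this produces the $\alpha^2 L \sigma^2$ term plus an $\alpha^2 L\|\nabla f(\theta_t)\|^2$ contribution that fuses with the leading $-\alpha\|\nabla f(\theta_t)\|^2$ into exactly $-\alpha(1-\alpha L)\|\nabla f(\theta_t)\|^2$; this is the source of the $1-\alpha L$ factor in the denominator of the target bound.

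Finally, I would take total expectation, telescope over $t=0,\dots,T-1$ so that the LHS becomes $\mathbb{E}[f(\theta_T)] - f(\theta_0)$, use $\mathbb{E}[f(\theta_T)] \ge f^*$ from assumption (4), divide by $T\alpha(1-\alpha L)$, and identify $\mathbb{E}[\|\nabla f(\overline\theta_T)\|^2] = \tfrac{1}{T}\sum_{t=0}^{T-1}\mathbb{E}[\|\nabla f(\theta_t)\|^2]$ because $\overline\theta_T$ is drawn uniformly at random from the iterates. The three resulting summands are exactly the initial-gap, bias, and deviation terms appearing in the claim.

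The main obstacle is the constant bookkeeping: ensuring that the $\|\nabla f(\theta_t)\|^2$ pieces produced by the cross term and by the second-moment bound recombine into precisely $-\alpha(1-\alpha L)\|\nabla f(\theta_t)\|^2$, and that the $\sigma_0^2$ contribution ends up carrying an $\alpha^2$ prefactor rather than merely $\alpha$. Getting the latter depends on interpreting the bounded-moment hypothesis as a pointwise uniform bound on $\|\nabla f(\theta_t)\|$ and on the $(\alpha\sigma_0,\,\mathbb{E}_t[\epsilon_t])$ weighting in Young's inequality; the more naive $(\|\nabla f(\theta_t)\|,\,\mathbb{E}_t[\epsilon_t])$ split would leave an extra $\tfrac{\alpha}{2}\|\nabla f(\theta_t)\|^2$ that the factor $1-\alpha L$ cannot absorb, so this choice is what makes the final constants line up with the claimed statement.
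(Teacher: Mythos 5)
Note first that the paper does not actually print a proof of this theorem---it defers it to a supplementary file on the authors' GitHub page---so there is no in-paper argument to compare against. Judged on its own, your plan is the standard non-convex SGD template and it is the right one: descent lemma from $L$-smoothness, decomposition $\widetilde g_{i_t}=g_{i_t}+(\widetilde g_{i_t}-g_{i_t})$, unbiasedness of $g_{i_t}$, Young's inequality on the cross term with the $(\alpha\sigma_0,\epsilon_t)$ weighting, recombination of the $\alpha^2L\|\nabla f(\theta_t)\|^2$ piece into the $-\alpha(1-\alpha L)$ prefactor, telescoping, and uniform sampling of $\overline\theta_T$. Your observation that the naive Young split would leave an unabsorbable $\tfrac{\alpha}{2}\|\nabla f(\theta_t)\|^2$ is exactly the right piece of bookkeeping, and your reading of assumptions (2)--(3) as second-moment/pointwise bounds (which the paper states only as first-moment bounds) is necessary for the argument to go through at all; that upgrade should be stated explicitly rather than left implicit.

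The one concrete gap is in the coefficient of $\mathbb{E}[\epsilon_t^2]$. Your quadratic-term bound $\|\widetilde g_{i_t}\|^2\le 2\|g_{i_t}\|^2+2\epsilon_t^2$ contributes $\alpha^2L\,\epsilon_t^2$ after multiplying by $\tfrac{\alpha^2L}{2}$, and the cross term contributes $\tfrac12\epsilon_t^2$, so your per-step coefficient is $\tfrac12+\alpha^2L=\tfrac{1+2\alpha^2L}{2}$, not the claimed $\tfrac{1+\alpha^2L}{2}$. The symmetric doubling that produces the $2L\sigma^2$ term (which you do need to match the second summand) unavoidably also doubles the $\epsilon_t^2$ contribution from that term; getting exactly $\tfrac{\alpha^2L}{2}\epsilon_t^2$ would require bounding $\|g_{i_t}+e\|^2\le 2\|g_{i_t}\|^2+\|e\|^2$, which is false in general. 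So as written your argument proves the theorem with $1+2\alpha^2L$ in place of $1+\alpha^2L$ --- an asymptotically identical but strictly weaker constant. You should either flag this as a (minor) discrepancy with the stated bound or find an asymmetric treatment of the cross term $2g_{i_t}^\top e$ that shifts its weight onto $\|g_{i_t}\|^2$; everything else in the proposal lines up with the claimed inequality.
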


The proof is in the supplementary file on our GitHub page~\cite{cafecode}. 

As $T$ increases, with a proper learning rate $\alpha = O(\frac1{\sqrt{T}})$, the first two terms at the right hand side of above inequality tend to 0, and the convergence of SGD is mainly influenced by the deviation $\epsilon_t$. In the scenario of compression, there is no bound for this deviation; yet the design of \name is proposed to minimize the deviation.

Since we assign those importance features with exclusive embedding parameters, their parameters have little deviation; for features sharing embeddings with each other, the deviation is introduced by the hash collisions. 
Generally, we cannot directly obtain the deviation of gradients, but according to the $L$-Lipschitz assumption, the deviation of gradients is bounded by the deviation of weights.
As non-hot features share embeddings with each other, the deviation of weights comes from other features' gradients, which may disturb the learning direction.
Based on this observation, \name choose to use gradient norm as the importance of features.
For less important features, their gradient norms are relatively small, which limits the deviation of weights to some extent.

\section{Implementation}

We implement \name as a plug-in embedding layer module based on PyTorch. It can directly replace the original Embedding module in any PyTorch-based recommendation models with minor changes. Usage examples can be found on our GitHub page~\cite{cafecode}. We consider extending \name to other frameworks (TensorFlow, Hetu~\cite{DBLP:journals/chinaf/MiaoNZZC23}, etc.) in the future.

\noindent\bbb{\name Backend:} 
We implement the \algo algorithm in C++ to reduce the overall latency, and implement the rest of \name using PyTorch operators.
For \algo, we set the number of bucket in \algo to the pre-determined number of hot features, with 4 slots per bucket.
We use one sketch structure for all feature fields instead of one sketch per field, because the distribution of hot features across fields is unclear, which is better handled directly with importance scores.

\noindent\bbb{Fault Tolerance:} 
We register all \algo's states as buffers in \name's PyTorch module, so that the states can be saved and loaded alongside model parameters. 
This simple design requires no additional modifications and enables DLRM with \name to use checkpoints for training and inference. 
When training resumes with checkpoints, parameters and states are reloaded simultaneously.

\noindent\bbb{Memory Management:}
We place the whole \algo structure on CPU, since it is not compute-intensive. 
Built upon PyTorch operators, \name's embedding module can run on any accelerators (including CPU, GPU) where PyTorch is supported.

\presec
\section{Experimental Results}\label{sec:exp}
\postsec

In this section, we conduct experiments on four widely used recommendation datasets and compare \name with existing embedding memory compression methods. 
We experimentally show that \name satisfies all three requirements. We also design experiments to reflect the effectiveness of \algo.

    \subsection{Experimental Settings}

\subsubsection{Models and Datasets.}
We conduct the experiments on three representative recommendation models DLRM~\cite{DBLP:journals/corr/abs-1906-00091} \footnote{In this section, we use the term "DLRM" to refer to this specific model, rather than the abbreviation of general "Deep Learning Recommendation Model".}, WDL~\cite{DBLP:conf/recsys/Cheng0HSCAACCIA16}, and DCN~\cite{DBLP:conf/kdd/WangFFW17}. These models are popular in both academia and industry. 
All models follow the architecture discussed in Section~\ref{sec:2.1}, with slight differences in the neural network part. In DLRM, a cross layer performs dot operations between embeddings, producing cross terms for subsequent fully-connected (FC) layers; in WDL, embeddings are fed into a wide network (1 FC layer) and a deep network (several FC layers), and finally the results are summed together for predictions; in DCN, cross layers multiply the embeddings with their projected vectors, producing element-level cross terms for subsequent FC layers.
Since our method is essentially an embedding layer plugin, the conclusions can be generalized to other recommendation models with little effort.
We set the configurations of the models as in the original paper.

\begin{table}[ht]
    \small
    \centering
    \caption{Overview of the datasets.}

    \begin{tabular}{cccccc}
    \toprule
    \textbf{Dataset}&\textbf{\#Samples}&\textbf{\#Features}&\textbf{\#Fields}&\textbf{Dim}&\textbf{\#Param}\\
    \midrule
    Avazu & 40,428,967 & 9,449,445 & 22 & 16 & 150M \\
    Criteo & 45,840,617 & 33,762,577 & 26 & 16 & 540M \\
    KDD12 & 149,639,105 & 54,689,798 & 11 & 64 & 3.5B \\
    CriteoTB & 4,373,472,329 & 204,184,588 & 26 & 128 & 26B 
    \\
    \bottomrule
    \end{tabular}

    \label{tab:datasets}
\end{table}

We train on three large-scale datasets Avazu~\cite{avazu}, Criteo~\cite{criteokaggle}, KDD12~\cite{kdd12}, and an extremely large-scale dataset CriteoTB~\cite{criteotb}. Criteo Kaggle Display Advertising Challenge Dataset (Criteo)~\cite{criteokaggle} and Criteo Terabytes Click Logs (CriteoTB)~\cite{criteotb} contain 7 and 24 days of ads click-through rate (CTR) prediction data respectively, which are adopted in MLPerf~\cite{mlperf}. 
Each data sample has 13 numerical fields and 26 categorical fields. 
For CriteoTB, we set the field's maximum cardinality to $4e7$, the same as in the MLPerf configuration.
Avazu Click-Through Rate Prediction Dataset (Avazu)~\cite{avazu} and KDD Cup 2012, Track 2 (KDD12)~\cite{kdd12} are another two widely-used CTR datasets. They have no numerical field. Avazu contains 10 days of CTR data with 22 categorical fields. KDD12 has no temporal information, and has 11 categorical fields.
For each dataset, we use the appropriate embedding dimension based on the benchmarks~\cite{DBLP:journals/corr/abs-1906-00091,DBLP:conf/cikm/ZhuLYZH21} or our experiments on the uncompressed models.
The statistics of the datasets are listed in Table~\ref{tab:datasets}. Since the numerical field is not our focus, we omit it from the table.
In Section~\ref{sec:expr:manual}, we construct a new dataset with a more significant shift in data distribution to further validate \name's ability to adapt to changes in data distribution.

\subsubsection{Baselines.}
We compare \name with Hash Embedding~\cite{DBLP:conf/icml/WeinbergerDLSA09}, Q-R Trick~\cite{DBLP:conf/kdd/ShiMNY20}, and AdaEmbed~\cite{lai2023adaembed}. 
Hash embedding is a simple baseline using only one hash function, providing a lower bound for all compression methods. 
Q-R Trick is an improved hash-based method, using multiple hash functions and complementary embedding tables to reduce the overall collisions. 
AdaEmbed is an adaptive method, recording all features' importance scores and dynamically allocates embedding vectors only for important features.
We also compare with uncompressed embedding tables.
In Section~\ref{sec:expr:end2end:mde}, we compare \name with a column compression method MDE~\cite{DBLP:conf/isit/GinartNMYZ21}.
If not specified, the hyperparameters of the baselines are the same as in the original paper or code.

\subsubsection{Hardware Environment.}
We conduct all experiments on NVIDIA RTX TITAN 24 GB GPU cards.
Since we focus on embedding compression with large compression ratios, we do not incur distributed training or inference.

\subsubsection{Metrics.}
We employ training loss and testing AUC (area under the ROC curve) to measure model quality.
Specifically, we use the data samples except the last day as the training set, and the data samples of the last day as the testing set.
We use the testing AUC on the last day as the metric for offline training, and the average loss during training as the metric for online training.
We train one epoch on the training set in chronological order, which is common in industry.
Since KDD12 has no temporal information, we randomly shuffle the data and select 90\% for training and the rest for testing.
For memory usage, besides embedding tables, we also consider the memory of additional structures to achieve a fair judgment on memory efficiency.
We use latency and throughput to measure the speed of each method.

\subsection{End-to-end Comparison}
\label{subsec:end-to-end}

In this section, we compare \name with baseline methods in an end-to-end manner.
For large-scale datasets, we train with compression ratios ranging from $2\times$ to $10000\times$, while for the CriteoTB dataset, we train with compression ratios ranging from $10\times$ to $10000\times$, ensuring the model fits in the memory.

\subsubsection{Metrics v.s. Compression Ratios.}
We conduct the main experiments on DLRM.
The testing AUC and the training loss of Criteo and CriteoTB under different compression ratios are plotted in Figure~\ref{fig:exp:end2end:mem}, representing the performance of offline and online training respectively.
For KDD12, we only plot the testing AUC in Figure~\ref{fig:exp:end2end:datasets:kdd12_auc} since it does not contain temporal information for online training. 
For Avazu, given the significant changes in distributions between days as shown in Figure~\ref{fig:intro:kl}, we focus on the online training performance and plot the training loss in Figure~\ref{fig:exp:end2end:datasets:avazu_loss}.
Only \name and Hash can compress the embedding tables to extreme $10000\times$ compression ratio, while Q-R Trick can only compress to around $500\times$ due to its complementary index design, and AdaEmbed can only compress to $5\times$ in Avazu and Criteo with dimension 16, $20\times$ in KDD12 with dimension 64, and $50\times$ in CriteoTB with dimension 128.
Compared to Hash and Q-R Trick, \name is always closer to ideal result that uses uncompressed embedding tables, showing excellent memory efficiency.
When varying the compression ratio, on Criteo dataset \name improves the testing AUC by $1.79\%$ and $0.55\%$ compared to Hash and Q-R Trick respectively on average; on CriteoTB dataset the improvement is $1.304\%$ and $0.427\%$; on KDD12 dataset the improvement is $1.86\%$ and $3.80\%$.
\name also reduces the training loss by $2.31\%$, $0.72\%$ on Criteo dataset, $1.35\%$, $0.59\%$ on CriteoTB dataset, and $3.34\%$, $0.76\%$ on Avazu dataset compared to Hash and Q-R Trick, exhibiting better performance for both offline and online training.
The training loss of Hash fluctuates with the increase of CR on KDD12, which may be due to the instability of the Hash method and a certain degree of randomness in its embedding sharing.
The improvement of \name over Hash is greater with larger compression ratio.
Compared to Hash, at $10000\times$ compression ratio, \name improves 
$3.92\%$, $3.68\%$, and $5.16\%$ testing AUC on Criteo, CriteoTB, KDD12; \name reduces $4.61\%$, $3.24\%$, and $11.21\%$ training loss on Criteo, CriteoTB, Avazu.
Compared to AdaEmbed, \name reaches nearly the same testing AUC and training loss on Criteo dataset, achieves an increase of $0.04\%$ testing AUC and a decrease of $0.12\%$ training loss on CriteoTB dataset, achieves an increase of $0.82\%$ testing AUC on KDD12 dataset and a decrease of $0.83\%$ training loss on Avazu dataset.
AdaEmbed can distinguish hot features with no errors, but it uses much memory for storing importance information of all features, with less memory for embedding vectors compared to \name, leading to comparable results at small compression ratios.

\begin{figure}[htbp]
\centering
\subfigure[AUC v.s. CR on Criteo.]{
\scalebox{0.23}{
\includegraphics[width=\linewidth]{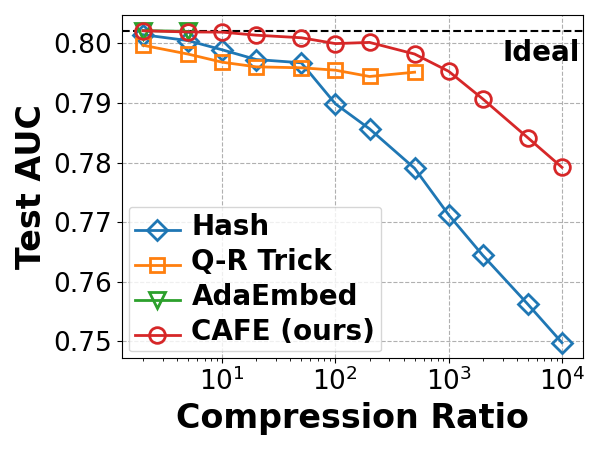}
}
\label{fig:exp:end2end:mem:kaggle_auc}
}
\subfigure[AUC v.s. CR on CriteoTB.]{
\scalebox{0.23}{
\includegraphics[width=\linewidth]{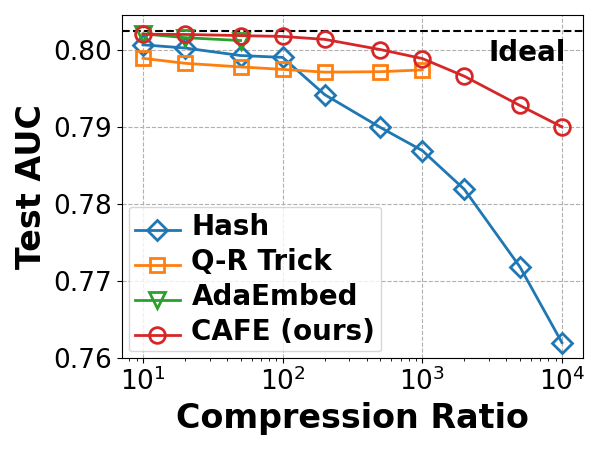}
}
\label{fig:exp:end2end:mem:criteotb_auc}
}
\subfigure[Loss v.s. CR on Criteo.]{
\scalebox{0.23}{
\includegraphics[width=\linewidth]{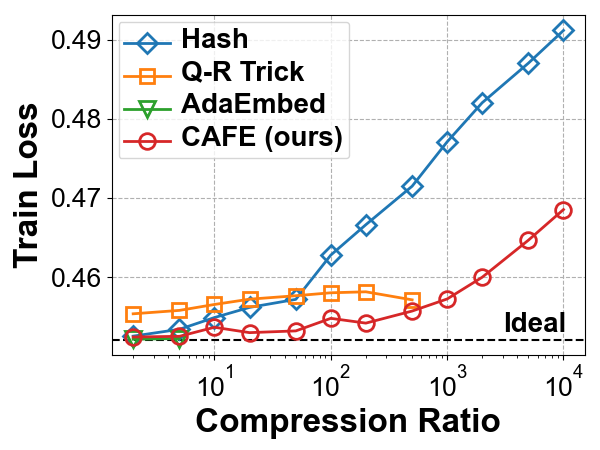}
}
\label{fig:exp:end2end:mem:kaggle_loss}
}
\subfigure[Loss v.s. CR on CriteoTB.]{
\scalebox{0.23}{
\includegraphics[width=\linewidth]{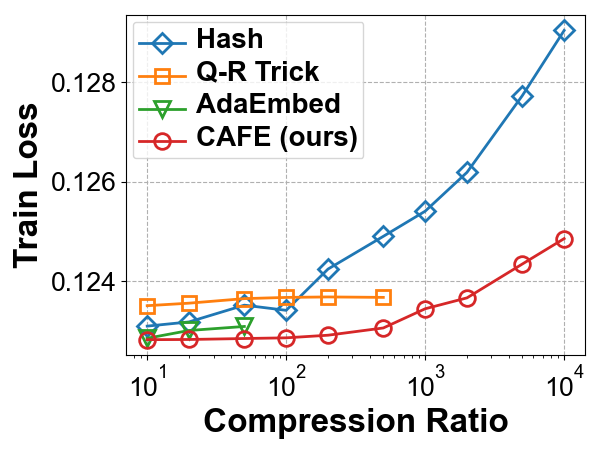}
}
\label{fig:exp:end2end:mem:criteotb_loss}
}
\caption{Metrics v.s. compression ratios.}
\label{fig:exp:end2end:mem}
\end{figure}

\begin{figure}[htbp]
\centering
\subfigure[\scriptsize{AUC v.s. iter (Criteo $100\times$).}]{
\scalebox{0.23}{
\includegraphics[width=\linewidth]{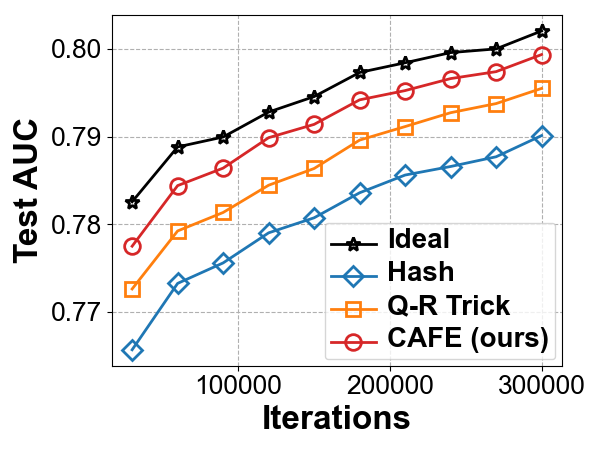}
}
\label{fig:exp:end2end:iter:kaggle_auc}
}
\subfigure[\scriptsize{AUC v.s. iter (CriteoTB $100\times$).}]{
\scalebox{0.23}{
\includegraphics[width=\linewidth]{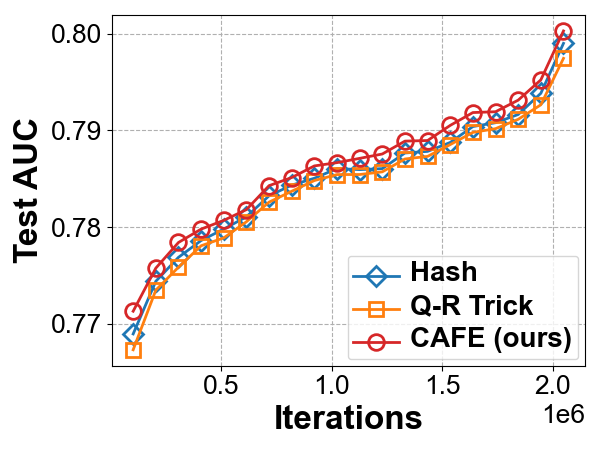}
}
\label{fig:exp:end2end:iter:criteotb_auc}
}
\subfigure[\scriptsize{AUC v.s. iter (Criteo $5\times$).}]{
\scalebox{0.23}{
\includegraphics[width=\linewidth]{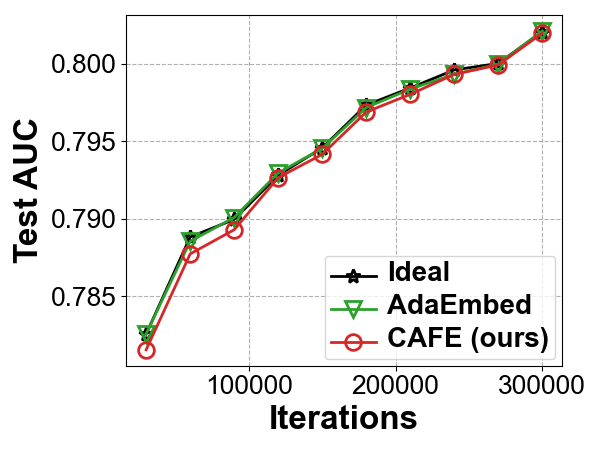}
}
\label{fig:exp:end2end:iter:kaggle_ada_auc}
}
\subfigure[\scriptsize{AUC v.s. iter (CriteoTB $50\times$).}]{
\scalebox{0.23}{
\includegraphics[width=\linewidth]{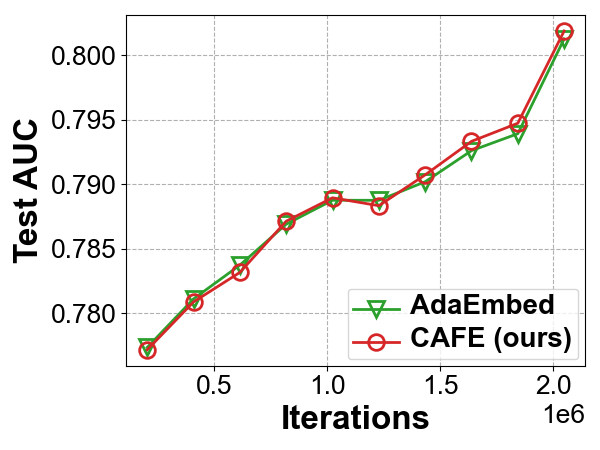}
}
\label{fig:exp:end2end:iter:criteotb_ada_auc}
}

\subfigure[Loss v.s. iterations on Criteo ($100\times$).]{
\scalebox{0.47}{
\includegraphics[width=\linewidth]{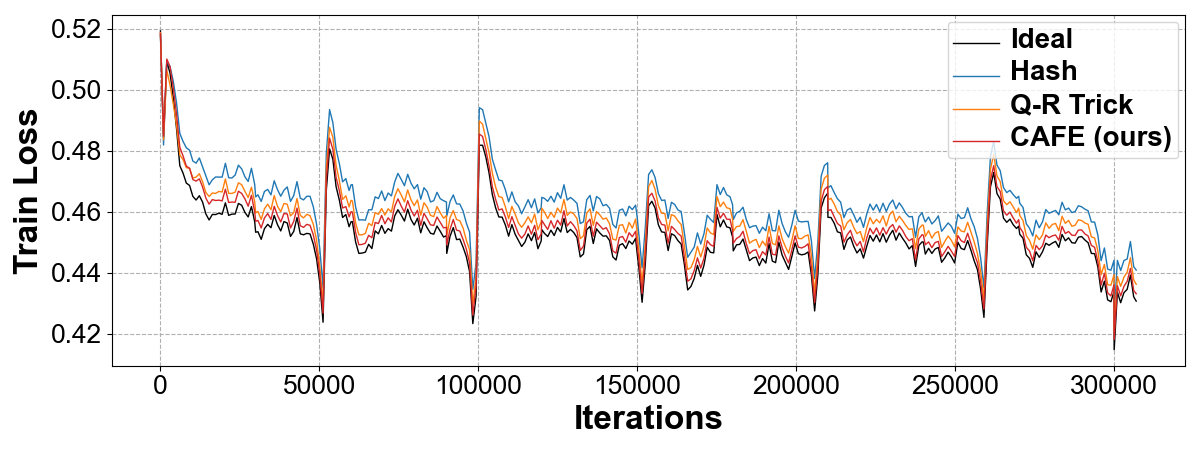}
}
\label{fig:exp:end2end:iter:kaggle_loss}
}
\subfigure[Loss v.s. iterations on CriteoTB ($100\times$).]{
\scalebox{0.47}{
\includegraphics[width=\linewidth]{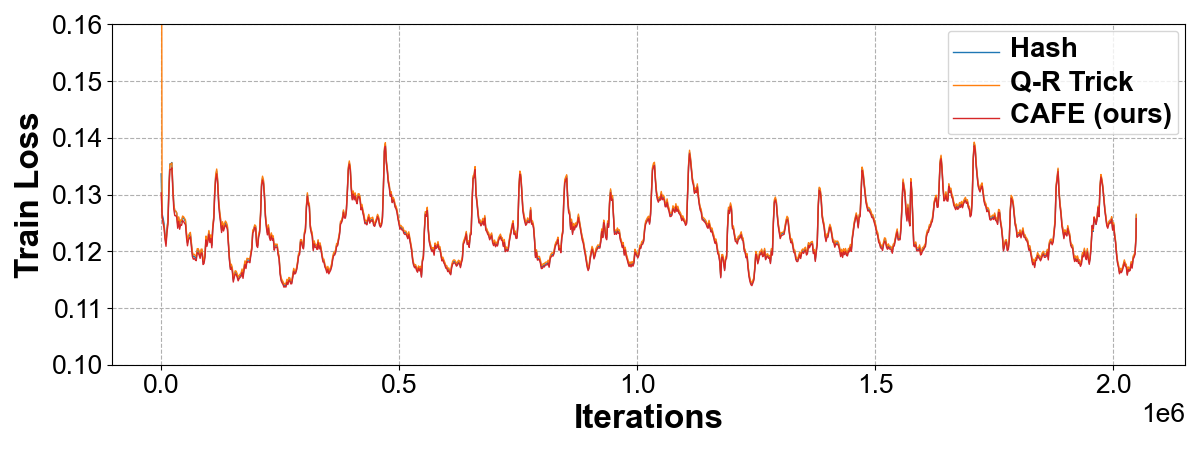}
}
\label{fig:exp:end2end:iter:criteotb_loss}
}

\subfigure[Loss v.s. iterations on Criteo ($5\times$).]{
\scalebox{0.47}{
\includegraphics[width=\linewidth]{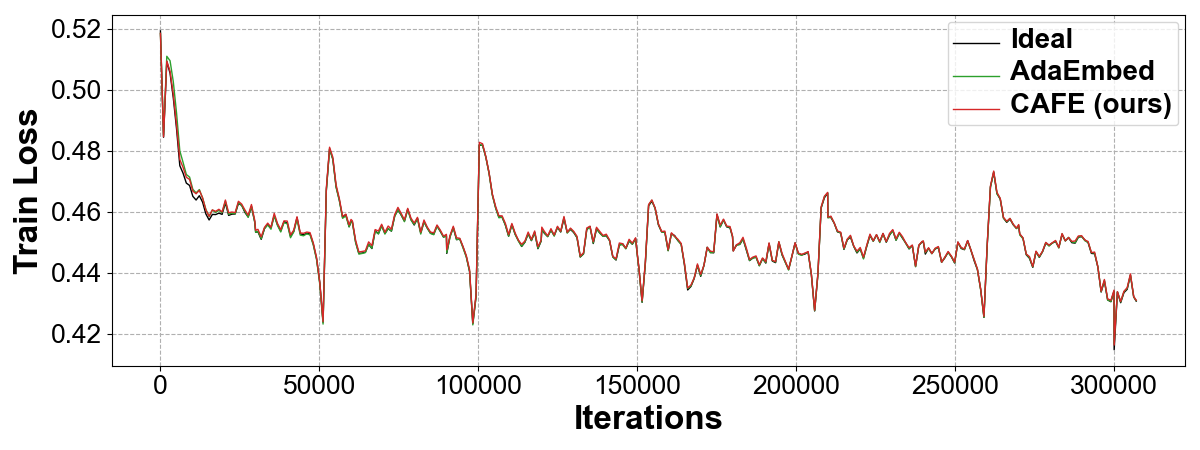}
}
\label{fig:exp:end2end:iter:kaggle_ada_loss}
}
\subfigure[Loss v.s. iterations on CriteoTB ($50\times$).]{
\scalebox{0.47}{
\includegraphics[width=\linewidth]{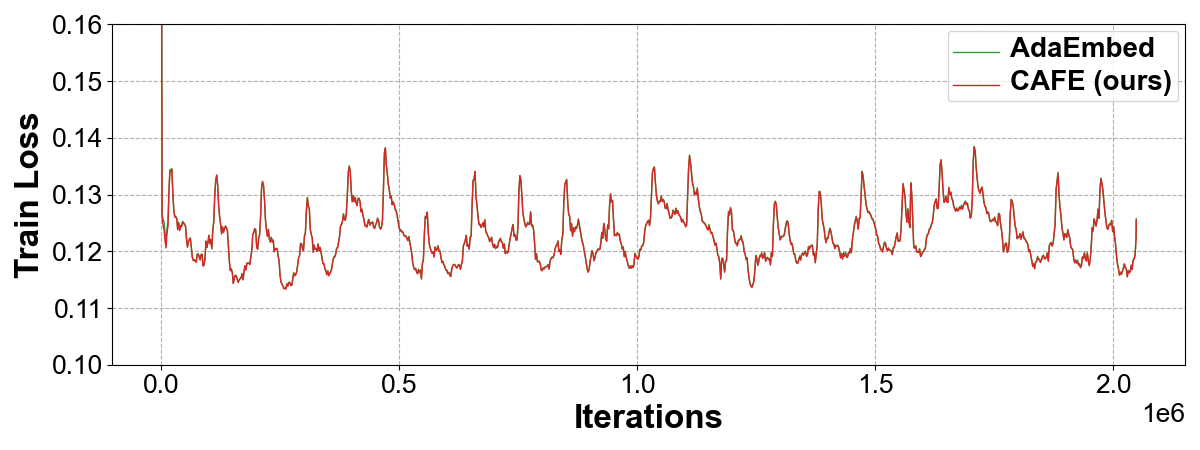}
}
\label{fig:exp:end2end:iter:criteotb_ada_loss}
}
\caption{Metrics v.s. iterations.}
\label{fig:exp:end2end:iter}
\end{figure}

\begin{figure}[htbp]
\centering
\subfigure[AUC v.s. CR on KDD12.]{
\scalebox{0.23}{
\includegraphics[width=\linewidth]{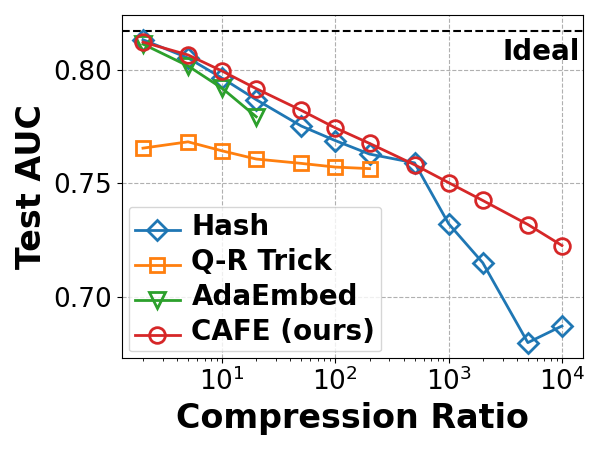}
}
\label{fig:exp:end2end:datasets:kdd12_auc}
}
\subfigure[Loss v.s. CR on Avazu.]{
\scalebox{0.23}{
\includegraphics[width=\linewidth]{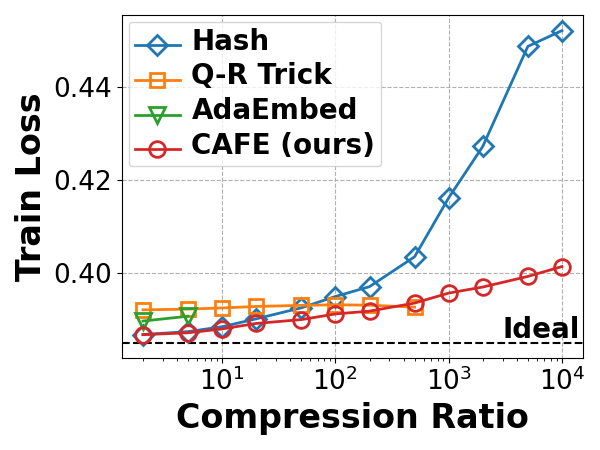}
}
\label{fig:exp:end2end:datasets:avazu_loss}
}
\subfigure[Loss v.s. iterations on Avazu ($5\times$).]{
\scalebox{0.46}{
\includegraphics[width=\linewidth]{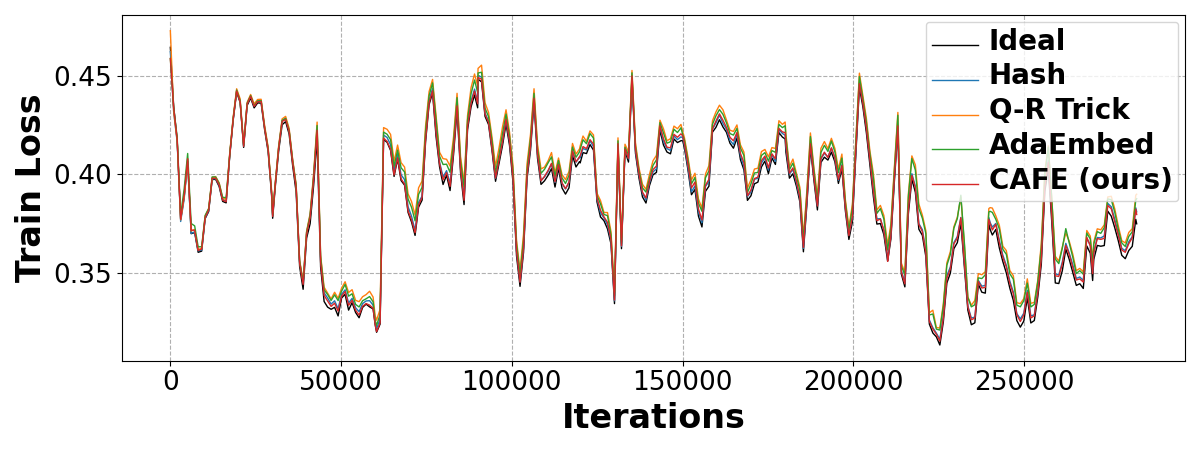}
}
\label{fig:exp:end2end:datasets:avazu_iter}
}
\caption{Performance on KDD12 and Avazu.}
\label{fig:exp:end2end:datasets}
\end{figure}

\begin{figure}[htbp]
\centering
\subfigure[WDL, AUC v.s. CR.]{
\scalebox{0.23}{
\includegraphics[width=\linewidth]{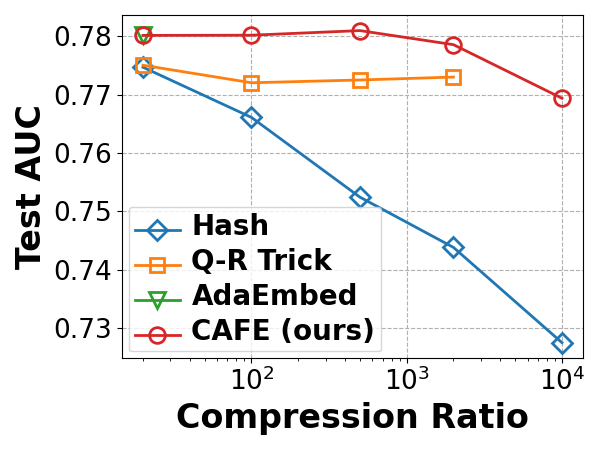}
}
\label{fig:exp:end2end:wdl:auc}
}
\subfigure[WDL, loss v.s. CR.]{
\scalebox{0.23}{
\includegraphics[width=\linewidth]{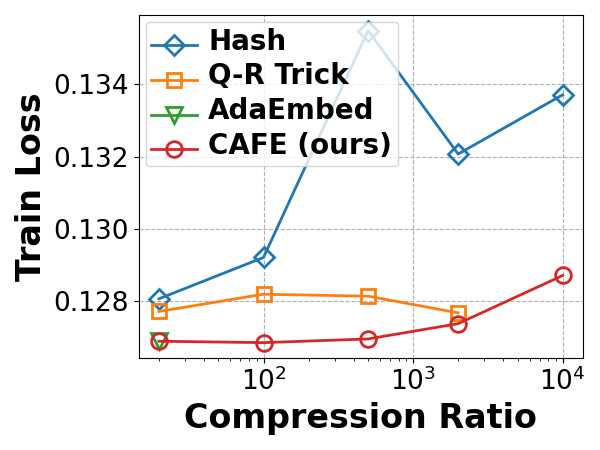}
}
\label{fig:exp:end2end:wdl:loss}
}
\subfigure[DCN, AUC v.s. CR.]{
\scalebox{0.23}{
\includegraphics[width=\linewidth]{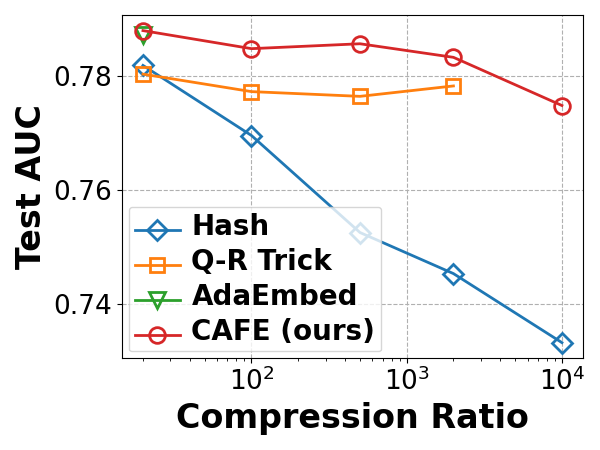}
}
\label{fig:exp:end2end:dcn:auc}
}
\subfigure[DCN, loss v.s. CR.]{
\scalebox{0.23}{
\includegraphics[width=\linewidth]{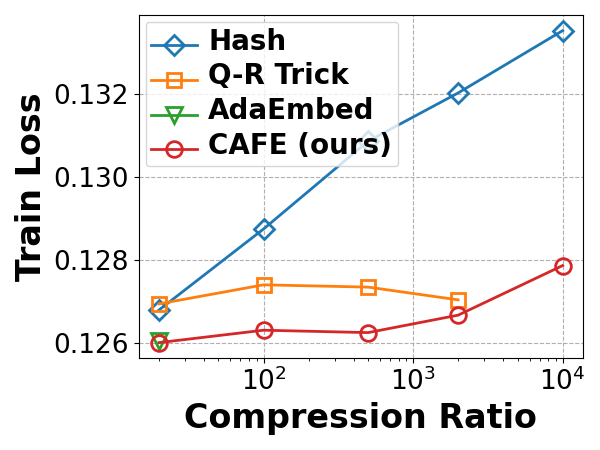}
}
\label{fig:exp:end2end:dcn:loss}
}
\caption{WDL and DCN performance on CriteoTB.}
\label{fig:exp:end2end:wdldcn}
\end{figure}

\subsubsection{Metrics v.s. Iterations.}
We check the convergence process of different methods.
Figure~\ref{fig:exp:end2end:iter} shows the metrics on Criteo and CriteoTB throughout iterations during training. 
Figure~\ref{fig:exp:end2end:datasets:avazu_iter} shows the training loss on Avazu throughout iterations.
We do not plot uncompressed embeddings trained on CriteoTB because the model cannot be held in our limited memory space.
In Figure~\ref{fig:exp:end2end:iter:kaggle_auc}-\ref{fig:exp:end2end:iter:criteotb_ada_auc}, the testing AUC curves tend to increase because the model continues to learn during training and the data distribution gradually approaches the distribution of the last day testing data.
\name has consistently better AUC during training compared to Hash and Q-R Trick.
However, \name does not show better performance at the beginning of training compared to AdaEmbed, mainly because \name has a cold-start process to populate \algo, where all features are initially non-hot features.
As training progresses, \name gradually achieves an AUC comparable to or better than AdaEmbed.
In Figure~\ref{fig:exp:end2end:iter:kaggle_loss}-\ref{fig:exp:end2end:iter:criteotb_ada_loss}, and \ref{fig:exp:end2end:datasets:avazu_iter}, the training loss curves fluctuate due to changes in data distributions.
\name always has a closer training loss to ideal result than Hash and Q-R Trick on Criteo and Avazu datasets, showing better online training ability.
The training curves of \name and AdaEmbed roughly coincide, since they are both designed for online training.
The CriteoTB dataset is large enough to adequately train various methods, resulting in the loss curves of different methods being indistinguishable.

\subsubsection{Experiments on WDL and DCN}

We use another two models, WDL~\cite{DBLP:conf/recsys/Cheng0HSCAACCIA16} and DCN~\cite{DBLP:conf/kdd/WangFFW17}, to experiment on the extremely large-scale dataset CriteoTB. The results are shown in Figure~\ref{fig:exp:end2end:wdldcn}. Similar to DLRM, \name consistently outperforms Hash and Q-R Trick at different compression ratios in both testing AUC and training loss. 
AdaEmbed is the most advanced compression method for small compression ratios, and \name achieves comparable performance to AdaEmbed. The training loss of Hash is not stable in WDL, possibly due to the instability of the Hash method itself and a certain degree of randomness in its embedding sharing.

\subsubsection{Comparison with Column Compression}\label{sec:expr:end2end:mde}

We also compare \name with MDE~\cite{DBLP:conf/isit/GinartNMYZ21}, a method that compresses columns of embedding tables instead of rows as in \name and other baselines. It introduces frequency information to allocate different embedding dimensions for different features, and then uses a trainable matrix to project the raw embeddings to the same final dimension. Since MDE does not compress the rows, and each feature needs to be assigned at least one dimension, the overall compression ratio is limited by the original embedding dimension. We plot the results in Figure~\ref{fig:exp:end2end:mde}. We also include a simple row compression baseline Hash for comparison. MDE’s performance is comparable to Hash on Criteo, but it drops dramatically on CriteoTB. 
To reduce the number of projection matrices, MDE simply uses the feature cardinality of the field to derive the frequency instead of using the actual frequency, which does not effectively utilize important features. It also significantly reduces the rank of the embedding matrix at large compression ratios, causing the embedding to lose semantic information. According to the experimental results, \name always outperforms MDE. 

\begin{figure}[htbp]
\centering
\subfigure[AUC v.s. CR on Criteo.]{
\scalebox{0.23}{
\includegraphics[width=\linewidth]{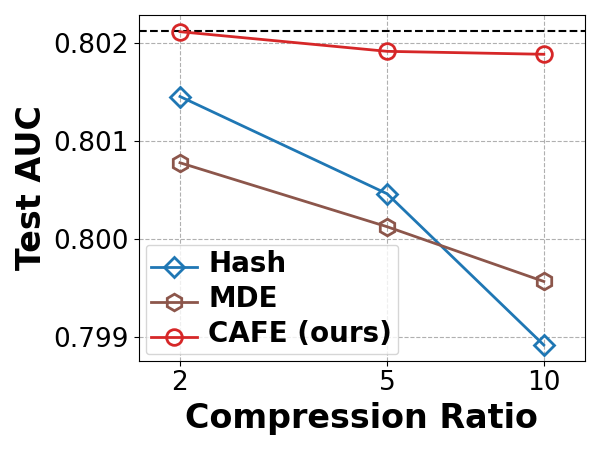}
}
\label{fig:exp:end2end:mde:kaggle_auc}
}
\subfigure[AUC v.s. CR on CriteoTB.]{
\scalebox{0.23}{
\includegraphics[width=\linewidth]{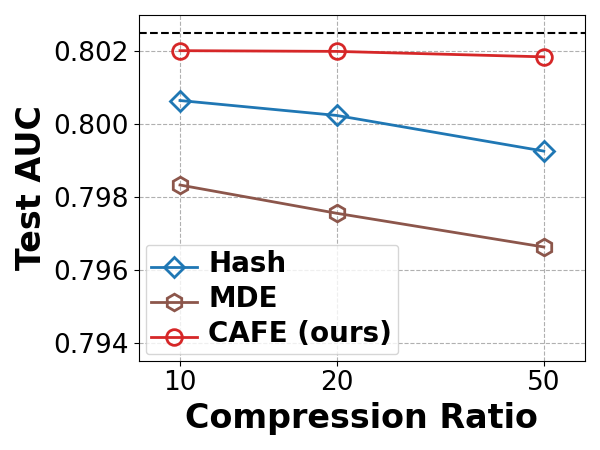}
}
\label{fig:exp:end2end:mde:criteotb_auc}
}
\subfigure[Loss v.s. CR on Criteo.]{
\scalebox{0.23}{
\includegraphics[width=\linewidth]{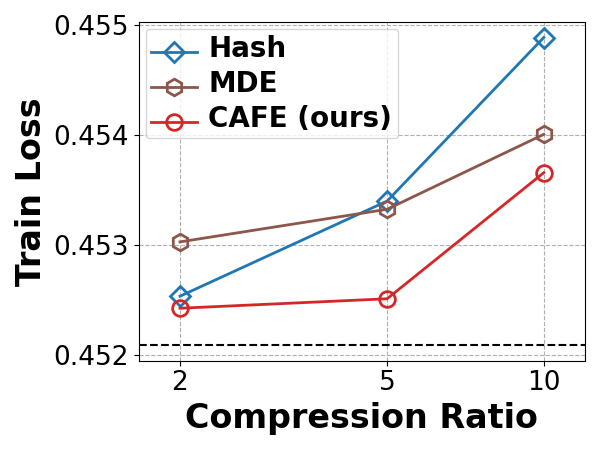}
}
\label{fig:exp:end2end:mde:kaggle_loss}
}
\subfigure[Loss v.s. CR on CriteoTB.]{
\scalebox{0.23}{
\includegraphics[width=\linewidth]{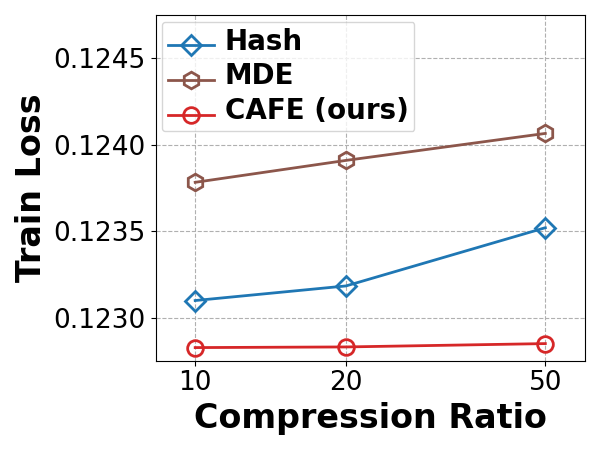}
}
\label{fig:exp:end2end:mde:criteotb_loss}
}
\caption{Comparison with MDE.}
\label{fig:exp:end2end:mde}
\end{figure}


\begin{figure}[htbp]
\centering
\subfigure[Latency.]{
\scalebox{0.47}{
\includegraphics[width=\linewidth]{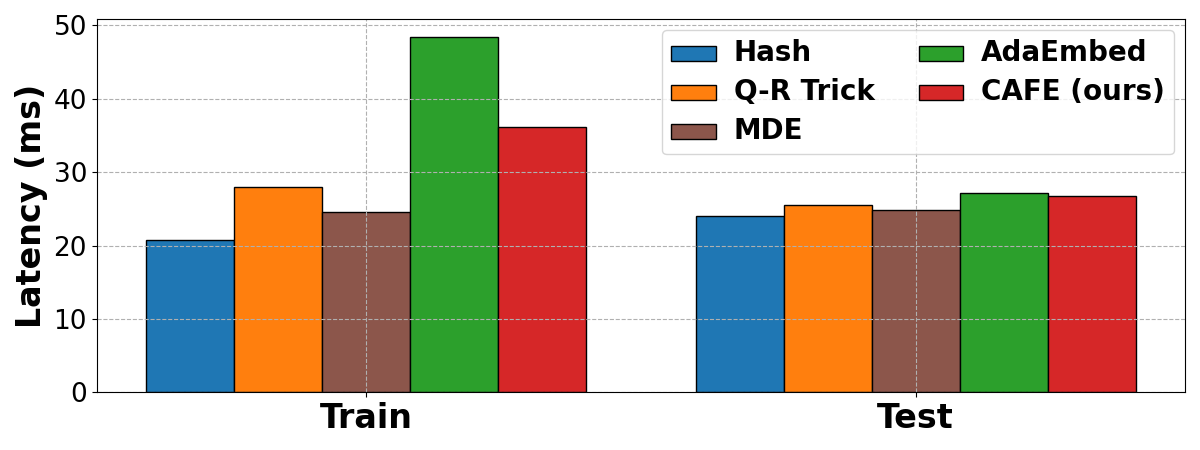}
}
}
\subfigure[Throughput.]{
\scalebox{0.47}{
\includegraphics[width=\linewidth]{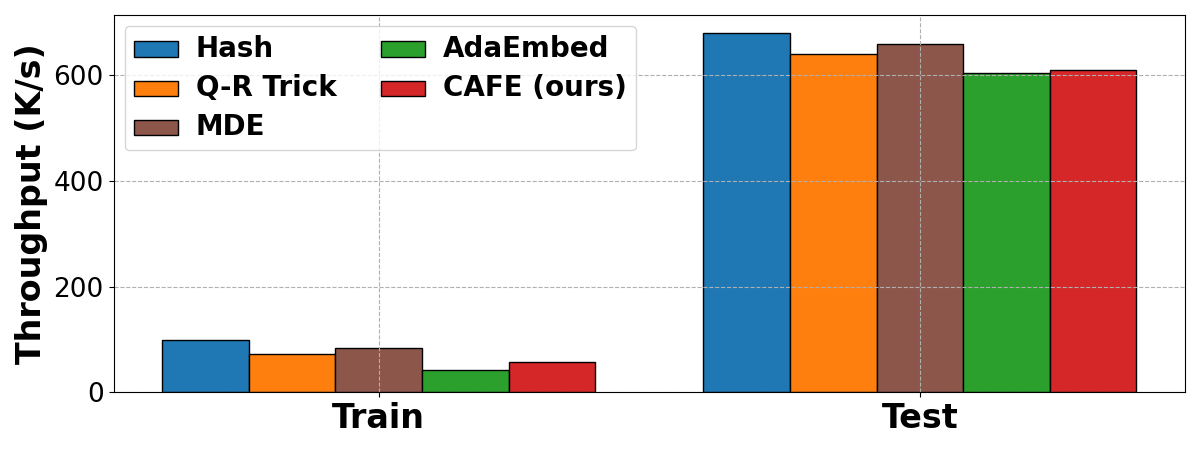}
}
}
\caption{Latency and throughput on CriteoTB ($10\times$).}
\label{fig:exp:end2end:throughput}
\end{figure}

\subsubsection{Latency and Throughput.}
We test the latency and throughput of each method in Figure~\ref{fig:exp:end2end:throughput}.
The experiments are conducted on CriteoTB dataset with a compression ratio of $10\times$.
We use 2048 batch size for training and 16384 batch size for inference, which is common in real applications.
As the data loading time and the DNN computing time is the same across different methods, the difference lies in the operations of the embedding layer.
Hash requires only an additional modulo operation compared to uncompressed embedding operations, and is therefore the fastest method in both training and inference.
Q-R Trick is also fast, because it only additionally introduces hash processes and the aggregation of embedding vectors.
Although MDE introduces matrix multiplication, it requires fewer memory accesses to obtain the embedding parameters, resulting in low latency and high throughput.
AdaEmbed and \name incur reallocation or migration of embeddings, which are inevitable for dynamic adjustments, leading to higher latency and lower throughput.
AdaEmbed regularly samples thousands of data to determine whether to reallocate, which introduces a large time overhead.
In contrast, \name determines the migration in \algo with negligible time overhead.
Compared to AdaEmbed, \name has $33.97\%$ lower training latency and $1.20\%$ lower inference latency.
Through the further experimental results in Section~\ref{subsec:expalgo}, we can see that \algo's $O(1)$ operation time only accounts for a small fraction of the overall time.

\subsubsection{Comparison with Offline Separation.}
We also compare \name with an offline feature separation version on Criteo dataset.
The offline separation version collects all data and makes statistics, separates hot and non-hot features according to frequency instead of gradient norms, and assigns embedding tables respectively.
It uses the same embedding memory as in \name for hot and non-hot features.
As shown in Figure~\ref{fig:exp:end2end:offline:auc}, two versions achieve nearly the same testing AUC under several compression ratios.
Compared to \name, the offline version has no errors in distinguishing hot features, but it can only use frequency, resulting in comparable performance.
Figure~\ref{fig:exp:end2end:offline:auc_iter} and Figure~\ref{fig:exp:end2end:offline:loss_iter} show the testing AUC and the training loss throughout the training process at $1000\times$ compression ratio.
At the beginning of training, the offline version has better testing AUC and training loss, because \name has a cold-start process to fill in the slots.
When the training process becomes stable, the two training loss curves almost completely coincide.
The offline version, however, cannot be used in practical applications.
First, it cannot adapt to online training, where the frequency information is unknown without recording.
Second, in offline training, memory storage and additional data traversal process are required for statistics, causing much overhead.
In contrast, \name naturally supports online and offline training without storing all importance information, so it can be directly applied in the industry.

\begin{figure}[htbp]
\centering
\subfigure[AUC v.s. CR.]{
\scalebox{0.23}{
\includegraphics[width=\linewidth]{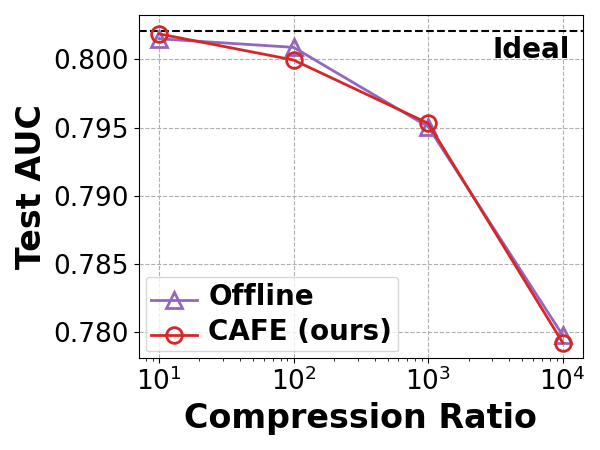}
}
\label{fig:exp:end2end:offline:auc}
}
\subfigure[AUC v.s. iter ($1000\times$).]{
\scalebox{0.23}{
\includegraphics[width=\linewidth]{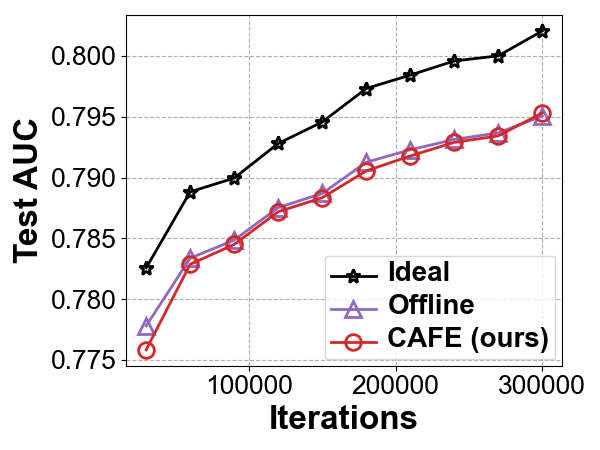}
}
\label{fig:exp:end2end:offline:auc_iter}
}
\subfigure[Loss v.s. iterations ($1000\times$).]{
\scalebox{0.46}{
\includegraphics[width=\linewidth]{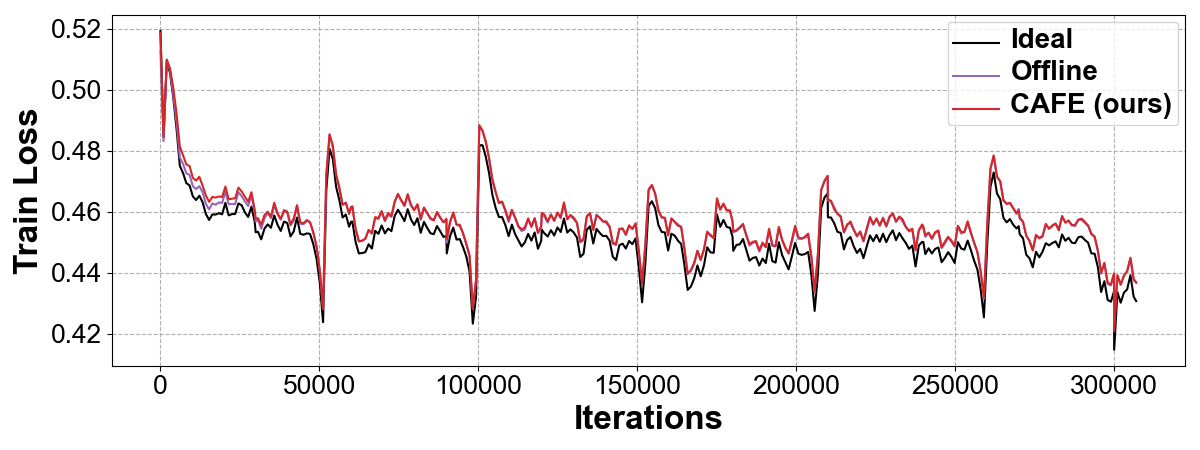}
}
\label{fig:exp:end2end:offline:loss_iter}
}
\caption{\name v.s. offline feature separation on Criteo dataset.}
\label{fig:exp:end2end:offline}
\end{figure}

\begin{figure}[htbp]
\centering
\subfigure[Memory for hot features.]{
\scalebox{0.23}{
\includegraphics[width=\linewidth]{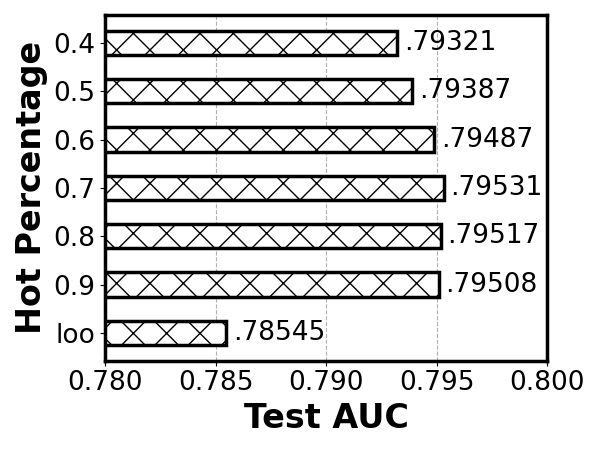}
}
\label{fig:exp:conf:toppercent}
}
\subfigure[Threshold of hot features.]{
\scalebox{0.23}{
\includegraphics[width=\linewidth]{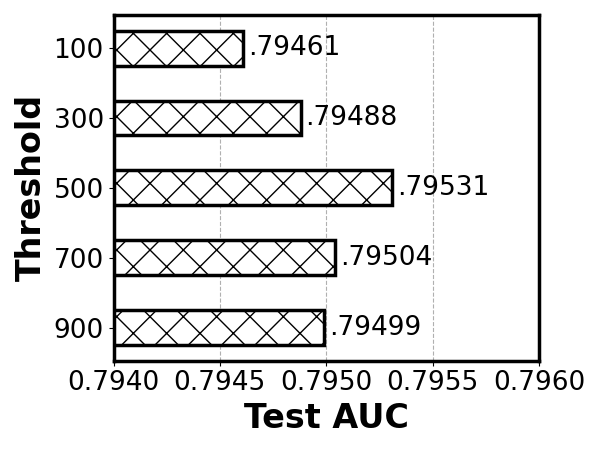}
}
\label{fig:exp:conf:threshold}
}
\subfigure[Decay of scores.]{
\scalebox{0.23}{
\includegraphics[width=\linewidth]{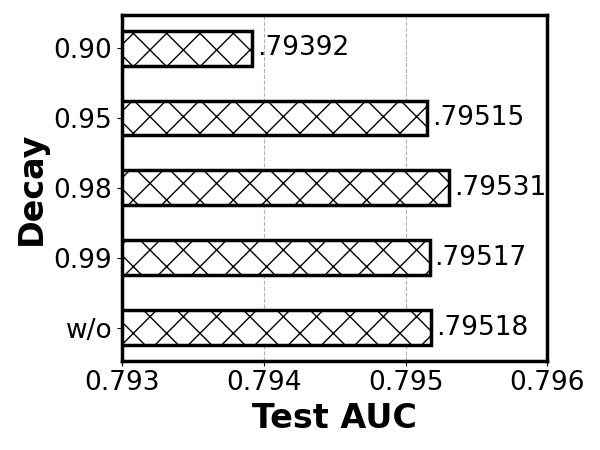}
}
\label{fig:exp:conf:decay}
}
\subfigure[Design details.]{
\scalebox{0.23}{
\includegraphics[width=\linewidth]{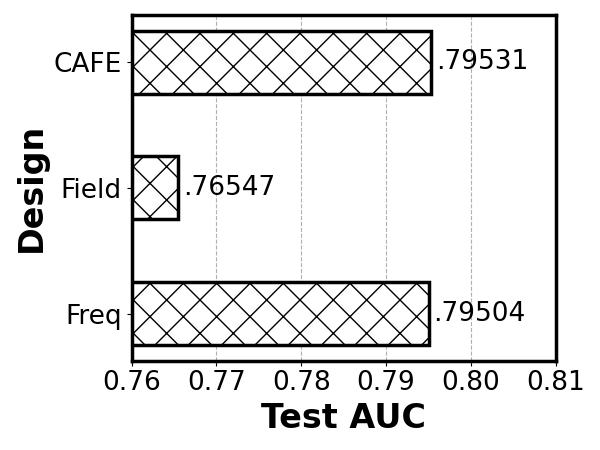}
}
\label{fig:exp:conf:ablation}
}
\caption{Experiments of configuration sensitivity on Criteo dataset ($1000\times$).}
\label{fig:exp:conf}
\end{figure}

\subsection{Configuration Sensitivity}\label{subsec:confsens}
In this section, we study the impact of configurations in \name.
We test different configurations on the Criteo dataset with a fixed compression ratio of $1000\times$, as shown in Figure~\ref{fig:exp:conf}.

\bbb{Memory for hot features.}
Given a limited memory constraint, we need to split the memory into three parts: sketch structure, hot features, and non-hot features.
We define the term "hot percentage" as the percentage of memory occupied by sketch structure and hot features, while the rest is used for non-hot features.
Since \algo stores 4 times the slots of the number of hot features, with each slot 3 attributes, the ratio of memory usage between \algo and $d$ dimension exclusive embeddings is $12:d$.
In the Criteo dataset, the dimension is set to 16, so \algo occupies $3/7$ of memory in hot percentage.
Figure~\ref{fig:exp:conf:toppercent} shows the testing AUC under different hot percentages, where “loo” means “leave-one-out”, leaving only one embedding for non-hot features.
A small hot percentage has low memory overhead of \algo, and allocate more memory for non-hot features, while a large hot percentage allocate more memory for hot features.
As hot percentage gradually increases from 0.4 to 1, the testing AUC first rises then drops.
When the hot percentage is small, enlarging hot percentage enables more hot features, contributing to model quality; when the hot percentage reaches 0.7, \name reaches the best testing AUC; when the hot percentage exceeds 0.7, \algo brings much overhead, and collisions of non-hot features increase dramatically, making the testing AUC drop.
At the extreme case "leave-one-out", all the non-hot features share only one embedding, leading to very bad model performance.
In practice, we find that setting hot percentage to around 0.7 is good enough for nearly all compression ratios.

\bbb{Threshold of hot features.}
Hot features are distinguished in \algo if their importance scores exceed the threshold.
We test different thresholds, and the experimental results are shown in Figure~\ref{fig:exp:conf:threshold}.
The testing AUC is bad when the threshold is set too high or too low.
If the threshold is set too high, the memory space allocated for hot features cannot be saturated, resulting in waste of memory and more non-hot features sharing hash embeddings.
If the threshold is set too low, the entry and exit of features will be too frequent, leading to unstable training process.
When threshold is set to 500, \name reaches the best model AUC.

\bbb{Decay of scores.}
The decay coefficient in \algo determines the exit of features.
All the importance scores in \algo, after a certain number of iterations, will be multiplied the decay coefficient to adapt to temporal variation of data distribution.
We test different decay coefficients in Figure~\ref{fig:exp:conf:decay}.
In general, the smaller the coefficient, the easier it is for hot features to drop out as non-hot features.
In experiments, we find that 0.98 is a proper value for decay coefficient in Criteo dataset, while smaller or larger decay coefficient both have poor performance.
When the decay coefficient is too small, hot features cannot stay long in \algo, makes \algo not saturated and hot features mis-classified to non-hot features.
When the decay coefficient is too large, features continuously occupy slots in \algo, even if they are no longer hot features.

\bbb{Other design details.}
We experiment on other design details of \algo.
Currently we maintain only one exclusive embedding table for all fields, instead of maintaining one embedding table per field.
This design makes hot features more flexible, distributed among fields only according to importance scores rather than cardinality.
Figure~\ref{fig:exp:conf:ablation} shows that maintaining only one exclusive embedding table leads to a substantial increase in model AUC.
We also check the effect of using frequency information as importance score, with a worse testing AUC than gradient norm.
Although frequency is a good indicator of feature importance, it has been proved theoretically and experimentally that gradient norm is better.

\subsection{Multi-level Hash Embedding}\label{sec4.4}

In this section, we study the effect of multi-level hash embedding.
The experimental results are shown in Figure~\ref{fig:exp:multi}, where \name-ML means \name combined with multi-level hash embedding.
Under different compression ratios, \name-ML always performs better than \name, achieving $0.08\%$ better testing AUC and reducing $0.25\%$ training loss.
\name-ML performs especially well with smaller compression ratios, causing nearly no degradation at $100\times$ compression ratio.
This is because \name-ML allocates more memory for multi-level hash embedding tables at small compression ratios, making the representation of medium features more precise.

\begin{figure}[htbp]
\centering
\subfigure[AUC v.s. CR.]{
\scalebox{0.25}{
\includegraphics[width=\linewidth]{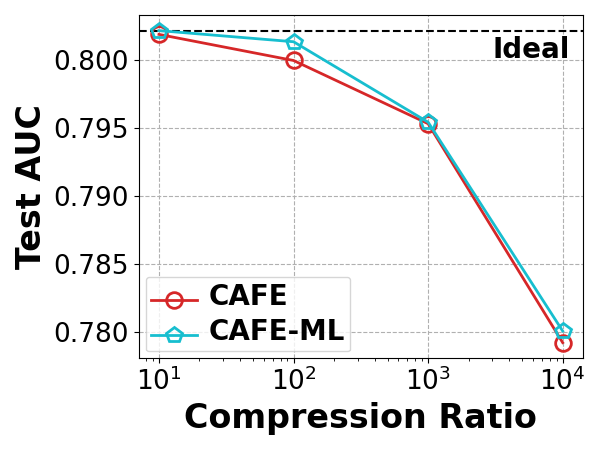}
}
\label{fig:exp:multi:auc}
}
\subfigure[Loss v.s. CR.]{
\scalebox{0.25}{
\includegraphics[width=\linewidth]{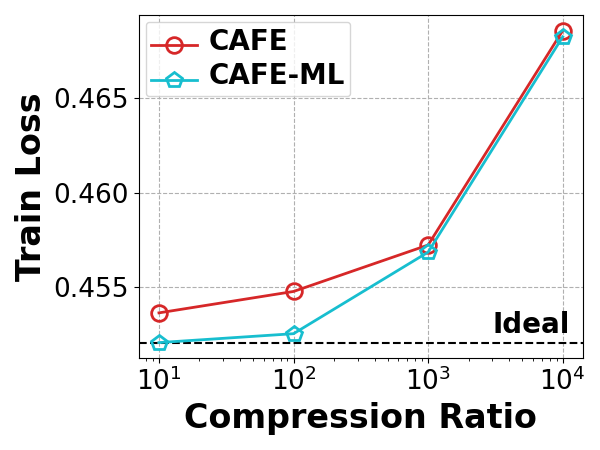}
}
\label{fig:exp:multi:loss}
}\caption{Multi-level hash embedding on Criteo dataset.}
\label{fig:exp:multi}
\end{figure}

    \subsection{Performance on Processed Dataset}\label{sec:expr:manual}

In this section, we construct a new dataset with a more significant shift in data distribution to further validate \name's ability to adapt to changes in data distribution. Keeping the testing data unchanged, we select the training data of days 1,4,7,...,22 from CriteoTB to form CriteoTB-1/3 dataset. As shown in Figure~\ref{fig:intro:kl}, generally the greater the number of days between, the greater the difference between feature distributions. Therefore, CriteoTB-1/3 has a more significant shift in data distribution. The results are shown in Figure~\ref{fig:exp:cropped:criteotb}. Although all methods exhibit slight performance degradation compared to CriteoTB, \name and AdaEmbed can adapt to changing data distributions and achieve relatively good results. Figure~\ref{fig:exp:cropped:iterations} shows that \name and AdaEmbed have almost the same training loss throughout the training process. However, Figure~\ref{fig:exp:cropped:criteotb_auc} and \ref{fig:exp:cropped:criteotb_loss} indicate that \name actually outperforms AdaEmbed with a slight improvement, demonstrating stronger online training capabilities.

\begin{figure}[htbp]
\centering
\subfigure[AUC v.s. CR.]{
\scalebox{0.23}{
\includegraphics[width=\linewidth]{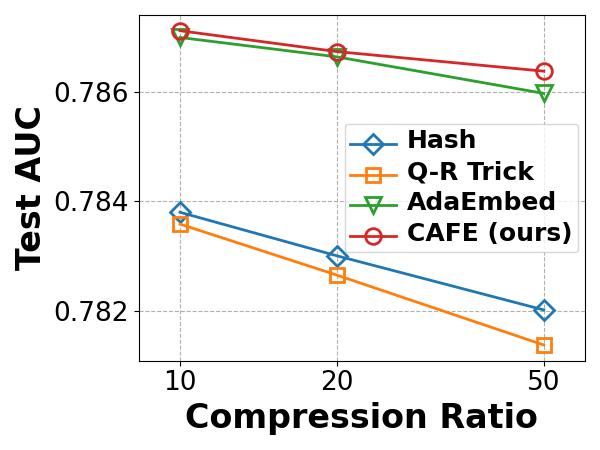}
}
\label{fig:exp:cropped:criteotb_auc}
}
\subfigure[Loss v.s. CR.]{
\scalebox{0.23}{
\includegraphics[width=\linewidth]{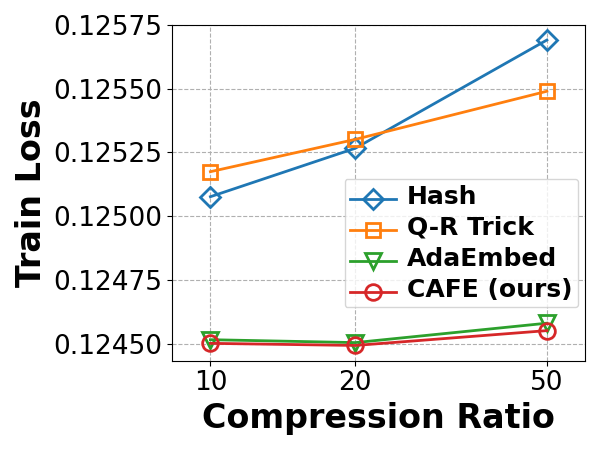}
}
\label{fig:exp:cropped:criteotb_loss}
}
\subfigure[Loss v.s. iterations ($50\times$).]{
\scalebox{0.46}{
\includegraphics[width=\linewidth]{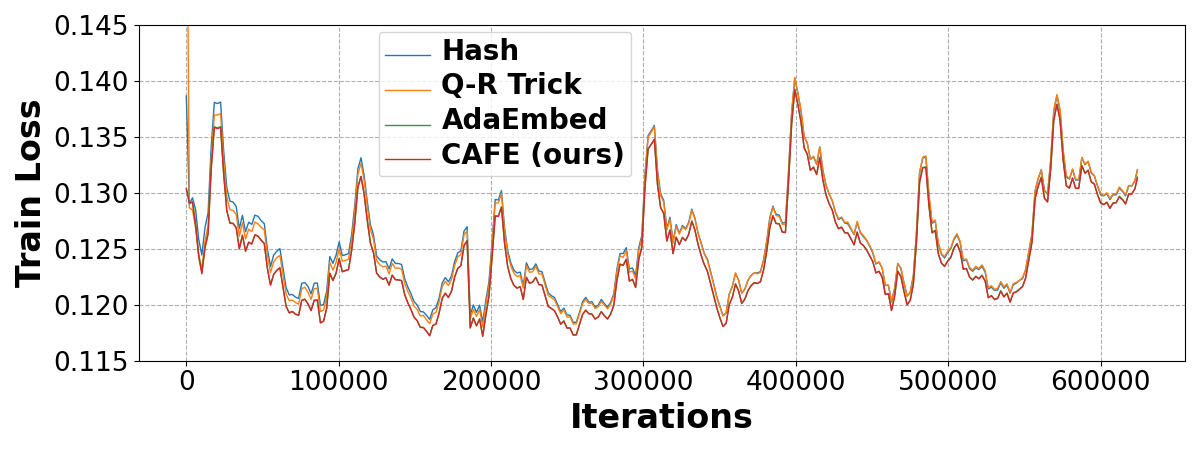}
}
\label{fig:exp:cropped:iterations}
}
\caption{Experiments on CriteoTB-1/3.}
\label{fig:exp:cropped:criteotb}
\end{figure}

\subsection{\algo Performance}\label{subsec:expalgo}

\bbb{Impact of the number of slots per bucket (Figure~\ref{fig:exp:sketch:recall},~\ref{fig:exp:sketch:throughput}):}
We record the recall and the throughput of \algo with different number of slots per bucket.
The experiments use the number of hot features on Criteo dataset ($1000\times$) as $k$.
In Figure~\ref{fig:exp:sketch:recall}, recall generally increases as memory becomes larger.
According to Corollary~\ref{corollary:optimal}, the best number of slots per bucket locates at 11 to 21 given a Zipf distribution of parameter 1.05 to 1.1.
Therefore, $c=8$ and $c=16$ exhibit a better recall than $c=4$ and $c=32$.
The throughput of serialized Insert (write) and Query (read) shown in Figure~\ref{fig:exp:sketch:throughput} is on the order of $1e7$, greater than that of DLRM.
Considering that we can parallelize operations in practice, the sketch time is only a small fraction in training and inference.
Throughput drops as the number of slots increases, because more time is spent doing comparisons within buckets.
Trading-off recall and throughput, we adopt 4 slots per bucket in our implementation, as we find it to be good enough for model quality.

\bbb{Finding real-time top-$k$ features (Figure~\ref{fig:exp:sketch:dynamic100},\ref{fig:exp:sketch:dynamic1000}):}
We conduct experiments to evaluate the performance of HotSketch on finding two types of real-time hot features in online training: the up-to-date top-$k$ features, and the top-$k$ features in previous time window.
These two types of top-$k$ features change with data distribution during the online training process, and thus can effectively reflect HotSketch's capability to adapt dynamic workloads. 
The experiments are conducted on Criteo using 6 days of online training data, with a sliding window size of 0.5 day. 
Figure~\ref{fig:exp:sketch:dynamic100} and \ref{fig:exp:sketch:dynamic1000} show the real-time Recall Rate of HotSketch during online training under different compression ratios. 
HotSketch always achieves $>$90\% Recall Rate on finding these two types of top-$k$ features, meaning that it can well catch up with the changing data distribution.

\begin{figure}[htbp]
\centering
\subfigure[Recall.]{
\scalebox{0.23}{
\includegraphics[width=\linewidth]{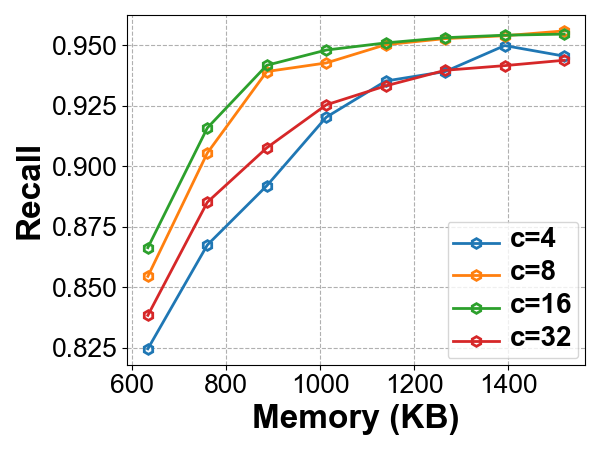}
}
\label{fig:exp:sketch:recall}
}
\subfigure[Throughput.]{
\scalebox{0.23}{
\includegraphics[width=\linewidth]{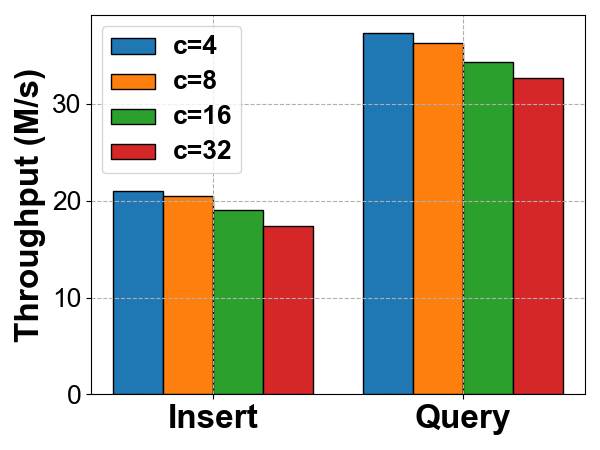}
}
\label{fig:exp:sketch:throughput}
}
\subfigure[Recall v.s. days ($100\times$).]{
\scalebox{0.23}{
\includegraphics[width=\linewidth]{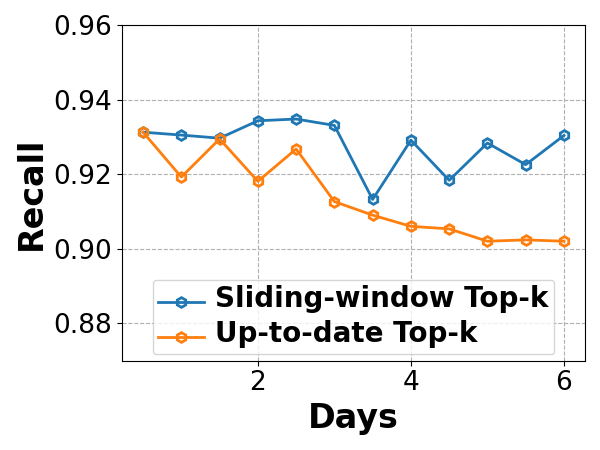}
}
\label{fig:exp:sketch:dynamic100}
}
\subfigure[Recall v.s. days ($1000\times$).]{
\scalebox{0.23}{
\includegraphics[width=\linewidth]{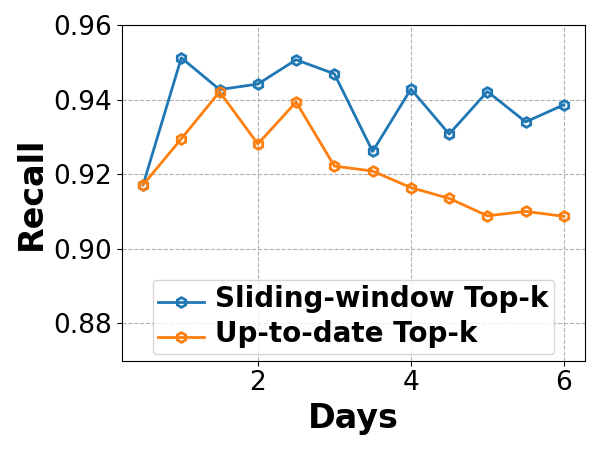}
}
\label{fig:exp:sketch:dynamic1000}
}
\caption{Experiments on \algo.}
\label{fig:exp:sketch}
\end{figure}

\presec
\section{Related Work}
\label{sec:related}
\postsec

\subsection{Embedding Compression}
Numerous compression techniques have been proposed for embedding tables, which can be broadly divided into two categories: row compression and column compression~\cite{DBLP:journals/corr/abs-2311-15578}.
Row compression methods, including hash-based methods, adaptive methods, and \name, reduce the number of rows in embedding tables.
Column compression methods, including quantization, pruning, and dimension reduction, compress each unique feature's representation, thereby reducing the number of columns (or the number of bits) in embedding tables.
Since two categories are primarily orthogonal, methods of different categories can be further combined in DLRMs.

\bbb{Row compression methods:} 
These methods aim to reduce the number of rows in embedding tables.
Initial attempts to accommodate large numbers of embeddings within a limited memory space came from hash-based methods~\cite{DBLP:conf/icml/WeinbergerDLSA09,DBLP:conf/kdd/ShiMNY20,DBLP:conf/cikm/YanWLLLXZ21}, which are widely used in real-world applications.
They use simple hash functions to map categorical features onto a limited number of embeddings, resulting in different features sharing the same embedding vector in the event of hash collisions.
However, hash-based methods do not provide theoretical bounds, which can lead to significant degradation in model quality.
AdaEmbed~\cite{lai2023adaembed} is an adaptive method that identifies and records important features.
It dynamically reallocates embeddings for important features during online training, and achieves good model accuracy over time.
However, its compression ratio is constrained by the storage of importance information, which scales linearly with the number of features.
AdaEmbed's sampling and migration strategy also incurs much latency in online training.

\bbb{Column compression methods:}
Methods of this category aim to compress the representation for each unique feature, thereby reducing the number of columns (or the number of bits) in embedding tables.
They borrow techniques from traditional deep learning compression , such as quantization~\cite{DBLP:conf/sigmod/XuLZSHL021,DBLP:journals/corr/abs-2212-05735}, pruning~\cite{DBLP:conf/wsdm/0002PZKFL21,kong2022autosrh}, and dimension reduction~\cite{DBLP:conf/isit/GinartNMYZ21,DBLP:conf/www/ZhaoLLTGSWGL21,DBLP:conf/cikm/Lyu0ZG0TL22}.
Except for simple quantization and rule-based dimension reduction, most of these methods incorporate learnable structures to implicitly capture the importance of features, achieving similar or even better model accuracy compared to an uncompressed model.
Nevertheless, they are unable to compress the embedding tables to small memory constraints during training.
Specifically, quantization has a fixed compression ratio according to the data type; for example, if INT4 is used for compression, the compression ratio is fixed at $8\times$ compared to FLOAT32.
Generally, pruning and dimension reduction compress the embedding tables only at inference time, requiring additional memory to store extra structures during training.
They are seldom used in industry, as the memory bottleneck during training is more severe due to activations and optimizer states.
Most of these methods can only support offline training because they require collected data for multi-stage training, including pre-training, finetuning, and re-training.

\subsection{Sketching Algorithm}

Sketch is an excellent probabilistic data structure that can approximately record the statistics of data streams by maintaining a summary. 
Thanks to their small memory overhead and fast processing speed, sketches are widely applied in the realm of streaming data mining~\cite{cmsketch}, database~\cite{izenov2021compass,shi2021time,chiosa2021skt,DBLP:journals/pacmmod/LiuZZZ0XWL023}, and network measurement and management~\cite{yang2018elastic,zhang2021cocosketch} to perform various tasks, such as frequency estimation~\cite{cmsketch,cusketch,cmmsketch}, finding top-$k$ frequent items~\cite{spacesaving,wavingsketch,heavyguardian,mandal2018topkapi}, and mining special patterns in streaming data~\cite{liu2023hypercalm}.
Existing sketches can be classified into two categories: counter-based sketches and KV-based sketches.

\bbb{Counter-based sketches:}
Typical counter-based sketches include CM~\cite{cmsketch}, CU~\cite{cusketch}, Count~\cite{count}, ASketch~\cite{asketch}, and more~\cite{cmlsketch,fan2008efficient,csmsketch,cmmsketch}. 
The data structures of these sketches usually consist of multiple arrays, each containing many counters.
Each array is associated with one hash function that maps items into a specific counter in it. 
For example, the most popular CM sketch~\cite{cmsketch} comprises $d$ counter arrays $C_1, \cdots, C_d$. 
For each incoming item $e$, it is hashed into $d$ counters in the CM sketch $C_1[h_1(e)], \cdots, C_d[h_d(e)]$ with each of the $d$ counters incremented by one. 
To query item $e$, CM sketch returns the minimum counter among $C_1[h_1(e)], \cdots, C_d[h_d(e)]$. 
However, the CM sketch has overestimated errors due to hash collisions. 
Other sketches propose various strategies to reduce this error. 
For instance, CU sketch~\cite{cusketch} only increments the minimum counter among $C_1[h_1(e)], \cdots, C_d[h_d(e)]$, and Count sketch~\cite{count} adds $s(e) \in \{+1, -1\}$ to each mapped counter to achieve unbiased estimation. 
Despite these improvements, existing counter-based sketches are not memory efficient for finding top-$k$ items.
They do not distinguish between frequent and infrequent items, where infrequent items are useless for reporting top-$k$ items, and recording infrequent items only increases the error of frequent items.
Moreover, they need multiple memory accesses per insertion, resulting in unsatisfactory insertion speed.

\bbb{KV-based sketches:}
Common key-value-based sketches include Space-Saving~\cite{spacesaving}, Unbaised Space-Saving~\cite{DBLP:conf/sigmod/Ting18}, Lossy Counting~\cite{proht}, HeavyGuardian~\cite{heavyguardian}, and more~\cite{wavingsketch,yang2018elastic,zhang2021cocosketch}.
These sketches maintain the KV pairs of frequent items in their data structures. 
For instance, Space-Saving~\cite{spacesaving} and Unbiased Space-Saving~\cite{DBLP:conf/sigmod/Ting18} use a data structure called Stream-Summary to record frequent items, which is essentially a doubly-linked list of fixed size, indexed by a hash table.
When Stream-Summary is full and an unrecorded item arrives, Space-Saving replaces the least frequent item with the incoming one.
Based on Space-Saving, Unbiased Space-Saving~\cite{DBLP:conf/sigmod/Ting18} replaces the least frequent item with a certain probability, so as to achieve unbiased estimation. 
Unfortunately, Space-Saving and Unbiased Space-Saving are not memory and time efficient because of the extra hash table and complex pointer operations. 
Another type of KV-based sketches, such as HeavyGuardian~\cite{heavyguardian} and WavingSketch~\cite{wavingsketch}, uses a bucket array data structure, where each bucket stores multiple KV pairs. 
These sketches provide satisfactory accuracy for reporting top-$k$ items and only require one memory access per insertion, ensuring fast insertion speed. 

\presec
\section{Conclusion}\label{sec:conclusion}
\postsec

In this paper, we propose \name, a compact, adaptive, and fast embedding compression method that fulfills three essential design requirements: memory efficiency, low latency, and adaptability to dynamic data distribution.
We introduce a light-weight sketch structure, \algo, to identify and record the importance scores of features.
It incurs negligible time overhead, and its memory consumption is significantly lower than the original embedding tables.
By assigning exclusive embeddings to a small set of important features and shared embeddings to other less important features, we achieve superior model quality within a limited memory constraint.
To adapt to dynamic data distribution during online training, we incorporate an embedding migration strategy based on \algo.
We further optimize \name with multi-level hash embedding, creating finer-grained importance groups.
Experimental results demonstrate that \name outperforms existing methods, with $3.92\%$, $3.68\%$ higher testing AUC  and $4.61\%$, $3.24\%$ lower training loss at $10000\times$ compression ratio on Criteo Kaggle and CriteoTB datasets, exhibiting superior performance in both offline training and online training.
The source codes of \name are available at GitHub~\cite{cafecode}.

\begin{acks}
This work is supported by National Key R\&D Program of China (2022ZD0116315), National Natural Science Foundation of China (U22B2037, U23B2048, 62372009), PKU-Tencent joint research Lab.
\end{acks}

\bibliographystyle{ACM-Reference-Format}
\bibliography{reference}


\begin{thebibliography}{84}


\ifx \showCODEN    \undefined \def \showCODEN     #1{\unskip}     \fi
\ifx \showDOI      \undefined \def \showDOI       #1{#1}\fi
\ifx \showISBNx    \undefined \def \showISBNx     #1{\unskip}     \fi
\ifx \showISBNxiii \undefined \def \showISBNxiii  #1{\unskip}     \fi
\ifx \showISSN     \undefined \def \showISSN      #1{\unskip}     \fi
\ifx \showLCCN     \undefined \def \showLCCN      #1{\unskip}     \fi
\ifx \shownote     \undefined \def \shownote      #1{#1}          \fi
\ifx \showarticletitle \undefined \def \showarticletitle #1{#1}   \fi
\ifx \showURL      \undefined \def \showURL       {\relax}        \fi
\providecommand\bibfield[2]{#2}
\providecommand\bibinfo[2]{#2}
\providecommand\natexlab[1]{#1}
\providecommand\showeprint[2][]{arXiv:#2}

\bibitem[Aden and Wang(2012)]%
        {kdd12}
\bibfield{author}{\bibinfo{person}{Aden} {and} \bibinfo{person}{Yi Wang}.} \bibinfo{year}{2012}\natexlab{}.
\newblock \bibinfo{title}{KDD Cup 2012, Track 2}.
\newblock \bibinfo{howpublished}{\url{https://kaggle.com/competitions/kddcup2012-track2}}.
\newblock


\bibitem[Allen{-}Zhu(2017)]%
        {DBLP:conf/icml/Allen-Zhu17}
\bibfield{author}{\bibinfo{person}{Zeyuan Allen{-}Zhu}.} \bibinfo{year}{2017}\natexlab{}.
\newblock \showarticletitle{Natasha: Faster Non-Convex Stochastic Optimization via Strongly Non-Convex Parameter}. In \bibinfo{booktitle}{\emph{Proceedings of the 34th International Conference on Machine Learning (ICML)}}.
\newblock


\bibitem[Charikar et~al\mbox{.}(2002)]%
        {count}
\bibfield{author}{\bibinfo{person}{Moses Charikar}, \bibinfo{person}{Kevin~C. Chen}, {and} \bibinfo{person}{Martin Farach{-}Colton}.} \bibinfo{year}{2002}\natexlab{}.
\newblock \showarticletitle{Finding Frequent Items in Data Streams}. In \bibinfo{booktitle}{\emph{Automata, Languages and Programming, 29th International Colloquium (ICALP)}}.
\newblock


\bibitem[Chen et~al\mbox{.}(2023)]%
        {DBLP:journals/pvldb/ChenGCT23}
\bibfield{author}{\bibinfo{person}{Tianyi Chen}, \bibinfo{person}{Jun Gao}, \bibinfo{person}{Hedui Chen}, {and} \bibinfo{person}{Yaofeng Tu}.} \bibinfo{year}{2023}\natexlab{}.
\newblock \showarticletitle{{LOGER:} {A} Learned Optimizer towards Generating Efficient and Robust Query Execution Plans}.
\newblock \bibinfo{journal}{\emph{Proceedings of the {VLDB} Endowment}} \bibinfo{volume}{16}, \bibinfo{number}{7} (\bibinfo{year}{2023}), \bibinfo{pages}{1777--1789}.
\newblock


\bibitem[Cheng et~al\mbox{.}(2016)]%
        {DBLP:conf/recsys/Cheng0HSCAACCIA16}
\bibfield{author}{\bibinfo{person}{Heng{-}Tze Cheng}, \bibinfo{person}{Levent Koc}, \bibinfo{person}{Jeremiah Harmsen}, \bibinfo{person}{Tal Shaked}, \bibinfo{person}{Tushar Chandra}, \bibinfo{person}{Hrishi Aradhye}, \bibinfo{person}{Glen Anderson}, \bibinfo{person}{Greg Corrado}, \bibinfo{person}{Wei Chai}, \bibinfo{person}{Mustafa Ispir}, \bibinfo{person}{Rohan Anil}, \bibinfo{person}{Zakaria Haque}, \bibinfo{person}{Lichan Hong}, \bibinfo{person}{Vihan Jain}, \bibinfo{person}{Xiaobing Liu}, {and} \bibinfo{person}{Hemal Shah}.} \bibinfo{year}{2016}\natexlab{}.
\newblock \showarticletitle{Wide {\&} Deep Learning for Recommender Systems}. In \bibinfo{booktitle}{\emph{Proceedings of the 1st Workshop on Deep Learning for Recommender Systems (DLRS@RecSys)}}.
\newblock


\bibitem[Chiosa et~al\mbox{.}(2021)]%
        {chiosa2021skt}
\bibfield{author}{\bibinfo{person}{Monica Chiosa}, \bibinfo{person}{Thomas Preu{\ss}er}, {and} \bibinfo{person}{Gustavo Alonso}.} \bibinfo{year}{2021}\natexlab{}.
\newblock \showarticletitle{{SKT:} {A} One-Pass Multi-Sketch Data Analytics Accelerator}.
\newblock \bibinfo{journal}{\emph{Proceedings of the VLDB Endowment}} \bibinfo{volume}{14}, \bibinfo{number}{11} (\bibinfo{year}{2021}), \bibinfo{pages}{2369--2382}.
\newblock


\bibitem[Chui et~al\mbox{.}(2018)]%
        {chui2018notes}
\bibfield{author}{\bibinfo{person}{Michael Chui}, \bibinfo{person}{James Manyika}, \bibinfo{person}{Mehdi Miremadi}, \bibinfo{person}{Nicolaus Henke}, \bibinfo{person}{Rita Chung}, \bibinfo{person}{Pieter Nel}, {and} \bibinfo{person}{Sankalp Malhotra}.} \bibinfo{year}{2018}\natexlab{}.
\newblock \showarticletitle{Notes from the AI frontier: Insights from hundreds of use cases}.
\newblock \bibinfo{journal}{\emph{McKinsey Global Institute}}  \bibinfo{volume}{2} (\bibinfo{year}{2018}).
\newblock
\urldef\tempurl%
\url{https://www.mckinsey.com/west-coast/~/media/McKinsey/Featured\%20Insights/Artificial\%20Intelligence/Notes\%20from\%20the\%20AI\%20frontier\%20Applications\%20and\%20value\%20of\%20deep\%20learning/Notes-from-the-AI-frontier-Insights-from-hundreds-of-use-cases-Discussion-paper.pdf}
\showURL{%
\tempurl}


\bibitem[Cormode and Muthukrishnan(2005)]%
        {cmsketch}
\bibfield{author}{\bibinfo{person}{Graham Cormode} {and} \bibinfo{person}{S. Muthukrishnan}.} \bibinfo{year}{2005}\natexlab{}.
\newblock \showarticletitle{An improved data stream summary: the count-min sketch and its applications}.
\newblock \bibinfo{journal}{\emph{Journal of Algorithms}} \bibinfo{volume}{55}, \bibinfo{number}{1} (\bibinfo{year}{2005}), \bibinfo{pages}{58--75}.
\newblock


\bibitem[Deng and Rafiei(2007)]%
        {cmmsketch}
\bibfield{author}{\bibinfo{person}{Fan Deng} {and} \bibinfo{person}{Davood Rafiei}.} \bibinfo{year}{2007}\natexlab{}.
\newblock \showarticletitle{New estimation algorithms for streaming data: Count-min can do more}.
\newblock \bibinfo{journal}{\emph{Webdocs. Cs. Ualberta. Ca}} (\bibinfo{year}{2007}).
\newblock


\bibitem[Deng et~al\mbox{.}(2021)]%
        {DBLP:conf/wsdm/0002PZKFL21}
\bibfield{author}{\bibinfo{person}{Wei Deng}, \bibinfo{person}{Junwei Pan}, \bibinfo{person}{Tian Zhou}, \bibinfo{person}{Deguang Kong}, \bibinfo{person}{Aaron Flores}, {and} \bibinfo{person}{Guang Lin}.} \bibinfo{year}{2021}\natexlab{}.
\newblock \showarticletitle{DeepLight: Deep Lightweight Feature Interactions for Accelerating {CTR} Predictions in Ad Serving}. In \bibinfo{booktitle}{\emph{Proceedings of the 14th {ACM} International Conference on Web Search and Data Mining (WSDM)}}.
\newblock


\bibitem[Dimitropoulos et~al\mbox{.}(2008)]%
        {proht}
\bibfield{author}{\bibinfo{person}{Xenofontas~A. Dimitropoulos}, \bibinfo{person}{Paul Hurley}, {and} \bibinfo{person}{Andreas Kind}.} \bibinfo{year}{2008}\natexlab{}.
\newblock \showarticletitle{Probabilistic lossy counting: an efficient algorithm for finding heavy hitters}.
\newblock \bibinfo{journal}{\emph{ACM SIGCOMM Computer Communication Review}} \bibinfo{volume}{38}, \bibinfo{number}{1} (\bibinfo{year}{2008}), \bibinfo{pages}{5}.
\newblock


\bibitem[Ding et~al\mbox{.}(2023)]%
        {DBLP:journals/jcst/DingGLLZLPD23}
\bibfield{author}{\bibinfo{person}{Yue Ding}, \bibinfo{person}{Yuhe Guo}, \bibinfo{person}{Wei Lu}, \bibinfo{person}{Hai{-}Xiang Li}, \bibinfo{person}{Meihui Zhang}, \bibinfo{person}{Hui Li}, \bibinfo{person}{An{-}Qun Pan}, {and} \bibinfo{person}{Xiaoyong Du}.} \bibinfo{year}{2023}\natexlab{}.
\newblock \showarticletitle{Context-Aware Semantic Type Identification for Relational Attributes}.
\newblock \bibinfo{journal}{\emph{Journal of Computer Science and Technology}} \bibinfo{volume}{38}, \bibinfo{number}{4} (\bibinfo{year}{2023}), \bibinfo{pages}{927--946}.
\newblock


\bibitem[Ebraheem et~al\mbox{.}(2018)]%
        {DBLP:journals/pvldb/EbraheemTJOT18}
\bibfield{author}{\bibinfo{person}{Muhammad Ebraheem}, \bibinfo{person}{Saravanan Thirumuruganathan}, \bibinfo{person}{Shafiq~R. Joty}, \bibinfo{person}{Mourad Ouzzani}, {and} \bibinfo{person}{Nan Tang}.} \bibinfo{year}{2018}\natexlab{}.
\newblock \showarticletitle{Distributed Representations of Tuples for Entity Resolution}.
\newblock \bibinfo{journal}{\emph{Proceedings of the VLDB Endowment}} \bibinfo{volume}{11}, \bibinfo{number}{11} (\bibinfo{year}{2018}), \bibinfo{pages}{1454--1467}.
\newblock


\bibitem[Estan and Varghese(2002)]%
        {cusketch}
\bibfield{author}{\bibinfo{person}{Cristian Estan} {and} \bibinfo{person}{George Varghese}.} \bibinfo{year}{2002}\natexlab{}.
\newblock \showarticletitle{New directions in traffic measurement and accounting}.
\newblock \bibinfo{journal}{\emph{ACM SIGCOMM Computer Communication Review}} \bibinfo{volume}{32}, \bibinfo{number}{4} (\bibinfo{year}{2002}), \bibinfo{pages}{323--336}.
\newblock


\bibitem[Fan and Chen(2008)]%
        {fan2008efficient}
\bibfield{author}{\bibinfo{person}{Yao{-}Chung Fan} {and} \bibinfo{person}{Arbee L.~P. Chen}.} \bibinfo{year}{2008}\natexlab{}.
\newblock \showarticletitle{Efficient and robust sensor data aggregation using linear counting sketches}. In \bibinfo{booktitle}{\emph{22nd {IEEE} International Symposium on Parallel and Distributed Processing (IPDPS)}}.
\newblock


\bibitem[Fu et~al\mbox{.}(2020)]%
        {DBLP:conf/icml/FuHHJSZ020}
\bibfield{author}{\bibinfo{person}{Fangcheng Fu}, \bibinfo{person}{Yuzheng Hu}, \bibinfo{person}{Yihan He}, \bibinfo{person}{Jiawei Jiang}, \bibinfo{person}{Yingxia Shao}, \bibinfo{person}{Ce Zhang}, {and} \bibinfo{person}{Bin Cui}.} \bibinfo{year}{2020}\natexlab{}.
\newblock \showarticletitle{Don't Waste Your Bits! Squeeze Activations and Gradients for Deep Neural Networks via TinyScript}. In \bibinfo{booktitle}{\emph{Proceedings of the 37th International Conference on Machine Learning (ICML)}}.
\newblock


\bibitem[Ginart et~al\mbox{.}(2021)]%
        {DBLP:conf/isit/GinartNMYZ21}
\bibfield{author}{\bibinfo{person}{Antonio~A. Ginart}, \bibinfo{person}{Maxim Naumov}, \bibinfo{person}{Dheevatsa Mudigere}, \bibinfo{person}{Jiyan Yang}, {and} \bibinfo{person}{James Zou}.} \bibinfo{year}{2021}\natexlab{}.
\newblock \showarticletitle{Mixed Dimension Embeddings with Application to Memory-Efficient Recommendation Systems}. In \bibinfo{booktitle}{\emph{{IEEE} International Symposium on Information Theory (ISIT)}}.
\newblock


\bibitem[Gopal(2016)]%
        {DBLP:conf/icml/Gopal16}
\bibfield{author}{\bibinfo{person}{Siddharth Gopal}.} \bibinfo{year}{2016}\natexlab{}.
\newblock \showarticletitle{Adaptive Sampling for {SGD} by Exploiting Side Information}. In \bibinfo{booktitle}{\emph{Proceedings of the 33nd International Conference on Machine Learning (ICML)}}.
\newblock


\bibitem[Guo et~al\mbox{.}(2017)]%
        {DBLP:conf/ijcai/GuoTYLH17}
\bibfield{author}{\bibinfo{person}{Huifeng Guo}, \bibinfo{person}{Ruiming Tang}, \bibinfo{person}{Yunming Ye}, \bibinfo{person}{Zhenguo Li}, {and} \bibinfo{person}{Xiuqiang He}.} \bibinfo{year}{2017}\natexlab{}.
\newblock \showarticletitle{DeepFM: {A} Factorization-Machine based Neural Network for {CTR} Prediction}. In \bibinfo{booktitle}{\emph{Proceedings of the 26th International Joint Conference on Artificial Intelligence (IJCAI)}}.
\newblock


\bibitem[Gupta et~al\mbox{.}(2020a)]%
        {DBLP:conf/isca/GuptaHSWRWL0W20}
\bibfield{author}{\bibinfo{person}{Udit Gupta}, \bibinfo{person}{Samuel Hsia}, \bibinfo{person}{Vikram Saraph}, \bibinfo{person}{Xiaodong Wang}, \bibinfo{person}{Brandon Reagen}, \bibinfo{person}{Gu{-}Yeon Wei}, \bibinfo{person}{Hsien{-}Hsin~S. Lee}, \bibinfo{person}{David Brooks}, {and} \bibinfo{person}{Carole{-}Jean Wu}.} \bibinfo{year}{2020}\natexlab{a}.
\newblock \showarticletitle{DeepRecSys: {A} System for Optimizing End-To-End At-Scale Neural Recommendation Inference}. In \bibinfo{booktitle}{\emph{Proceedings of the 47th Annual International Symposium on Computer Architecture (ISCA)}}.
\newblock


\bibitem[Gupta et~al\mbox{.}(2020b)]%
        {DBLP:conf/hpca/GuptaWWNR0CHHJL20}
\bibfield{author}{\bibinfo{person}{Udit Gupta}, \bibinfo{person}{Carole{-}Jean Wu}, \bibinfo{person}{Xiaodong Wang}, \bibinfo{person}{Maxim Naumov}, \bibinfo{person}{Brandon Reagen}, \bibinfo{person}{David Brooks}, \bibinfo{person}{Bradford Cottel}, \bibinfo{person}{Kim~M. Hazelwood}, \bibinfo{person}{Mark Hempstead}, \bibinfo{person}{Bill Jia}, \bibinfo{person}{Hsien{-}Hsin~S. Lee}, \bibinfo{person}{Andrey Malevich}, \bibinfo{person}{Dheevatsa Mudigere}, \bibinfo{person}{Mikhail Smelyanskiy}, \bibinfo{person}{Liang Xiong}, {and} \bibinfo{person}{Xuan Zhang}.} \bibinfo{year}{2020}\natexlab{b}.
\newblock \showarticletitle{The Architectural Implications of Facebook's DNN-Based Personalized Recommendation}. In \bibinfo{booktitle}{\emph{{IEEE} International Symposium on High Performance Computer Architecture (HPCA)}}.
\newblock


\bibitem[Han et~al\mbox{.}(2023)]%
        {DBLP:journals/jcst/HanWN23}
\bibfield{author}{\bibinfo{person}{Teng{-}Yue Han}, \bibinfo{person}{Pengfei Wang}, {and} \bibinfo{person}{Shaozhang Niu}.} \bibinfo{year}{2023}\natexlab{}.
\newblock \showarticletitle{Multimodal Interactive Network for Sequential Recommendation}.
\newblock \bibinfo{journal}{\emph{Journal of Computer Science and Technology}} \bibinfo{volume}{38}, \bibinfo{number}{4} (\bibinfo{year}{2023}), \bibinfo{pages}{911--926}.
\newblock


\bibitem[Huang et~al\mbox{.}(2020)]%
        {DBLP:journals/pvldb/HuangSLPKY20}
\bibfield{author}{\bibinfo{person}{Ruihong Huang}, \bibinfo{person}{Shaoxu Song}, \bibinfo{person}{Yunsu Lee}, \bibinfo{person}{Jungho Park}, \bibinfo{person}{Soo{-}Hyung Kim}, {and} \bibinfo{person}{Sungmin Yi}.} \bibinfo{year}{2020}\natexlab{}.
\newblock \showarticletitle{Effective and Efficient Retrieval of Structured Entities}.
\newblock \bibinfo{journal}{\emph{Proceedings of the VLDB Endowment}} \bibinfo{volume}{13}, \bibinfo{number}{6} (\bibinfo{year}{2020}), \bibinfo{pages}{826--839}.
\newblock


\bibitem[Izenov et~al\mbox{.}(2021)]%
        {izenov2021compass}
\bibfield{author}{\bibinfo{person}{Yesdaulet Izenov}, \bibinfo{person}{Asoke Datta}, \bibinfo{person}{Florin Rusu}, {and} \bibinfo{person}{Jun~Hyung Shin}.} \bibinfo{year}{2021}\natexlab{}.
\newblock \showarticletitle{{COMPASS:} Online Sketch-based Query Optimization for In-Memory Databases}. In \bibinfo{booktitle}{\emph{Proceedings of the International Conference on Management of Data (SIGMOD)}}.
\newblock


\bibitem[Jiang et~al\mbox{.}(2019)]%
        {jiang2019xdl}
\bibfield{author}{\bibinfo{person}{Biye Jiang}, \bibinfo{person}{Chao Deng}, \bibinfo{person}{Huimin Yi}, \bibinfo{person}{Zelin Hu}, \bibinfo{person}{Guorui Zhou}, \bibinfo{person}{Yang Zheng}, \bibinfo{person}{Sui Huang}, \bibinfo{person}{Xinyang Guo}, \bibinfo{person}{Dongyue Wang}, \bibinfo{person}{Yue Song}, \bibinfo{person}{Liqin Zhao}, \bibinfo{person}{Zhi Wang}, \bibinfo{person}{Peng Sun}, \bibinfo{person}{Yu Zhang}, \bibinfo{person}{Di Zhang}, \bibinfo{person}{Jinhui Li}, \bibinfo{person}{Jian Xu}, \bibinfo{person}{Xiaoqiang Zhu}, {and} \bibinfo{person}{Kun Gai}.} \bibinfo{year}{2019}\natexlab{}.
\newblock \showarticletitle{Xdl: an industrial deep learning framework for high-dimensional sparse data}. In \bibinfo{booktitle}{\emph{Proceedings of the 1st International Workshop on Deep Learning Practice for High-Dimensional Sparse Data}}.
\newblock


\bibitem[Katharopoulos and Fleuret(2018)]%
        {DBLP:conf/icml/KatharopoulosF18}
\bibfield{author}{\bibinfo{person}{Angelos Katharopoulos} {and} \bibinfo{person}{Fran{\c{c}}ois Fleuret}.} \bibinfo{year}{2018}\natexlab{}.
\newblock \showarticletitle{Not All Samples Are Created Equal: Deep Learning with Importance Sampling}. In \bibinfo{booktitle}{\emph{Proceedings of the 35th International Conference on Machine Learning (ICML)}}.
\newblock


\bibitem[Kim et~al\mbox{.}(2020)]%
        {DBLP:journals/pvldb/KimSHL20}
\bibfield{author}{\bibinfo{person}{Hyeonji Kim}, \bibinfo{person}{Byeong{-}Hoon So}, \bibinfo{person}{Wook{-}Shin Han}, {and} \bibinfo{person}{Hongrae Lee}.} \bibinfo{year}{2020}\natexlab{}.
\newblock \showarticletitle{Natural language to {SQL:} Where are we today?}
\newblock \bibinfo{journal}{\emph{Proceedings of the VLDB Endowment}} \bibinfo{volume}{13}, \bibinfo{number}{10} (\bibinfo{year}{2020}), \bibinfo{pages}{1737--1750}.
\newblock


\bibitem[Kingma and Ba(2015)]%
        {DBLP:journals/corr/KingmaB14}
\bibfield{author}{\bibinfo{person}{Diederik~P. Kingma} {and} \bibinfo{person}{Jimmy Ba}.} \bibinfo{year}{2015}\natexlab{}.
\newblock \showarticletitle{Adam: {A} Method for Stochastic Optimization}. In \bibinfo{booktitle}{\emph{3rd International Conference on Learning Representations (ICLR)}}.
\newblock


\bibitem[Kochsiek and Gemulla(2021)]%
        {DBLP:journals/pvldb/KochsiekG21}
\bibfield{author}{\bibinfo{person}{Adrian Kochsiek} {and} \bibinfo{person}{Rainer Gemulla}.} \bibinfo{year}{2021}\natexlab{}.
\newblock \showarticletitle{Parallel Training of Knowledge Graph Embedding Models: {A} Comparison of Techniques}.
\newblock \bibinfo{journal}{\emph{Proceedings of the VLDB Endowment}} \bibinfo{volume}{15}, \bibinfo{number}{3} (\bibinfo{year}{2021}), \bibinfo{pages}{633--645}.
\newblock


\bibitem[Kong et~al\mbox{.}(2023)]%
        {kong2022autosrh}
\bibfield{author}{\bibinfo{person}{Shuming Kong}, \bibinfo{person}{Weiyu Cheng}, \bibinfo{person}{Yanyan Shen}, {and} \bibinfo{person}{Linpeng Huang}.} \bibinfo{year}{2023}\natexlab{}.
\newblock \showarticletitle{AutoSrh: An Embedding Dimensionality Search Framework for Tabular Data Prediction}.
\newblock \bibinfo{journal}{\emph{IEEE Transactions on Knowledge and Data Engineering}} \bibinfo{volume}{35}, \bibinfo{number}{7} (\bibinfo{year}{2023}), \bibinfo{pages}{6673--6686}.
\newblock


\bibitem[Kwon et~al\mbox{.}(2022)]%
        {DBLP:journals/pvldb/KwonJS22}
\bibfield{author}{\bibinfo{person}{Suyong Kwon}, \bibinfo{person}{Woohwan Jung}, {and} \bibinfo{person}{Kyuseok Shim}.} \bibinfo{year}{2022}\natexlab{}.
\newblock \showarticletitle{Cardinality Estimation of Approximate Substring Queries using Deep Learning}.
\newblock \bibinfo{journal}{\emph{Proceedings of the VLDB Endowment}} \bibinfo{volume}{15}, \bibinfo{number}{11} (\bibinfo{year}{2022}), \bibinfo{pages}{3145--3157}.
\newblock


\bibitem[Labs(2013)]%
        {criteotb}
\bibfield{author}{\bibinfo{person}{Criteo Labs}.} \bibinfo{year}{2013}\natexlab{}.
\newblock \bibinfo{title}{Download Terabyte Click Logs}.
\newblock \bibinfo{howpublished}{\url{https://labs.criteo.com/2013/12/download-terabyte-click-logs/}}.
\newblock


\bibitem[Labs(2014)]%
        {criteokaggle}
\bibfield{author}{\bibinfo{person}{Criteo Labs}.} \bibinfo{year}{2014}\natexlab{}.
\newblock \bibinfo{title}{Kaggle display advertising challenge dataset}.
\newblock \bibinfo{howpublished}{\url{https://labs.criteo.com/2014/02/kaggle-display-advertising-challenge-dataset/}}.
\newblock


\bibitem[Lai et~al\mbox{.}(2023)]%
        {lai2023adaembed}
\bibfield{author}{\bibinfo{person}{Fan Lai}, \bibinfo{person}{Wei Zhang}, \bibinfo{person}{Rui Liu}, \bibinfo{person}{William Tsai}, \bibinfo{person}{Xiaohan Wei}, \bibinfo{person}{Yuxi Hu}, \bibinfo{person}{Sabin Devkota}, \bibinfo{person}{Jianyu Huang}, \bibinfo{person}{Jongsoo Park}, \bibinfo{person}{Xing Liu}, \bibinfo{person}{Zeliang Chen}, \bibinfo{person}{Ellie Wen}, \bibinfo{person}{Paul Rivera}, \bibinfo{person}{Jie You}, \bibinfo{person}{Chun{-}cheng~Jason Chen}, {and} \bibinfo{person}{Mosharaf Chowdhury}.} \bibinfo{year}{2023}\natexlab{}.
\newblock \showarticletitle{AdaEmbed: Adaptive Embedding for Large-Scale Recommendation Models}. In \bibinfo{booktitle}{\emph{17th {USENIX} Symposium on Operating Systems Design and Implementation (OSDI)}}.
\newblock


\bibitem[Li et~al\mbox{.}(2020)]%
        {wavingsketch}
\bibfield{author}{\bibinfo{person}{Jizhou Li}, \bibinfo{person}{Zikun Li}, \bibinfo{person}{Yifei Xu}, \bibinfo{person}{Shiqi Jiang}, \bibinfo{person}{Tong Yang}, \bibinfo{person}{Bin Cui}, \bibinfo{person}{Yafei Dai}, {and} \bibinfo{person}{Gong Zhang}.} \bibinfo{year}{2020}\natexlab{}.
\newblock \showarticletitle{WavingSketch: An Unbiased and Generic Sketch for Finding Top-k Items in Data Streams}. In \bibinfo{booktitle}{\emph{Proceedings of the 26th {ACM} {SIGKDD} Conference on Knowledge Discovery {\&} Data Mining (KDD)}}.
\newblock


\bibitem[Li et~al\mbox{.}(2023)]%
        {DBLP:journals/corr/abs-2212-05735}
\bibfield{author}{\bibinfo{person}{Shiwei Li}, \bibinfo{person}{Huifeng Guo}, \bibinfo{person}{Lu Hou}, \bibinfo{person}{Wei Zhang}, \bibinfo{person}{Xing Tang}, \bibinfo{person}{Ruiming Tang}, \bibinfo{person}{Rui Zhang}, {and} \bibinfo{person}{Ruixuan Li}.} \bibinfo{year}{2023}\natexlab{}.
\newblock \showarticletitle{Adaptive Low-Precision Training for Embeddings in Click-Through Rate Prediction}. In \bibinfo{booktitle}{\emph{Proceedings of the {AAAI} Conference on Artificial Intelligence (AAAI)}}.
\newblock


\bibitem[Li et~al\mbox{.}(2012)]%
        {csmsketch}
\bibfield{author}{\bibinfo{person}{Tao Li}, \bibinfo{person}{Shigang Chen}, {and} \bibinfo{person}{Yibei Ling}.} \bibinfo{year}{2012}\natexlab{}.
\newblock \showarticletitle{Per-Flow Traffic Measurement Through Randomized Counter Sharing}.
\newblock \bibinfo{journal}{\emph{{IEEE/ACM} Transactions on Networking}} \bibinfo{volume}{20}, \bibinfo{number}{5} (\bibinfo{year}{2012}), \bibinfo{pages}{1622--1634}.
\newblock


\bibitem[Lian et~al\mbox{.}(2022)]%
        {DBLP:conf/kdd/LianYZWHWSLLDLL22}
\bibfield{author}{\bibinfo{person}{Xiangru Lian}, \bibinfo{person}{Binhang Yuan}, \bibinfo{person}{Xuefeng Zhu}, \bibinfo{person}{Yulong Wang}, \bibinfo{person}{Yongjun He}, \bibinfo{person}{Honghuan Wu}, \bibinfo{person}{Lei Sun}, \bibinfo{person}{Haodong Lyu}, \bibinfo{person}{Chengjun Liu}, \bibinfo{person}{Xing Dong}, \bibinfo{person}{Yiqiao Liao}, \bibinfo{person}{Mingnan Luo}, \bibinfo{person}{Congfei Zhang}, \bibinfo{person}{Jingru Xie}, \bibinfo{person}{Haonan Li}, \bibinfo{person}{Lei Chen}, \bibinfo{person}{Renjie Huang}, \bibinfo{person}{Jianying Lin}, \bibinfo{person}{Chengchun Shu}, \bibinfo{person}{Xuezhong Qiu}, \bibinfo{person}{Zhishan Liu}, \bibinfo{person}{Dongying Kong}, \bibinfo{person}{Lei Yuan}, \bibinfo{person}{Hai Yu}, \bibinfo{person}{Sen Yang}, \bibinfo{person}{Ce Zhang}, {and} \bibinfo{person}{Ji Liu}.} \bibinfo{year}{2022}\natexlab{}.
\newblock \showarticletitle{Persia: An Open, Hybrid System Scaling Deep Learning-based Recommenders up to 100 Trillion Parameters}. In \bibinfo{booktitle}{\emph{Proceedings of the 28th ACM SIGKDD Conference on Knowledge Discovery \& Data Mining (KDD)}}.
\newblock


\bibitem[Liu et~al\mbox{.}(2021)]%
        {DBLP:journals/pvldb/LiuD0Z21}
\bibfield{author}{\bibinfo{person}{Jie Liu}, \bibinfo{person}{Wenqian Dong}, \bibinfo{person}{Dong Li}, {and} \bibinfo{person}{Qingqing Zhou}.} \bibinfo{year}{2021}\natexlab{}.
\newblock \showarticletitle{Fauce: Fast and Accurate Deep Ensembles with Uncertainty for Cardinality Estimation}.
\newblock \bibinfo{journal}{\emph{Proceedings of the VLDB Endowment}} \bibinfo{volume}{14}, \bibinfo{number}{11} (\bibinfo{year}{2021}), \bibinfo{pages}{1950--1963}.
\newblock


\bibitem[Liu et~al\mbox{.}(2023a)]%
        {liu2023hypercalm}
\bibfield{author}{\bibinfo{person}{Zirui Liu}, \bibinfo{person}{Chaozhe Kong}, \bibinfo{person}{Kaicheng Yang}, \bibinfo{person}{Tong Yang}, \bibinfo{person}{Ruijie Miao}, \bibinfo{person}{Qizhi Chen}, \bibinfo{person}{Yikai Zhao}, \bibinfo{person}{Yaofeng Tu}, {and} \bibinfo{person}{Bin Cui}.} \bibinfo{year}{2023}\natexlab{a}.
\newblock \showarticletitle{HyperCalm Sketch: One-Pass Mining Periodic Batches in Data Streams}. In \bibinfo{booktitle}{\emph{39th {IEEE} International Conference on Data Engineering (ICDE)}}.
\newblock


\bibitem[Liu et~al\mbox{.}(2023b)]%
        {DBLP:journals/pacmmod/LiuZZZ0XWL023}
\bibfield{author}{\bibinfo{person}{Zirui Liu}, \bibinfo{person}{Yixin Zhang}, \bibinfo{person}{Yifan Zhu}, \bibinfo{person}{Ruwen Zhang}, \bibinfo{person}{Tong Yang}, \bibinfo{person}{Kun Xie}, \bibinfo{person}{Sha Wang}, \bibinfo{person}{Tao Li}, {and} \bibinfo{person}{Bin Cui}.} \bibinfo{year}{2023}\natexlab{b}.
\newblock \showarticletitle{TreeSensing: Linearly Compressing Sketches with Flexibility}. In \bibinfo{booktitle}{\emph{Proceedings of the International Conference on Management of Data (SIGMOD)}}.
\newblock


\bibitem[Lyu et~al\mbox{.}(2022)]%
        {DBLP:conf/cikm/Lyu0ZG0TL22}
\bibfield{author}{\bibinfo{person}{Fuyuan Lyu}, \bibinfo{person}{Xing Tang}, \bibinfo{person}{Hong Zhu}, \bibinfo{person}{Huifeng Guo}, \bibinfo{person}{Yingxue Zhang}, \bibinfo{person}{Ruiming Tang}, {and} \bibinfo{person}{Xue Liu}.} \bibinfo{year}{2022}\natexlab{}.
\newblock \showarticletitle{OptEmbed: Learning Optimal Embedding Table for Click-through Rate Prediction}. In \bibinfo{booktitle}{\emph{Proceedings of the 31st {ACM} International Conference on Information {\&} Knowledge Management (CIKM)}}.
\newblock


\bibitem[Mandal et~al\mbox{.}(2018)]%
        {mandal2018topkapi}
\bibfield{author}{\bibinfo{person}{Ankush Mandal}, \bibinfo{person}{He Jiang}, \bibinfo{person}{Anshumali Shrivastava}, {and} \bibinfo{person}{Vivek Sarkar}.} \bibinfo{year}{2018}\natexlab{}.
\newblock \showarticletitle{Topkapi: Parallel and Fast Sketches for Finding Top-K Frequent Elements}. In \bibinfo{booktitle}{\emph{Advances in Neural Information Processing Systems 31 (NeurIPS)}}.
\newblock


\bibitem[Meng et~al\mbox{.}(2023)]%
        {DBLP:journals/dase/MengHZWZ23}
\bibfield{author}{\bibinfo{person}{Xiangfu Meng}, \bibinfo{person}{Hongjin Huo}, \bibinfo{person}{Xiaoyan Zhang}, \bibinfo{person}{Wanchun Wang}, {and} \bibinfo{person}{Jinxia Zhu}.} \bibinfo{year}{2023}\natexlab{}.
\newblock \showarticletitle{A Survey of Personalized News Recommendation}.
\newblock \bibinfo{journal}{\emph{Data Science and Engineering}} \bibinfo{volume}{8}, \bibinfo{number}{4} (\bibinfo{year}{2023}), \bibinfo{pages}{396--416}.
\newblock


\bibitem[Metwally et~al\mbox{.}(2005)]%
        {spacesaving}
\bibfield{author}{\bibinfo{person}{Ahmed Metwally}, \bibinfo{person}{Divyakant Agrawal}, {and} \bibinfo{person}{Amr~El Abbadi}.} \bibinfo{year}{2005}\natexlab{}.
\newblock \showarticletitle{Efficient Computation of Frequent and Top-k Elements in Data Streams}. In \bibinfo{booktitle}{\emph{International Conference on Database Theory}}.
\newblock


\bibitem[Miao et~al\mbox{.}(2023)]%
        {DBLP:journals/chinaf/MiaoNZZC23}
\bibfield{author}{\bibinfo{person}{Xupeng Miao}, \bibinfo{person}{Xiaonan Nie}, \bibinfo{person}{Hailin Zhang}, \bibinfo{person}{Tong Zhao}, {and} \bibinfo{person}{Bin Cui}.} \bibinfo{year}{2023}\natexlab{}.
\newblock \showarticletitle{Hetu: a highly efficient automatic parallel distributed deep learning system}.
\newblock \bibinfo{journal}{\emph{Science China Information Sciences}} \bibinfo{volume}{66}, \bibinfo{number}{1} (\bibinfo{year}{2023}).
\newblock


\bibitem[Miao et~al\mbox{.}(2022a)]%
        {DBLP:conf/sigmod/MiaoSZZNY022}
\bibfield{author}{\bibinfo{person}{Xupeng Miao}, \bibinfo{person}{Yining Shi}, \bibinfo{person}{Hailin Zhang}, \bibinfo{person}{Xin Zhang}, \bibinfo{person}{Xiaonan Nie}, \bibinfo{person}{Zhi Yang}, {and} \bibinfo{person}{Bin Cui}.} \bibinfo{year}{2022}\natexlab{a}.
\newblock \showarticletitle{{HET-GMP:} {A} Graph-based System Approach to Scaling Large Embedding Model Training}. In \bibinfo{booktitle}{\emph{Proceedings of the International Conference on Management of Data (SIGMOD)}}.
\newblock


\bibitem[Miao et~al\mbox{.}(2022b)]%
        {DBLP:journals/pvldb/MiaoZSNYTC21}
\bibfield{author}{\bibinfo{person}{Xupeng Miao}, \bibinfo{person}{Hailin Zhang}, \bibinfo{person}{Yining Shi}, \bibinfo{person}{Xiaonan Nie}, \bibinfo{person}{Zhi Yang}, \bibinfo{person}{Yangyu Tao}, {and} \bibinfo{person}{Bin Cui}.} \bibinfo{year}{2022}\natexlab{b}.
\newblock \showarticletitle{{HET:} Scaling out Huge Embedding Model Training via Cache-enabled Distributed Framework}.
\newblock \bibinfo{journal}{\emph{Proceedings of the VLDB Endowment}} \bibinfo{volume}{15}, \bibinfo{number}{2} (\bibinfo{year}{2022}), \bibinfo{pages}{312--320}.
\newblock


\bibitem[Mudigere et~al\mbox{.}(2022)]%
        {DBLP:conf/isca/MudigereHHJT0LO22}
\bibfield{author}{\bibinfo{person}{Dheevatsa Mudigere}, \bibinfo{person}{Yuchen Hao}, \bibinfo{person}{Jianyu Huang}, \bibinfo{person}{Zhihao Jia}, \bibinfo{person}{Andrew Tulloch}, \bibinfo{person}{Srinivas Sridharan}, \bibinfo{person}{Xing Liu}, \bibinfo{person}{Mustafa Ozdal}, \bibinfo{person}{Jade Nie}, \bibinfo{person}{Jongsoo Park}, \bibinfo{person}{Liang Luo}, \bibinfo{person}{Jie~Amy Yang}, \bibinfo{person}{Leon Gao}, \bibinfo{person}{Dmytro Ivchenko}, \bibinfo{person}{Aarti Basant}, \bibinfo{person}{Yuxi Hu}, \bibinfo{person}{Jiyan Yang}, \bibinfo{person}{Ehsan~K. Ardestani}, \bibinfo{person}{Xiaodong Wang}, \bibinfo{person}{Rakesh Komuravelli}, \bibinfo{person}{Ching{-}Hsiang Chu}, \bibinfo{person}{Serhat Yilmaz}, \bibinfo{person}{Huayu Li}, \bibinfo{person}{Jiyuan Qian}, \bibinfo{person}{Zhuobo Feng}, \bibinfo{person}{Yinbin Ma}, \bibinfo{person}{Junjie Yang}, \bibinfo{person}{Ellie Wen}, \bibinfo{person}{Hong Li}, \bibinfo{person}{Lin Yang}, \bibinfo{person}{Chonglin Sun}, \bibinfo{person}{Whitney
  Zhao}, \bibinfo{person}{Dimitry Melts}, \bibinfo{person}{Krishna Dhulipala}, \bibinfo{person}{K.~R. Kishore}, \bibinfo{person}{Tyler Graf}, \bibinfo{person}{Assaf Eisenman}, \bibinfo{person}{Kiran~Kumar Matam}, \bibinfo{person}{Adi Gangidi}, \bibinfo{person}{Guoqiang~Jerry Chen}, \bibinfo{person}{Manoj Krishnan}, \bibinfo{person}{Avinash Nayak}, \bibinfo{person}{Krishnakumar Nair}, \bibinfo{person}{Bharath Muthiah}, \bibinfo{person}{Mahmoud khorashadi}, \bibinfo{person}{Pallab Bhattacharya}, \bibinfo{person}{Petr Lapukhov}, \bibinfo{person}{Maxim Naumov}, \bibinfo{person}{Ajit Mathews}, \bibinfo{person}{Lin Qiao}, \bibinfo{person}{Mikhail Smelyanskiy}, \bibinfo{person}{Bill Jia}, {and} \bibinfo{person}{Vijay Rao}.} \bibinfo{year}{2022}\natexlab{}.
\newblock \showarticletitle{Software-hardware co-design for fast and scalable training of deep learning recommendation models}. In \bibinfo{booktitle}{\emph{Proceedings of the 49th Annual International Symposium on Computer Architecture (ISCA)}}.
\newblock


\bibitem[Naumov et~al\mbox{.}(2020)]%
        {DBLP:journals/corr/abs-2003-09518}
\bibfield{author}{\bibinfo{person}{Maxim Naumov}, \bibinfo{person}{John Kim}, \bibinfo{person}{Dheevatsa Mudigere}, \bibinfo{person}{Srinivas Sridharan}, \bibinfo{person}{Xiaodong Wang}, \bibinfo{person}{Whitney Zhao}, \bibinfo{person}{Serhat Yilmaz}, \bibinfo{person}{Changkyu Kim}, \bibinfo{person}{Hector Yuen}, \bibinfo{person}{Mustafa Ozdal}, \bibinfo{person}{Krishnakumar Nair}, \bibinfo{person}{Isabel Gao}, \bibinfo{person}{Bor{-}Yiing Su}, \bibinfo{person}{Jiyan Yang}, {and} \bibinfo{person}{Mikhail Smelyanskiy}.} \bibinfo{year}{2020}\natexlab{}.
\newblock \showarticletitle{Deep Learning Training in Facebook Data Centers: Design of Scale-up and Scale-out Systems}.
\newblock \bibinfo{journal}{\emph{CoRR}}  \bibinfo{volume}{abs/2003.09518} (\bibinfo{year}{2020}).
\newblock


\bibitem[Naumov et~al\mbox{.}(2019)]%
        {DBLP:journals/corr/abs-1906-00091}
\bibfield{author}{\bibinfo{person}{Maxim Naumov}, \bibinfo{person}{Dheevatsa Mudigere}, \bibinfo{person}{Hao{-}Jun~Michael Shi}, \bibinfo{person}{Jianyu Huang}, \bibinfo{person}{Narayanan Sundaraman}, \bibinfo{person}{Jongsoo Park}, \bibinfo{person}{Xiaodong Wang}, \bibinfo{person}{Udit Gupta}, \bibinfo{person}{Carole{-}Jean Wu}, \bibinfo{person}{Alisson~G. Azzolini}, \bibinfo{person}{Dmytro Dzhulgakov}, \bibinfo{person}{Andrey Mallevich}, \bibinfo{person}{Ilia Cherniavskii}, \bibinfo{person}{Yinghai Lu}, \bibinfo{person}{Raghuraman Krishnamoorthi}, \bibinfo{person}{Ansha Yu}, \bibinfo{person}{Volodymyr Kondratenko}, \bibinfo{person}{Stephanie Pereira}, \bibinfo{person}{Xianjie Chen}, \bibinfo{person}{Wenlin Chen}, \bibinfo{person}{Vijay Rao}, \bibinfo{person}{Bill Jia}, \bibinfo{person}{Liang Xiong}, {and} \bibinfo{person}{Misha Smelyanskiy}.} \bibinfo{year}{2019}\natexlab{}.
\newblock \showarticletitle{Deep Learning Recommendation Model for Personalization and Recommendation Systems}.
\newblock \bibinfo{journal}{\emph{CoRR}}  \bibinfo{volume}{abs/1906.00091} (\bibinfo{year}{2019}).
\newblock


\bibitem[Pansare et~al\mbox{.}(2022)]%
        {DBLP:conf/mlsys/PansareKACSTV22}
\bibfield{author}{\bibinfo{person}{Niketan Pansare}, \bibinfo{person}{Jay Katukuri}, \bibinfo{person}{Aditya Arora}, \bibinfo{person}{Frank Cipollone}, \bibinfo{person}{Riyaaz Shaik}, \bibinfo{person}{Noyan Tokgozoglu}, {and} \bibinfo{person}{Chandru Venkataraman}.} \bibinfo{year}{2022}\natexlab{}.
\newblock \showarticletitle{Learning Compressed Embeddings for On-Device Inference}. In \bibinfo{booktitle}{\emph{Proceedings of Machine Learning and Systems (MLSys)}}.
\newblock


\bibitem[Pitel and Fouquier(2015)]%
        {cmlsketch}
\bibfield{author}{\bibinfo{person}{Guillaume Pitel} {and} \bibinfo{person}{Geoffroy Fouquier}.} \bibinfo{year}{2015}\natexlab{}.
\newblock \showarticletitle{Count-Min-Log sketch: Approximately counting with approximate counters}. In \bibinfo{booktitle}{\emph{International Symposium on Web AlGorithms}}.
\newblock


\bibitem[platform(2020)]%
        {mlperf}
\bibfield{author}{\bibinfo{person}{NVIDIA~AI platform}.} \bibinfo{year}{2020}\natexlab{}.
\newblock \bibinfo{title}{MLPerf Benchmark}.
\newblock \bibinfo{howpublished}{\url{https://mlperf.org}}.
\newblock


\bibitem[Roy et~al\mbox{.}(2016)]%
        {asketch}
\bibfield{author}{\bibinfo{person}{Pratanu Roy}, \bibinfo{person}{Arijit Khan}, {and} \bibinfo{person}{Gustavo Alonso}.} \bibinfo{year}{2016}\natexlab{}.
\newblock \showarticletitle{Augmented Sketch: Faster and More Accurate Stream Processing}. In \bibinfo{booktitle}{\emph{Proceedings of the International Conference on Management of Data (SIGMOD)}}.
\newblock


\bibitem[Shao et~al\mbox{.}(2022)]%
        {DBLP:journals/chinaf/ShaoWCZW22}
\bibfield{author}{\bibinfo{person}{Pengyang Shao}, \bibinfo{person}{Le Wu}, \bibinfo{person}{Lei Chen}, \bibinfo{person}{Kun Zhang}, {and} \bibinfo{person}{Meng Wang}.} \bibinfo{year}{2022}\natexlab{}.
\newblock \showarticletitle{FairCF: fairness-aware collaborative filtering}.
\newblock \bibinfo{journal}{\emph{Science China Information Sciences}} \bibinfo{volume}{65}, \bibinfo{number}{12} (\bibinfo{year}{2022}).
\newblock


\bibitem[Shi et~al\mbox{.}(2021)]%
        {shi2021time}
\bibfield{author}{\bibinfo{person}{Benwei Shi}, \bibinfo{person}{Zhuoyue Zhao}, \bibinfo{person}{Yanqing Peng}, \bibinfo{person}{Feifei Li}, {and} \bibinfo{person}{Jeff~M. Phillips}.} \bibinfo{year}{2021}\natexlab{}.
\newblock \showarticletitle{At-the-time and Back-in-time Persistent Sketches}. In \bibinfo{booktitle}{\emph{Proceedings of the International Conference on Management of Data (SIGMOD)}}.
\newblock


\bibitem[Shi et~al\mbox{.}(2020)]%
        {DBLP:conf/kdd/ShiMNY20}
\bibfield{author}{\bibinfo{person}{Hao{-}Jun~Michael Shi}, \bibinfo{person}{Dheevatsa Mudigere}, \bibinfo{person}{Maxim Naumov}, {and} \bibinfo{person}{Jiyan Yang}.} \bibinfo{year}{2020}\natexlab{}.
\newblock \showarticletitle{Compositional Embeddings Using Complementary Partitions for Memory-Efficient Recommendation Systems}. In \bibinfo{booktitle}{\emph{Proceedings of the 26th {ACM} {SIGKDD} Conference on Knowledge Discovery \& Data Mining (KDD)}}.
\newblock


\bibitem[Ting(2018)]%
        {DBLP:conf/sigmod/Ting18}
\bibfield{author}{\bibinfo{person}{Daniel Ting}.} \bibinfo{year}{2018}\natexlab{}.
\newblock \showarticletitle{Data Sketches for Disaggregated Subset Sum and Frequent Item Estimation}. In \bibinfo{booktitle}{\emph{Proceedings of the International Conference on Management of Data (SIGMOD)}}.
\newblock


\bibitem[Underwood(2019)]%
        {underwood2019use}
\bibfield{author}{\bibinfo{person}{Corinna Underwood}.} \bibinfo{year}{2019}\natexlab{}.
\newblock \showarticletitle{Use cases of recommendation systems in business--current applications and methods}.
\newblock \bibinfo{journal}{\emph{Emerj}} (\bibinfo{year}{2019}).
\newblock
\urldef\tempurl%
\url{https://emerj.com/ai-sector-overviews/use-cases-recommendation-systems/}
\showURL{%
\tempurl}


\bibitem[Wang et~al\mbox{.}(2017)]%
        {DBLP:conf/kdd/WangFFW17}
\bibfield{author}{\bibinfo{person}{Ruoxi Wang}, \bibinfo{person}{Bin Fu}, \bibinfo{person}{Gang Fu}, {and} \bibinfo{person}{Mingliang Wang}.} \bibinfo{year}{2017}\natexlab{}.
\newblock \showarticletitle{Deep {\&} Cross Network for Ad Click Predictions}. In \bibinfo{booktitle}{\emph{Proceedings of the ADKDD'17}}.
\newblock


\bibitem[Wang and Cukierski(2014)]%
        {avazu}
\bibfield{author}{\bibinfo{person}{Steve Wang} {and} \bibinfo{person}{Will Cukierski}.} \bibinfo{year}{2014}\natexlab{}.
\newblock \bibinfo{title}{Avazu Click-Through Rate Prediction}.
\newblock \bibinfo{howpublished}{\url{https://kaggle.com/competitions/avazu-ctr-prediction}}.
\newblock


\bibitem[Wang et~al\mbox{.}(2022)]%
        {DBLP:conf/recsys/WangWLLYLLAGDSL22}
\bibfield{author}{\bibinfo{person}{Zehuan Wang}, \bibinfo{person}{Yingcan Wei}, \bibinfo{person}{Minseok Lee}, \bibinfo{person}{Matthias Langer}, \bibinfo{person}{Fan Yu}, \bibinfo{person}{Jie Liu}, \bibinfo{person}{Shijie Liu}, \bibinfo{person}{Daniel~G. Abel}, \bibinfo{person}{Xu Guo}, \bibinfo{person}{Jianbing Dong}, \bibinfo{person}{Ji Shi}, {and} \bibinfo{person}{Kunlun Li}.} \bibinfo{year}{2022}\natexlab{}.
\newblock \showarticletitle{Merlin HugeCTR: GPU-accelerated Recommender System Training and Inference}. In \bibinfo{booktitle}{\emph{Proceedings of the 16th {ACM} Conference on Recommender Systems (RecSys)}}.
\newblock


\bibitem[Weinberger et~al\mbox{.}(2009)]%
        {DBLP:conf/icml/WeinbergerDLSA09}
\bibfield{author}{\bibinfo{person}{Kilian~Q. Weinberger}, \bibinfo{person}{Anirban Dasgupta}, \bibinfo{person}{John Langford}, \bibinfo{person}{Alexander~J. Smola}, {and} \bibinfo{person}{Josh Attenberg}.} \bibinfo{year}{2009}\natexlab{}.
\newblock \showarticletitle{Feature hashing for large scale multitask learning}. In \bibinfo{booktitle}{\emph{Proceedings of the 26th International Conference on Machine Learning (ICML)}}.
\newblock


\bibitem[Xie et~al\mbox{.}(2020)]%
        {DBLP:conf/sc/XieRLYXWLAXS20}
\bibfield{author}{\bibinfo{person}{Minhui Xie}, \bibinfo{person}{Kai Ren}, \bibinfo{person}{Youyou Lu}, \bibinfo{person}{Guangxu Yang}, \bibinfo{person}{Qingxing Xu}, \bibinfo{person}{Bihai Wu}, \bibinfo{person}{Jiazhen Lin}, \bibinfo{person}{Hongbo Ao}, \bibinfo{person}{Wanhong Xu}, {and} \bibinfo{person}{Jiwu Shu}.} \bibinfo{year}{2020}\natexlab{}.
\newblock \showarticletitle{Kraken: memory-efficient continual learning for large-scale real-time recommendations}. In \bibinfo{booktitle}{\emph{Proceedings of the International Conference for High Performance Computing, Networking, Storage and Analysis (SC)}}.
\newblock


\bibitem[Xie et~al\mbox{.}(2018)]%
        {xie2018personalized}
\bibfield{author}{\bibinfo{person}{Xing Xie}, \bibinfo{person}{Jianxun Lian}, \bibinfo{person}{Zheng Liu}, \bibinfo{person}{Xiting Wang}, \bibinfo{person}{Fangzhao Wu}, \bibinfo{person}{Hongwei Wang}, {and} \bibinfo{person}{Zhongxia Chen}.} \bibinfo{year}{2018}\natexlab{}.
\newblock \showarticletitle{Personalized recommendation systems: Five hot research topics you must know}.
\newblock \bibinfo{journal}{\emph{Microsoft Research Lab-Asia}} (\bibinfo{year}{2018}).
\newblock
\urldef\tempurl%
\url{https://www.microsoft.com/en-us/research/lab/microsoft-research-asia/articles/personalized-recommendation-systems/}
\showURL{%
\tempurl}


\bibitem[Xu et~al\mbox{.}(2021)]%
        {DBLP:conf/sigmod/XuLZSHL021}
\bibfield{author}{\bibinfo{person}{Zhiqiang Xu}, \bibinfo{person}{Dong Li}, \bibinfo{person}{Weijie Zhao}, \bibinfo{person}{Xing Shen}, \bibinfo{person}{Tianbo Huang}, \bibinfo{person}{Xiaoyun Li}, {and} \bibinfo{person}{Ping Li}.} \bibinfo{year}{2021}\natexlab{}.
\newblock \showarticletitle{Agile and Accurate {CTR} Prediction Model Training for Massive-Scale Online Advertising Systems}. In \bibinfo{booktitle}{\emph{Proceedings of the International Conference on Management of Data (SIGMOD)}}.
\newblock


\bibitem[Yan et~al\mbox{.}(2021)]%
        {DBLP:conf/cikm/YanWLLLXZ21}
\bibfield{author}{\bibinfo{person}{Bencheng Yan}, \bibinfo{person}{Pengjie Wang}, \bibinfo{person}{Jinquan Liu}, \bibinfo{person}{Wei Lin}, \bibinfo{person}{Kuang{-}Chih Lee}, \bibinfo{person}{Jian Xu}, {and} \bibinfo{person}{Bo Zheng}.} \bibinfo{year}{2021}\natexlab{}.
\newblock \showarticletitle{Binary Code based Hash Embedding for Web-scale Applications}. In \bibinfo{booktitle}{\emph{Proceedings of the 30th {ACM} International Conference on Information \& Knowledge Management (CIKM)}}.
\newblock


\bibitem[Yang et~al\mbox{.}(2020)]%
        {DBLP:journals/pvldb/YangSX0LB20}
\bibfield{author}{\bibinfo{person}{Renchi Yang}, \bibinfo{person}{Jieming Shi}, \bibinfo{person}{Xiaokui Xiao}, \bibinfo{person}{Yin Yang}, \bibinfo{person}{Juncheng Liu}, {and} \bibinfo{person}{Sourav~S. Bhowmick}.} \bibinfo{year}{2020}\natexlab{}.
\newblock \showarticletitle{Scaling Attributed Network Embedding to Massive Graphs}.
\newblock \bibinfo{journal}{\emph{Proceedings of the VLDB Endowment}} \bibinfo{volume}{14}, \bibinfo{number}{1} (\bibinfo{year}{2020}), \bibinfo{pages}{37--49}.
\newblock


\bibitem[Yang et~al\mbox{.}(2018a)]%
        {heavyguardian}
\bibfield{author}{\bibinfo{person}{Tong Yang}, \bibinfo{person}{Junzhi Gong}, \bibinfo{person}{Haowei Zhang}, \bibinfo{person}{Lei Zou}, \bibinfo{person}{Lei Shi}, {and} \bibinfo{person}{Xiaoming Li}.} \bibinfo{year}{2018}\natexlab{a}.
\newblock \showarticletitle{HeavyGuardian: Separate and Guard Hot Items in Data Streams}. In \bibinfo{booktitle}{\emph{Proceedings of the 24th {ACM} {SIGKDD} Conference on Knowledge Discovery {\&} Data Mining (KDD)}}.
\newblock


\bibitem[Yang et~al\mbox{.}(2018b)]%
        {yang2018elastic}
\bibfield{author}{\bibinfo{person}{Tong Yang}, \bibinfo{person}{Jie Jiang}, \bibinfo{person}{Peng Liu}, \bibinfo{person}{Qun Huang}, \bibinfo{person}{Junzhi Gong}, \bibinfo{person}{Yang Zhou}, \bibinfo{person}{Rui Miao}, \bibinfo{person}{Xiaoming Li}, {and} \bibinfo{person}{Steve Uhlig}.} \bibinfo{year}{2018}\natexlab{b}.
\newblock \showarticletitle{Elastic sketch: adaptive and fast network-wide measurements}. In \bibinfo{booktitle}{\emph{Proceedings of the 2018 ACM SIGCOMM Conference}}.
\newblock


\bibitem[Yin et~al\mbox{.}(2021)]%
        {DBLP:conf/mlsys/YinAWL21}
\bibfield{author}{\bibinfo{person}{Chunxing Yin}, \bibinfo{person}{Bilge Acun}, \bibinfo{person}{Carole{-}Jean Wu}, {and} \bibinfo{person}{Xing Liu}.} \bibinfo{year}{2021}\natexlab{}.
\newblock \showarticletitle{TT-Rec: Tensor Train Compression for Deep Learning Recommendation Models}. In \bibinfo{booktitle}{\emph{Proceedings of Machine Learning and Systems (MLSys)}}.
\newblock


\bibitem[Yuan et~al\mbox{.}(2023)]%
        {DBLP:journals/dase/YuanZYHX23}
\bibfield{author}{\bibinfo{person}{Zhiyang Yuan}, \bibinfo{person}{Wenguang Zheng}, \bibinfo{person}{Peilin Yang}, \bibinfo{person}{Qingbo Hao}, {and} \bibinfo{person}{Yingyuan Xiao}.} \bibinfo{year}{2023}\natexlab{}.
\newblock \showarticletitle{Evolving Interest with Feature Co-action Network for {CTR} Prediction}.
\newblock \bibinfo{journal}{\emph{Data Science and Engineering}} \bibinfo{volume}{8}, \bibinfo{number}{4} (\bibinfo{year}{2023}), \bibinfo{pages}{344--356}.
\newblock


\bibitem[Zhang et~al\mbox{.}(2020)]%
        {DBLP:conf/recsys/ZhangLXKTGMDPIU20}
\bibfield{author}{\bibinfo{person}{Caojin Zhang}, \bibinfo{person}{Yicun Liu}, \bibinfo{person}{Yuanpu Xie}, \bibinfo{person}{Sofia~Ira Ktena}, \bibinfo{person}{Alykhan Tejani}, \bibinfo{person}{Akshay Gupta}, \bibinfo{person}{Pranay~Kumar Myana}, \bibinfo{person}{Deepak Dilipkumar}, \bibinfo{person}{Suvadip Paul}, \bibinfo{person}{Ikuhiro Ihara}, \bibinfo{person}{Prasang Upadhyaya}, \bibinfo{person}{Ferenc Huszar}, {and} \bibinfo{person}{Wenzhe Shi}.} \bibinfo{year}{2020}\natexlab{}.
\newblock \showarticletitle{Model Size Reduction Using Frequency Based Double Hashing for Recommender Systems}. In \bibinfo{booktitle}{\emph{Proceedings of the 14th {ACM} Conference on Recommender Systems (RecSys)}}.
\newblock


\bibitem[Zhang et~al\mbox{.}(2023a)]%
        {cafecode}
\bibfield{author}{\bibinfo{person}{Hailin Zhang}, \bibinfo{person}{Zirui Liu}, {and} \bibinfo{person}{Boxuan Chen}.} \bibinfo{year}{2023}\natexlab{a}.
\newblock \bibinfo{title}{Source codes related to {CAFE}.}
\newblock \bibinfo{howpublished}{\url{https://github.com/HugoZHL/CAFE}}.
\newblock


\bibitem[Zhang et~al\mbox{.}(2023b)]%
        {DBLP:journals/corr/abs-2311-15578}
\bibfield{author}{\bibinfo{person}{Hailin Zhang}, \bibinfo{person}{Penghao Zhao}, \bibinfo{person}{Xupeng Miao}, \bibinfo{person}{Yingxia Shao}, \bibinfo{person}{Zirui Liu}, \bibinfo{person}{Tong Yang}, {and} \bibinfo{person}{Bin Cui}.} \bibinfo{year}{2023}\natexlab{b}.
\newblock \showarticletitle{Experimental Analysis of Large-scale Learnable Vector Storage Compression}.
\newblock \bibinfo{journal}{\emph{CoRR}}  \bibinfo{volume}{abs/2311.15578} (\bibinfo{year}{2023}).
\newblock


\bibitem[Zhang and Chow(2015)]%
        {DBLP:conf/sigir/ZhangC15}
\bibfield{author}{\bibinfo{person}{Jia{-}Dong Zhang} {and} \bibinfo{person}{Chi{-}Yin Chow}.} \bibinfo{year}{2015}\natexlab{}.
\newblock \showarticletitle{GeoSoCa: Exploiting Geographical, Social and Categorical Correlations for Point-of-Interest Recommendations}. In \bibinfo{booktitle}{\emph{Proceedings of the 38th International {ACM} {SIGIR} Conference on Research and Development in Information Retrieval (SIGIR)}}.
\newblock


\bibitem[Zhang et~al\mbox{.}(2021)]%
        {zhang2021cocosketch}
\bibfield{author}{\bibinfo{person}{Yinda Zhang}, \bibinfo{person}{Zaoxing Liu}, \bibinfo{person}{Ruixin Wang}, \bibinfo{person}{Tong Yang}, \bibinfo{person}{Jizhou Li}, \bibinfo{person}{Ruijie Miao}, \bibinfo{person}{Peng Liu}, \bibinfo{person}{Ruwen Zhang}, {and} \bibinfo{person}{Junchen Jiang}.} \bibinfo{year}{2021}\natexlab{}.
\newblock \showarticletitle{CocoSketch: high-performance sketch-based measurement over arbitrary partial key query}. In \bibinfo{booktitle}{\emph{Proceedings of the 2021 ACM SIGCOMM Conference}}.
\newblock


\bibitem[Zhao et~al\mbox{.}(2020)]%
        {DBLP:conf/mlsys/ZhaoXJQDS020}
\bibfield{author}{\bibinfo{person}{Weijie Zhao}, \bibinfo{person}{Deping Xie}, \bibinfo{person}{Ronglai Jia}, \bibinfo{person}{Yulei Qian}, \bibinfo{person}{Ruiquan Ding}, \bibinfo{person}{Mingming Sun}, {and} \bibinfo{person}{Ping Li}.} \bibinfo{year}{2020}\natexlab{}.
\newblock \showarticletitle{Distributed Hierarchical {GPU} Parameter Server for Massive Scale Deep Learning Ads Systems}. In \bibinfo{booktitle}{\emph{Proceedings of Machine Learning and Systems (MLSys)}}.
\newblock


\bibitem[Zhao et~al\mbox{.}(2019)]%
        {DBLP:conf/cikm/ZhaoZXQJ019}
\bibfield{author}{\bibinfo{person}{Weijie Zhao}, \bibinfo{person}{Jingyuan Zhang}, \bibinfo{person}{Deping Xie}, \bibinfo{person}{Yulei Qian}, \bibinfo{person}{Ronglai Jia}, {and} \bibinfo{person}{Ping Li}.} \bibinfo{year}{2019}\natexlab{}.
\newblock \showarticletitle{AIBox: {CTR} Prediction Model Training on a Single Node}. In \bibinfo{booktitle}{\emph{Proceedings of the 28th {ACM} International Conference on Information \& Knowledge Management (CIKM)}}.
\newblock


\bibitem[Zhao et~al\mbox{.}(2021)]%
        {DBLP:conf/www/ZhaoLLTGSWGL21}
\bibfield{author}{\bibinfo{person}{Xiangyu Zhao}, \bibinfo{person}{Haochen Liu}, \bibinfo{person}{Hui Liu}, \bibinfo{person}{Jiliang Tang}, \bibinfo{person}{Weiwei Guo}, \bibinfo{person}{Jun Shi}, \bibinfo{person}{Sida Wang}, \bibinfo{person}{Huiji Gao}, {and} \bibinfo{person}{Bo Long}.} \bibinfo{year}{2021}\natexlab{}.
\newblock \showarticletitle{AutoDim: Field-aware Embedding Dimension Searchin Recommender Systems}. In \bibinfo{booktitle}{\emph{Proceedings of the Web Conference (WWW)}}.
\newblock


\bibitem[Zhao et~al\mbox{.}(2022)]%
        {DBLP:journals/pvldb/ZhaoCSM22}
\bibfield{author}{\bibinfo{person}{Yue Zhao}, \bibinfo{person}{Gao Cong}, \bibinfo{person}{Jiachen Shi}, {and} \bibinfo{person}{Chunyan Miao}.} \bibinfo{year}{2022}\natexlab{}.
\newblock \showarticletitle{QueryFormer: {A} Tree Transformer Model for Query Plan Representation}.
\newblock \bibinfo{journal}{\emph{Proceedings of the VLDB Endowment}} \bibinfo{volume}{15}, \bibinfo{number}{8} (\bibinfo{year}{2022}), \bibinfo{pages}{1658--1670}.
\newblock


\bibitem[Zhou et~al\mbox{.}(2018)]%
        {DBLP:conf/kdd/ZhouZSFZMYJLG18}
\bibfield{author}{\bibinfo{person}{Guorui Zhou}, \bibinfo{person}{Xiaoqiang Zhu}, \bibinfo{person}{Chengru Song}, \bibinfo{person}{Ying Fan}, \bibinfo{person}{Han Zhu}, \bibinfo{person}{Xiao Ma}, \bibinfo{person}{Yanghui Yan}, \bibinfo{person}{Junqi Jin}, \bibinfo{person}{Han Li}, {and} \bibinfo{person}{Kun Gai}.} \bibinfo{year}{2018}\natexlab{}.
\newblock \showarticletitle{Deep Interest Network for Click-Through Rate Prediction}. In \bibinfo{booktitle}{\emph{Proceedings of the 24th {ACM} {SIGKDD} Conference on Knowledge Discovery {\&} Data Mining (KDD)}}.
\newblock


\bibitem[Zhu et~al\mbox{.}(2021)]%
        {DBLP:conf/cikm/ZhuLYZH21}
\bibfield{author}{\bibinfo{person}{Jieming Zhu}, \bibinfo{person}{Jinyang Liu}, \bibinfo{person}{Shuai Yang}, \bibinfo{person}{Qi Zhang}, {and} \bibinfo{person}{Xiuqiang He}.} \bibinfo{year}{2021}\natexlab{}.
\newblock \showarticletitle{Open Benchmarking for Click-Through Rate Prediction}. In \bibinfo{booktitle}{\emph{Proceedings of the 30th {ACM} International Conference on Information \& Knowledge Management (CIKM)}}.
\newblock


\end{thebibliography}

\end{document}